\documentclass[10pt]{article}
\usepackage{slca_style}
\pdfmapline{+optimistic < assets/Optimistic.ttf <T1-WGL4.enc}
\DeclareFontFamily{T1}{optimistic}{}
\DeclareFontShape{T1}{optimistic}{m}{n}{<-> s * [0.88] assets/optimistic}{}
\DeclareFontShape{T1}{optimistic}{b}{n}{<-> s * [0.88] assets/optimistic}{}

\DisableLigatures{family=optimistic}

\theoremstyle{plain}
\newtheorem{theorem}{Theorem}[section]
\newtheorem{proposition}[theorem]{Proposition}

\theoremstyle{definition}

\newtheorem{assumption}[theorem]{Assumption}
\theoremstyle{remark}

\lstdefinestyle{promptstyle}{basicstyle=\ttfamily\small,backgroundcolor=\color{backcolour},breaklines=true,breakatwhitespace=true,numbers=none,numbersep=5pt,numberstyle=\tiny\color{codegray},frame=tb,rulecolor=\color{black},keepspaces=true,columns=fullflexible,showstringspaces=false,moredelim=[s][\bfseries]{**}{**},moredelim=[l][\bfseries]{\#\#}}
\definecolor{codegray}{rgb}{0.5,0.5,0.5}
\definecolor{backcolour}{rgb}{0.98,0.98,0.98}

\begin{document}
\sloppy
\makearxivtitle
\begingroup
\renewcommand{\thefootnote}{}
\footnotetext{\fontsize{8}{9.5}\selectfont\textsuperscript{*}Equal contribution. Work done during Yan Zhan and Shaobo Liu's internships at Tencent.\enspace\textsuperscript{+}Correspondence to: Weizhou Pan (\texttt{josephpan@tencent.com}).}
\endgroup

\section{Introduction}
\label{sec:intro}

The evolution of Large Language Models (LLMs) from passive chatbots to active agents hinges on their ability to plan and execute actions via external tools \citep{qin2024toolllm, schick2023toolformer, yao2023react, patil2024gorilla}. Although Supervised Fine-Tuning (SFT) establishes a strong initialization, it depends on imitating fixed patterns and is fragile against the large, changing namespaces of ecosystems such as the Model Context Protocol (MCP) \citep{hou2025mcp}. To transcend mimicry, post-training via on-policy Reinforcement Learning (RL) is essential \citep{ouyang2022instructgpt,schulman2017ppo,yu2025dapo}. Tool-calling resembles reasoning tasks in that trajectories can contain intermediate work before a final answer, but it adds externally verifiable boundaries: structured API calls, environment-injected observations, and schema constraints. These boundaries naturally decompose a rollout into $y = \big[ \toolsub{y} \oplus \summarysub{y} \big]$, where $\toolsub{y}$ encodes the operational trajectory (reasoning traces interleaved with structured tool calls), and $\summarysub{y}$ constitutes the user-facing natural language response.

Applying standard RL algorithms to this two-segment structure reveals a fundamental mismatch. Dominant approaches such as Group Relative Policy Optimization (GRPO) \citep{guo2025deepseekr1} broadcast a unified trajectory-level advantage to all tokens; even Process Reward Models (PRMs) aggregate signals ($R_{\text{total}} = \toolsub{R} + \summarysub{R}$) before advantage estimation. We argue that this \textbf{Global Signal Conflation} is a structural failure mode: summary-reward variation can enter the tool-token advantage, producing \textbf{Cross-Segment Credit Misattribution}. In sign-conflict regimes, a failed tool call can be reinforced when followed by a correct summary, while a correct tool trajectory can be penalized when followed by an incorrect summary. Existing mitigations do not close this pathway: temporal methods (VinePPO~\citep{kazemnejad2025vineppo}, GiGPO~\citep{feng2025gigpo}, SPO~\citep{guo2025spo}) address step-wise credit while retaining $R_{\text{total}}$; ToolPO~\citep{deepagent2025} adds local tool rewards but still lets summary-dependent noise reach tool tokens; and RLTR~\citep{li2025rltr} separates planner and summarizer, abandoning a unified backbone.

We verify this failure through diagnostic cases: standard GRPO assigns a single aggregated advantage to both tool and summary tokens, so a correct summary can mask an inefficient or wrong tool trajectory, and an incorrect summary can suppress otherwise correct tool use. Gradient diagnostics show instability spikes while tool/summary gradient directions remain near-orthogonal, consistent with advantage contamination as a source of the observed instability (App.~\ref{app:gradient_analysis}). Two questions therefore arise: \emph{(1)} can cross-segment advantage contamination be structurally eliminated within a single unified policy, and \emph{(2)} does eliminating it translate into measurable gains beyond what additive augmentation delivers?

\begin{figure}[H]
    \vspace{-0.4em}
    \centering
    \includegraphics[width=\linewidth]{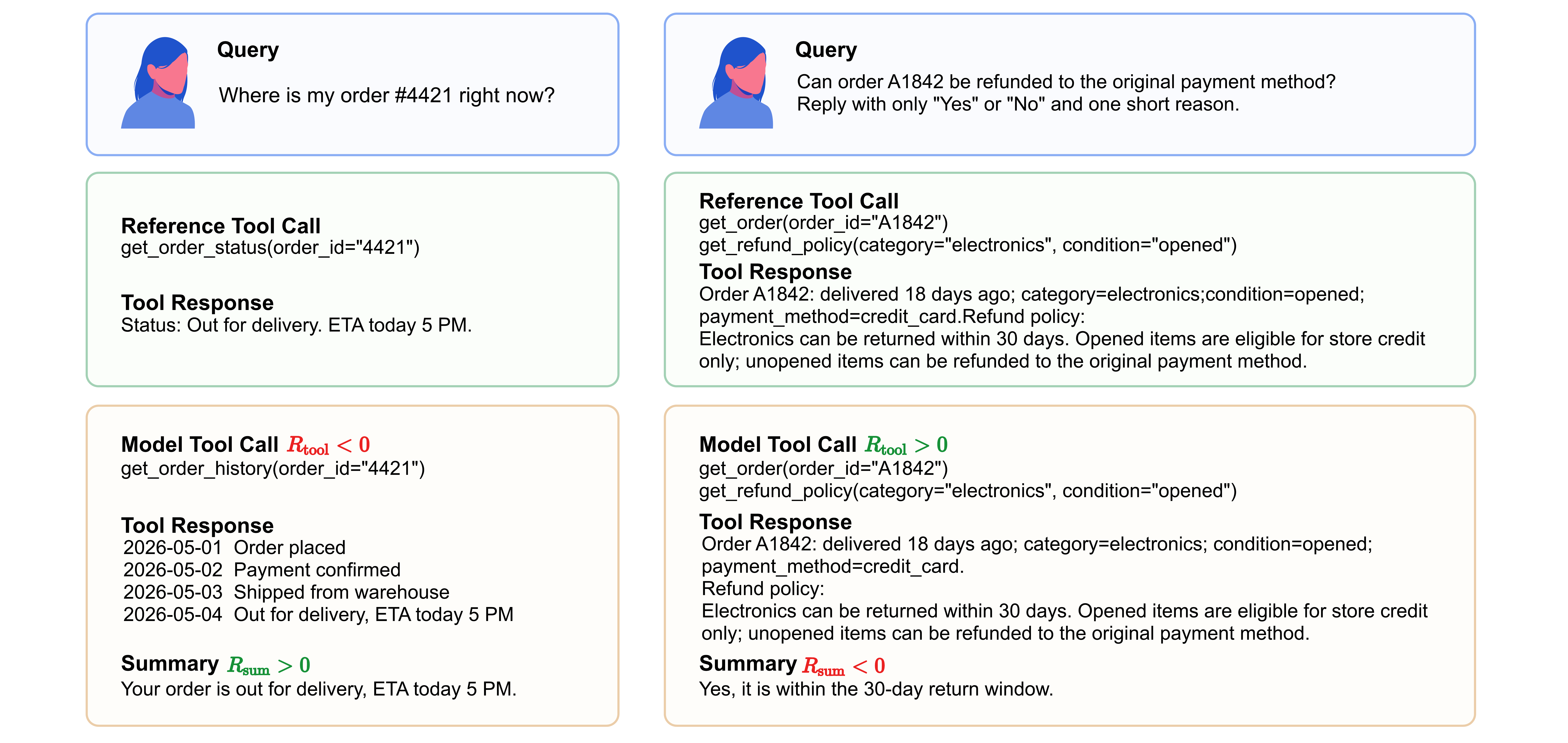}
    \caption*{\footnotesize \textbf{Diagnostic cases.} Unified reward aggregation can misassign credit across tool and summary segments. Left: an unnecessary tool call is paired with a correct delivery summary. Right: correct tool evidence is paired with an incorrect refund summary. Under standard GRPO, the mixed advantage can reward bad tool use or penalize good tool use.}
    \vspace{-1.0em}
\end{figure}

To resolve this, we propose \textbf{\slcaname} (\Cref{fig:framework}). Its core, Segment-Locked Credit Assignment (SLCA), routes tool-side advantages ($\hat{A}_{\toolsegment}$) exclusively to $\toolsub{y}$ tokens and summary-side advantages ($\hat{A}_{\summarysegment}$) exclusively to $\summarysub{y}$ tokens, blocking the defined summary-to-tool advantage path \emph{before} the backward pass, at zero additional rollout cost and within a single unified policy (\S\ref{subsec:slca}). This routing is a deliberate bias--variance trade-off: SLCA does not claim that tool calls are causally irrelevant to final answers; rather, it assumes that a dense execution reward is the lower-variance and sufficiently informative signal for updating tool-decision tokens, while summary rewards train the articulation segment. Crucially, SLCA operates on a \emph{structural} axis (execution vs.\ articulation) that is orthogonal to the \emph{temporal} axis addressed by VinePPO / SPO / GiGPO: it therefore \emph{composes with} rather than replaces these methods, and can be combined with temporal credit-assignment within each segment. A Schema-Guided LLM Simulator (SGLS, \S\ref{subsec:sgls}) and Hierarchical Rewards (HierR, \S\ref{subsec:hierr}) supply the infrastructure that makes this routing practical at scale.

Across three backbones (Qwen2.5-3B/7B-Instruct, Qwen3-8B-Base) and three benchmarks, \slcaname has higher reported means than matched GRPO on the main comparisons. On the 7B backbone, the matched mean gaps are +2.53\,pp on in-domain Toucan, +1.36\,pp on BFCL, and +9.15\,pp on $\tau^2$-Bench. The corresponding Toucan gaps are +2.35\,pp (3B) and +2.05\,pp (8B); complete multi-run results and method-specific baseline protocols are reported in the experiments and appendix. Our contributions are threefold:
\begin{itemize}[leftmargin=*,itemsep=0pt,topsep=2pt]
    \item We identify \textbf{Cross-Segment Credit Misattribution} as a structural failure mode of on-policy RL for tool-calling agents, arising from advantage contamination rather than gradient-direction conflict.
    \item We propose \textbf{\slcaname}, an intra-trajectory estimator that separately normalizes and routes segment advantages without additional rollouts, complemented by the SGLS simulator and HierR rewards.
    \item We evaluate \slcaname across three scales and three benchmarks, with ablations (w/o SLCA / SGLS / HierR) and a reward-protocol sensitivity experiment.
\end{itemize}

\section{Related Work}
\label{sec:related}

\subsection{Tool-Calling Post-Training}
Tool-calling agents are commonly initialized with SFT on annotated tool-use trajectories \citep{schick2023toolformer,qin2024toolllm}, often followed by preference- or verification-driven optimization \citep{ouyang2022instructgpt,christiano2017preferences,stiennon2020summarize,rafailov2023dpo}. However, SFT can be brittle under rule variants and OOD shifts due to memorization and exposure bias \citep{chu2025sftmemorizes,bengio2015scheduled,ross2011dagger,wei2025webagent,lightman2024letsverify,guo2025deepseekr1}. As tool namespaces scale, executable benchmarks such as ToolBench, API-Bank, and BFCL make post-training robustness increasingly central \citep{qin2024toolllm,li2023apibank,pmlr-v267-patil25a}. We target this post-training setting, where tool trajectories and final summaries form heterogeneous segments but standard end-to-end objectives still broadcast a single advantage to all tokens.
Scalable tool-agent training also relies on simulated or emulated tool environments, since live API interaction can be costly, unstable, or unavailable at RL scale \citep{guo-etal-2024-stabletoolbench,ruan2024toolemu,deepagent2025}.
Our SGLS follows this line but uses schemas as a control plane so that the simulator supplies segment-specific feedback for credit-assignment analysis.

\subsection{Credit Assignment for Agentic RL}
On-policy RL for tool use inherits credit-assignment issues from sparse trajectory-level optimization \citep{williams1992reinforce,schulman2015trpo,schulman2017ppo,guo2025deepseekr1}. Process supervision, PRMs, and reward-centric tool RL provide denser feedback \citep{lightman2024letsverify,uesato2022solving,qian2025toolrl,lin2025awpo}, but dense rewards do not by themselves prevent aggregation before advantage estimation. Existing credit-assignment methods mainly operate along a \emph{temporal} axis \citep{kazemnejad2025vineppo,feng2025gigpo,guo2025spo} or use token-level refinements, while ToolPO~\citep{deepagent2025} adds local tool rewards and RLTR~\citep{li2025rltr} separates planner and summarizer into a pipeline. SLCA instead decouples advantage estimation along a \emph{structural} axis within the same sampled rollout group: it routes independently normalized tool and summary advantages to their corresponding token segments without intermediate-state rollouts. This structural decomposition targets a different axis from temporal credit assignment; we do not evaluate a combined method here.

\section{Methodology}
\label{sec:method}

\paragraph{Core Design}
\slcaname turns the diagnosis above into three design requirements: expose structural token segments, obtain segment-specific feedback, and block cross-segment advantages before optimization. As summarized in \Cref{fig:framework}, the framework implements these requirements through mask-based segmentation, HierR segment returns, SLCA routing, SGLS rollouts, and a unified PPO-style objective, in contrast to post-hoc gradient-projection methods \citep{yu2020pcgrad}; a schematic comparison is provided in App.~\ref{fig:credit_misattribution}.

\begin{figure}[t]
    \centering
    \includegraphics[width=\linewidth]{figs/framework_arch.pdf}
    \caption{Overview of the \slcaname Framework.
(Top) Infrastructure: SGLS enables scalable exploration via schema-constrained simulation, generating interleaved trajectories.
(Middle) Signal: HierR decouples feedback into dense execution rewards ($\toolsup{R}$) and terminal outcome rewards ($\summarysup{R}$).
(Bottom) Optimization: SLCA decouples segment-wise advantages. By normalizing and routing advantages independently ($\toolsup{\hat{A}}$ vs. $\summarysup{\hat{A}}$), it blocks the defined summary-to-tool support path within each policy update.}
    \label{fig:framework}
\end{figure}

\subsection{Problem Formulation \& Segment Decomposition}
\label{subsec:formulation}

\paragraph{Heterogeneous Rollouts}
To lock credit to the right semantic target, the structural boundary must first be made explicit. Given an instruction $x \sim \mathcal{D}$, the policy $\pi_\theta$ generates a trajectory $y = (y_1, \dots, y_T)$ that decomposes as $y = [\toolsub{y} \oplus \summarysub{y}]$, where $\toolsub{y}$ comprises reasoning traces and structured tool calls, and $\summarysub{y}$ is the final user-facing response.

\paragraph{Gradient Masking}
To align optimization with this structure, a binary mask $m_{i,t} \in \{0, 1\}$ identifies \emph{learnable policy tokens} ($m_{i,t}=1$) versus \emph{frozen environment contexts} ($m_{i,t}=0$).
This mask restricts gradient propagation to agent actions and provides the structural signal for our automatic segment decomposition.

\paragraph{Automatic Segment Decomposition}
This mask partitions learnable tokens into disjoint semantic sets without training auxiliary segmenters.
For a rollout $i$, the \textbf{Summary Segment} $\textcolor{ArxivGreen}{\mathcal{T}_{i,\mathrm{sum}}}$ is the \emph{final contiguous run} of learnable tokens.
The \textbf{Tool Segment} $\textcolor{ArxivBlue}{\mathcal{T}_{i,\mathrm{tool}}}$ is the union of \emph{all} preceding learnable runs, capturing the entire reasoning-action chain including any intermediate thinking blocks between tool calls.
This deterministic mapping ensures that every gradient-bearing token is uniquely assigned to a semantic role, as long as the trajectory culminates in a single final-answer block (see App.~\ref{app:segmentation} for edge cases). \textbf{Scope.} SLCA targets the \emph{structural} axis (execution vs.\ articulation) rather than the \emph{temporal} axis within the tool segment (e.g., Action~1 vs.\ Action~2 in a ReAct chain). Thus, SLCA is not a variant of VinePPO/GiGPO/SPO, which address temporal credit assignment under different grouping assumptions (\S\ref{sec:related}).

\subsection{HierR and Segment Returns}
\label{subsec:hierr}

To make segment-locked routing meaningful, the reward must also separate execution quality from response articulation. We decompose the return into two semantically distinct components: $R(y) = \toolsub{R}(\toolsub{y}) + \summarysub{R}(\summarysub{y}\mid \toolsub{y},o)$.
\textbf{HierR} instantiates these signals as:
\begin{enumerate*}[label=(\roman*)] 
    \item Process Reward: A dense, structure-aware score for tool correctness and efficiency.
    \item Summary Preference: A terminal score evaluating the final answer quality.
\end{enumerate*}

For the $i$-th sampled rollout $y_i$, the segment rewards are computed by applying these functions directly:
\begin{equation}
\label{eq:reward_instantiation}
    \toolsup{R_i} = S_{\text{process}}(\textcolor{ArxivBlue}{y_{i,\mathrm{tool}}}), \quad
    \summarysup{R_i} = S_{\summaryword}(\textcolor{ArxivGreen}{y_{i,\mathrm{sum}}} \mid \textcolor{ArxivBlue}{y_{i,\mathrm{tool}}}, o_i).
\end{equation}
Full reward definitions and matching logic are detailed in App.~\ref{app:hierr_reward_details}.
Together, SGLS and HierR build on prior tool-use simulation and process-supervision work, but serve a specific role here: providing scalable, segment-specific feedback for testing structural credit routing.

\subsection{\textsc{\slcaname}}
\label{subsec:slca}

Given segment boundaries and segment returns, SLCA directly blocks the contamination channel identified in the introduction. Standard GRPO broadcasts a single trajectory-level advantage derived from $R_{\text{total}}$ to all learnable tokens, coupling tool-token gradients with summary rewards.

\paragraph{Decoupled Advantage Estimation}
For each rollout $i$, the two segment rewards $(\toolsup{R_i}, \summarysup{R_i})$ from Eq.~\ref{eq:reward_instantiation} are normalized \emph{separately} within the group:
\begin{equation}
\label{eq:slca_normalization}
    \toolsup{\hat{A}_i} = \frac{\toolsup{R_i} - \toolsup{\mu_g}}{\toolsup{\sigma_g} + \epsilon_{\text{norm}}}, \quad
    \summarysup{\hat{A}_i}  = \frac{\summarysup{R_i} - \summarysup{\mu_g}}{\summarysup{\sigma_g} + \epsilon_{\text{norm}}},
\end{equation}
where $\mu_g$ and $\sigma_g$ denote the mean and standard deviation within the group, and $\epsilon_{\text{norm}}$ is a numerical floor. The token-level advantage $\hat{A}^{\text{SLCA}}_{i,t}$ is then constructed by routing these signals exclusively to their semantic counterparts:
\begin{equation}
\hat A^{\text{SLCA}}_{i,t}=
\begin{cases}
\toolsub{\lambda}\,\toolsup{\hat A_i}, & t\in \textcolor{ArxivBlue}{\mathcal{T}_{i,\mathrm{tool}}},\\
\summarysub{\lambda}\,\summarysup{\hat A_i}, & t\in \textcolor{ArxivGreen}{\mathcal{T}_{i,\mathrm{sum}}},\\
0, & m_{i,t}=0,
\end{cases}
\label{eq:slca_routing}
\end{equation}
This construction changes only the scalar advantage attached to each token; it does not split the model or require additional rollouts.
Tool tokens are updated only by execution quality, while summary tokens are updated only by response quality.
Thus, summary rewards can still train final-answer articulation, but no longer supply gradients to tool-decision tokens.

\paragraph{Robust Optimization Mechanisms}
For sparse-tool stability, the implementation uses presence filtering, post-normalization weighting, and omission-penalty routing; details are in App.~\ref{app:slca_details}.

\paragraph{Theoretical Properties}
The key per-update guarantee of SLCA is advantage isolation: the tool-token gradient component is functionally independent of the summary reward,
\begin{equation}
\frac{\partial \toolsub{g}^{\text{SLCA}}}{\partial \summarysup{R}}=\mathbf{0}.
\end{equation}
Beyond this isolation property, local score-function analysis shows that SLCA removes a positive summary-noise variance term from tool updates and preserves the tool-execution direction in sign-conflict regimes. These are conditional, per-update guarantees rather than claims of global bias-free optimization; full assumptions and proofs are in App.~\ref{app:slca_theory}.


\subsection{Scalable Exploration via SGLS}
\label{subsec:sgls}

To test structural credit routing at scale, SGLS supplies schema-consistent observations without relying on live APIs. It combines deterministic schema validation with frozen cross-family LLM mocking, so invalid calls receive immediate structured feedback and valid calls receive plausible tool responses; deployment details and alignment examples are in App.~\ref{app:sgls_impl}. The cross-family design mitigates implicit leakage while preserving the semantic topology needed for transfer.

\subsection{Objective and Training Algorithm}
\label{subsec:training}

Finally, a single unified policy is trained by replacing GRPO's unified scalar advantage with the routed segment advantage. The objective is PPO-clip \citep{schulman2017ppo} with segment-locked advantages. Let $\rho_{i,t}(\theta) = \frac{\pi_\theta(y_{i,t} \mid h_{i,t})}{\pi_{\text{old}}(y_{i,t} \mid h_{i,t})}$; a per-token KL penalty $\beta \mathbb{D}_{\text{KL}}(\pi_\theta \| \pi_{\text{ref}})$ is enforced but omitted below for brevity:
\begin{equation}
\label{eq:slca_grpo_obj}
    \mathcal{J}(\theta) =
    \mathbb{E}_{x\sim\mathcal{D},\,\{y_i\}_{i=1}^{G}\sim\pi_{\theta_{\mathrm{old}}}(\cdot\mid x)}
    \Bigg[
    \frac{1}{N}\sum_{i,t} m_{i,t}\cdot
    \min\Big(
    \rho_{i,t}\hat{A}^{\text{SLCA}}_{i,t},
    \text{clip}(\rho_{i,t},1-\epsilon_{\text{clip}},1+\epsilon_{\text{clip}})\hat{A}^{\text{SLCA}}_{i,t}
    \Big)
    \Bigg],
\end{equation}
where $G$ is the rollout group size and $N = \sum_{i,t} m_{i,t}$ is the total valid token count.
The complete training procedure is summarized in Algorithm~\ref{alg:slca_grpo}.

\section{Experiments}
\label{sec:exp}

Our experiments proceed in three parts: we examine the diagnosed failure under unified or additive credit signals, test transfer across schemas and long-horizon interaction, and ablate SLCA, SGLS, and HierR.

\subsection{Experimental Setup}
\label{sec:exp_setup}

\paragraph{Datasets and Benchmarks}
\textsc{\slcaname} is evaluated across three dimensions: in-domain mastery, cross-distribution generalization, and collaborative robustness.
\begin{enumerate*}[label=(\roman*)] 
    \item \textbf{Toucan-1.5M (In-Domain):} The Toucan dataset \citep{xu2025toucan} is used for both SFT initialization and RL post-training. A multi-stage filtering pipeline (App.~\ref{app:data_pipeline}) yields 42,423 SFT samples and 31,818 RL training samples (spanning single-turn and decomposed multi-turn trajectories). Evaluation uses a held-out Toucan-Test set of 4,000 samples.
    \item \textbf{BFCL V3 (Generalization):} Generalization is evaluated on the Berkeley Function-Calling Leaderboard \citep{pmlr-v267-patil25a}. \footnote{
    For the main BFCL V3 comparison, relevance detection is excluded and evaluation is restricted to \textbf{single-turn} samples. This isolates \textit{atomic} generalization to unseen schemas (e.g., AST); the appendix additionally reports BFCL Multi-Turn accuracy, while $\tau^2$-Bench covers longer-horizon collaboration.
}
    \item \textbf{$\tau^2$-Bench (Robustness):} Robustness is measured in dynamic dual-control environments (Airline, Retail, Telecom) where agents must collaborate with users to manipulate state, reporting Pass$^1$ \citep{barres2025tau2bench}.
\end{enumerate*}
Full protocols are in App.~\ref{app:eval_protocols}.

\paragraph{Baselines and Training}
The methods are implemented on Qwen2.5-7B-Instruct \citep{qwen2_5_techreport} (default), Qwen2.5-3B-Instruct, and Qwen3-8B-Base \citep{qwen3_techreport} to verify scalability, comparing six paradigms (details in App.~\ref{app:training_details}):
\begin{enumerate*}[label=(\roman*)] 
    \item Original Backbones: The original model weights evaluated directly without any exposure to the Toucan training set.
    
    \item SFT (Toucan): A behavioral cloning baseline fine-tuned on the union of the standard SFT partition (42k) and the raw source trajectories of the RL partition (31k).

    \item SFT+GRPO (Baseline): The standard GRPO algorithm using the same HierR rewards and SGLS environment as \slcaname, with a unified advantage computed from $R_{\text{total}}=\toolsub{R}+\summarysub{R}$.
    
    \item ToolPO \citep{deepagent2025}: An additive credit-assignment baseline evaluated with its global outcome and local tool rewards. Tool tokens receive their sum, while summary tokens inherit the global term. The outcome-reward protocol and its sensitivity diagnostic are documented in App.~\ref{app:baseline_comparison}.

    \item RLTR (adapted) \citep{li2025rltr}: A planner-summarizer baseline with a dedicated planner-only SFT initialization and a frozen summarizer, following a two-stage planning pipeline.

    \item SFT+\textsc{\slcaname} (Ours): Our proposed framework employing Segment-Locked Credit Assignment.

\end{enumerate*}
The controlled isolation comparison is between SFT+GRPO and SFT+\textsc{\slcaname}: these two rows share the same data partitions, SFT initialization, SGLS endpoint, mocker configuration, decoding settings, evaluation protocol, rollout group size $G=16$, one RL epoch, and HierR rewards, differing in unified versus segment-locked advantage computation. The matched ablations use the same per-backbone run set; w/o HierR removes HierR, and w/o SGLS removes schema conditioning and deterministic validation. ToolPO uses the LLM-judge outcome reward in the main tables, while RLTR retains its method-specific protocol; the reference-rule ToolPO sensitivity diagnostic is reported in App.~\ref{app:baseline_comparison}.

\subsection{Main Results}
\label{sec:main_results}

\paragraph{Toucan-Test}
The in-domain evaluation first asks whether structural routing improves execution under matched data, simulator, and reward conditions. Table~\ref{tab:toucan_main} reports the matched GRPO comparison together with the method-specific baseline rows (full results in App.~\ref{app:full_results}). On Qwen2.5-7B-Instruct, \slcaname has a 79.13\% three-run mean for strict Success@0.9, 2.53 pp above matched GRPO. The corresponding mean gaps are +2.35 pp on Qwen2.5-3B-Instruct and +2.05 pp on Qwen3-8B-Base.

\begin{table}[H]
\centering
\footnotesize
\renewcommand{\arraystretch}{1.0} 
\setlength{\tabcolsep}{2pt} 

\caption{
Toucan-Test Results. Metrics: 
Name F1: Multiset F1 score measuring precision and recall of predicted tool-name occurrences against gold names;
ArgMatch: Arithmetic mean of argument key and value matching scores;
Process: The dense tool-segment reward ($S_{\text{process}}$) defined in Eq.~\ref{eq:hierr_process} (App.~\ref{app:hierr_reward_details});
Success: Strict binary indicator ($\mathbb{I}[S_{\text{process}} \geq 0.9]$) obtained by thresholding the process score.
All trained entries report mean$\pm$std over three runs; Original rows are point evaluations. The matched rows share the per-backbone SFT initialization, data split, SGLS endpoint, mocker configuration, decoding settings, evaluation protocol, $G=16$, one RL epoch, and run set. GRPO, \slcaname, w/o SLCA, and w/o SGLS also share the HierR definition and subweights; w/o HierR removes HierR. ToolPO and RLTR retain their method-specific protocols. The ToolPO rows use the LLM-judge outcome reward; App.~\ref{app:baseline_comparison} reports how this baseline responds to a rule-based outcome reward under the same estimator.
}

\label{tab:toucan_main}

\begin{tabularx}{\columnwidth}{l*{4}{>{\centering\arraybackslash}X}}
\toprule
\rowcolor{ArxivTableHead}
\textbf{Method} & \textbf{Name F1} & \textbf{ArgMatch} & \textbf{Process} & \textbf{Success} \\
\midrule

\rowcolor{scaleThreeTint}\textit{\textbf{Qwen2.5-3B-Instruct}} & & & & \\
\midrule
Original       & 0.5207 & 0.5073 & 0.5174  & 0.3816 \\
SFT                 & 0.7685{\scriptsize$\pm$.0069} & 0.7132{\scriptsize$\pm$.0239} & 0.7401{\scriptsize$\pm$.0050}  & 0.6500{\scriptsize$\pm$.0089} \\
SFT+GRPO            & 0.8910{\scriptsize$\pm$.0087} & 0.8072{\scriptsize$\pm$.0154} & 0.8577{\scriptsize$\pm$.0057}  & 0.7412{\scriptsize$\pm$.0115} \\
RLTR           & 0.7598{\scriptsize$\pm$.0149} & 0.6353{\scriptsize$\pm$.0229} & 0.6924{\scriptsize$\pm$.0162}  & 0.6627{\scriptsize$\pm$.0151} \\
ToolPO         & 0.8050{\scriptsize$\pm$.0142} & 0.7769{\scriptsize$\pm$.0160} & 0.8368{\scriptsize$\pm$.0141}  & 0.5128{\scriptsize$\pm$.0145} \\
\rowcolor{oursTint}\textbf{\textsc{Ours}} & \textbf{0.9006}{\scriptsize$\pm$.0087} & \textbf{0.8092}{\scriptsize$\pm$.0112} & \textbf{0.8625}{\scriptsize$\pm$.0049} & \textbf{0.7647}{\scriptsize$\pm$.0093} \\

\midrule

\rowcolor{scaleSevenTint}\textit{\textbf{Qwen2.5-7B-Instruct}} &  &  &  &  \\
\midrule
Original                & 0.8008 & 0.7388 & 0.7687  & 0.6224 \\
SFT                     & 0.8390{\scriptsize$\pm$.0075} & 0.7531{\scriptsize$\pm$.0224} & 0.8094{\scriptsize$\pm$.0082}  & 0.7214{\scriptsize$\pm$.0087} \\
SFT+GRPO                & 0.8971{\scriptsize$\pm$.0093} & 0.8350{\scriptsize$\pm$.0148} & 0.8667{\scriptsize$\pm$.0061}  & 0.7660{\scriptsize$\pm$.0127} \\
RLTR           & 0.7673{\scriptsize$\pm$.0132} & 0.6346{\scriptsize$\pm$.0205} & 0.7022{\scriptsize$\pm$.0143}  & 0.6545{\scriptsize$\pm$.0168} \\
ToolPO         & 0.3042{\scriptsize$\pm$.0168} & 0.3552{\scriptsize$\pm$.0192} & 0.3408{\scriptsize$\pm$.0157}  & 0.2000{\scriptsize$\pm$.0136} \\
\rowcolor{oursTint}\textbf{\textsc{Ours}} & \textbf{0.9164}{\scriptsize$\pm$.0075} & \textbf{0.8353}{\scriptsize$\pm$.0132} & \textbf{0.8766}{\scriptsize$\pm$.0045} & \textbf{0.7913}{\scriptsize$\pm$.0105} \\

\midrule

\rowcolor{scaleEightTint}\textit{\textbf{Qwen3-8B-Base}} &  &  &  &  \\
\midrule
Original                & 0.0887 & 0.1158 & 0.1089  & 0.0471 \\
SFT                     & 0.8058{\scriptsize$\pm$.0082} & 0.7437{\scriptsize$\pm$.0260} & 0.7754{\scriptsize$\pm$.0060}  & 0.6925{\scriptsize$\pm$.0098} \\
SFT+GRPO                & 0.9110{\scriptsize$\pm$.0092} & 0.8334{\scriptsize$\pm$.0135} & 0.8767{\scriptsize$\pm$.0050}   & 0.7697{\scriptsize$\pm$.0141} \\
RLTR           & 0.8780{\scriptsize$\pm$.0125} & 0.7468{\scriptsize$\pm$.0219} & 0.7941{\scriptsize$\pm$.0147}  & 0.7454{\scriptsize$\pm$.0167} \\
ToolPO         & 0.6854{\scriptsize$\pm$.0168} & 0.8180{\scriptsize$\pm$.0177} & 0.7510{\scriptsize$\pm$.0165}  & 0.3199{\scriptsize$\pm$.0156} \\
\rowcolor{oursTint}\textbf{\textsc{Ours}} & \textbf{0.9177}{\scriptsize$\pm$.0075} & 0.8320{\scriptsize$\pm$.0140} & \textbf{0.8786}{\scriptsize$\pm$.0046} & \textbf{0.7902}{\scriptsize$\pm$.0105} \\

\bottomrule
\end{tabularx}
\end{table}

\paragraph{Comparison with Prior Credit-Assignment Baselines}
The next comparison evaluates prior paradigms that either augment or separate the tool-planning signal rather than route segment advantages. RLTR (pipeline separation) and ToolPO (additive augmentation) provide complementary reference points for the segment-locked estimator. Their training traces show capacity-dependent format and invocation patterns (App.~\ref{app:baseline_dynamics}); the main-table ToolPO rows use the LLM-judge outcome reward described in App.~\ref{app:baseline_comparison}.

\paragraph{Training Dynamics and Efficiency}
Training traces reveal whether the final gains arise from stable correction of the misattribution channel rather than late-stage overfitting.
\Cref{fig:train_dynamics} contrasts training behavior from two views.
In the cross-scale training curves (\Cref{fig:baseline_dynamics_7b}), the plotted traces are representative single runs. \textsc{\slcaname} rises across the three backbones. The 7B ToolPO curve is a separate diagnostic: it declines after step~40, with the structural divergence appearing after step~80 (App.~\ref{app:baseline_dynamics}); RLTR changes more slowly under its coarse completeness reward.
Across the representative traces (3B/7B/8B; \Cref{fig:pareto}), \textsc{\slcaname} ends with \emph{fewer} average tool turns and \emph{higher} success rates than standard GRPO. The gap in trajectory length is narrow, and the ordering is consistent across the plotted traces. Standard GRPO stabilizes at longer trajectories without a corresponding success gain, compatible with ``performative execution'': padding trajectories to exploit summary rewards rather than improving tool-call precision.
These dynamics provide evidence for the failure mode diagnosed in the introduction: additive or unified signals can improve proxy rewards while degrading executable tool behavior.

\begin{figure}[!tbp]
    \centering
    \begin{subfigure}[t]{0.34\linewidth}
        \centering
        \includegraphics[height=3.8cm]{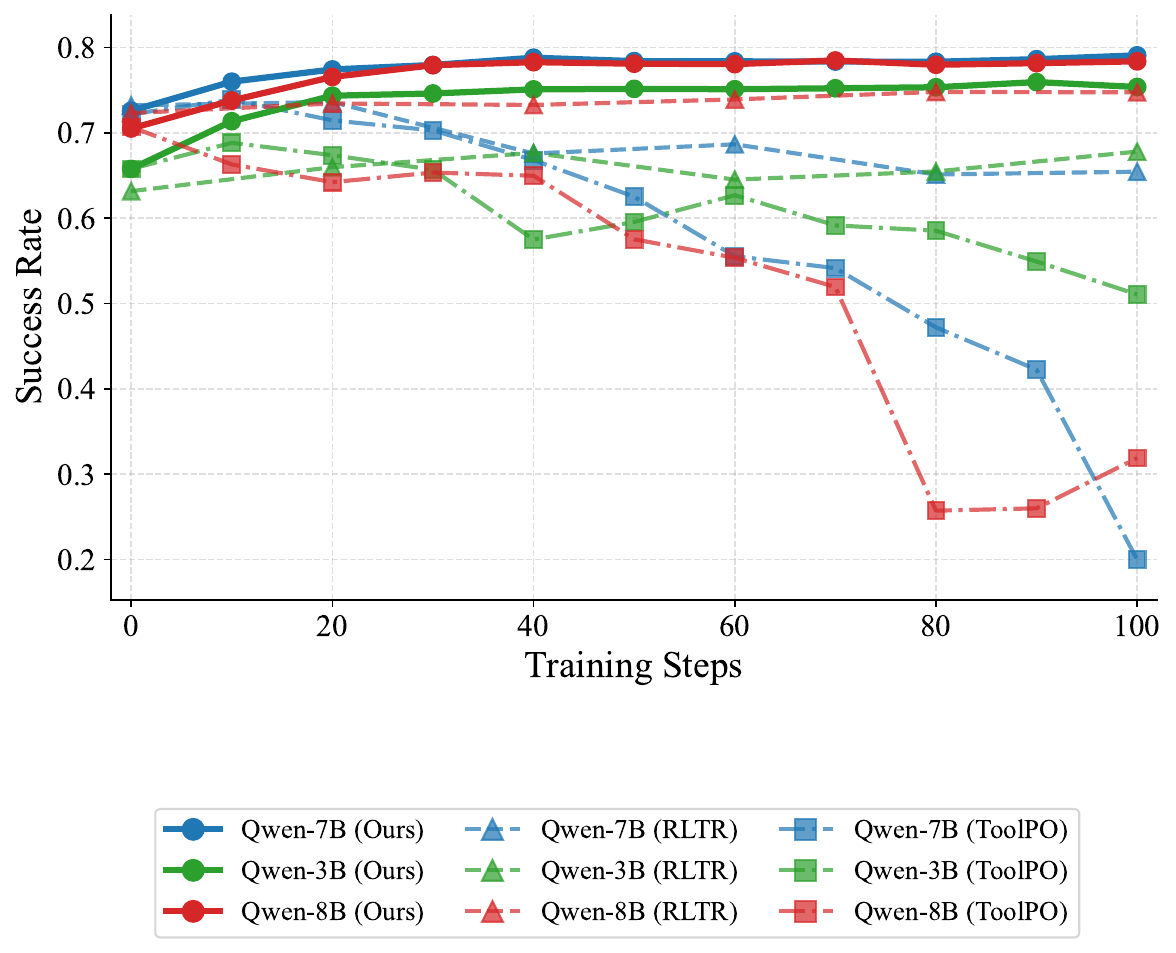}
        \caption{\textbf{Training curves across 3B/7B/8B.}
        Representative single-run traces.}
        \label{fig:baseline_dynamics_7b}
    \end{subfigure}\hfill
    \begin{subfigure}[t]{0.64\linewidth}
        \centering
        \includegraphics[height=3.8cm]{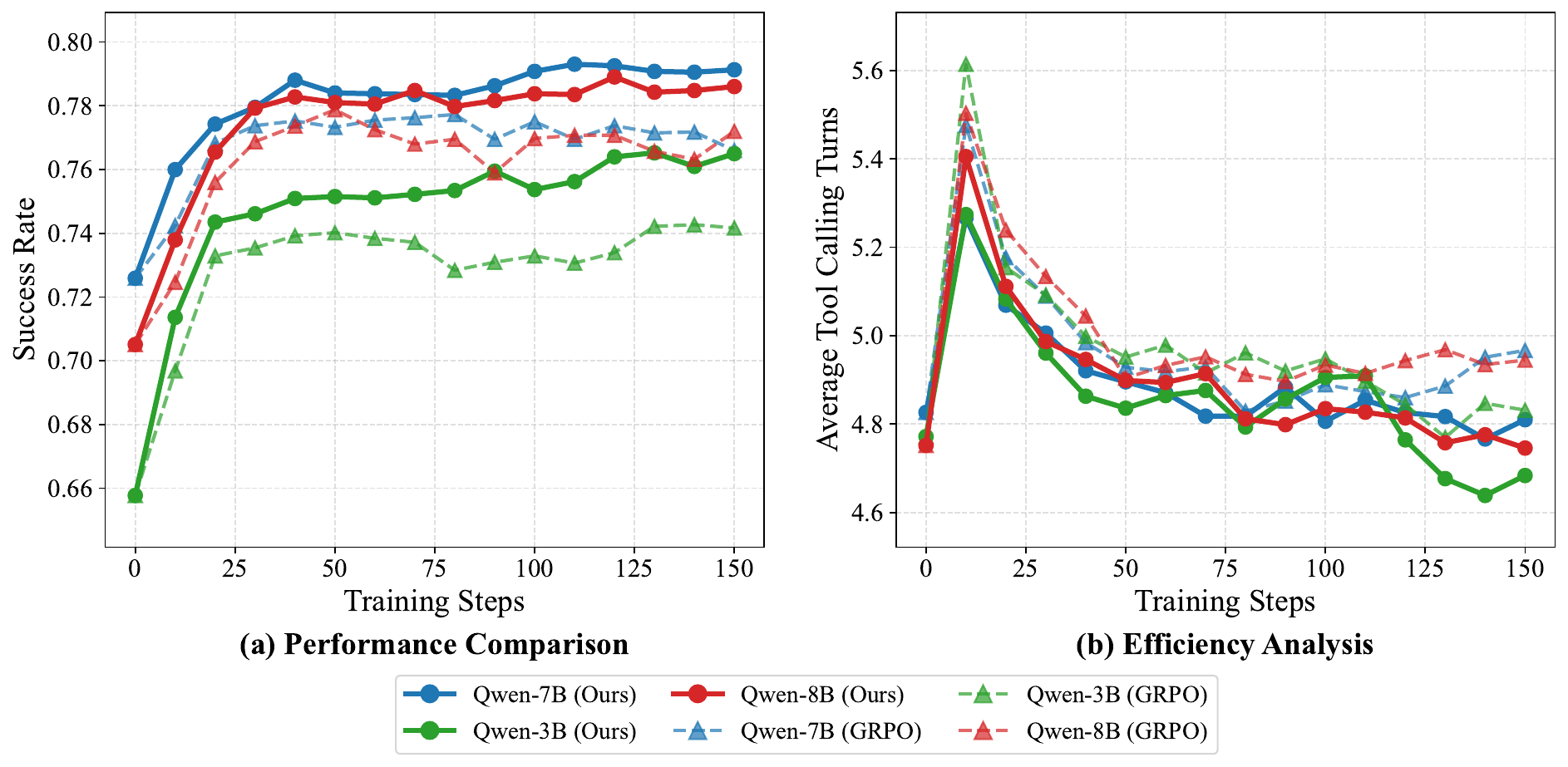}
        \caption{\textbf{Across 3B/7B/8B: success vs.\ tool turns.}
        The plotted traces show \slcaname (solid) ending with shorter trajectories than GRPO (dashed).}
        \label{fig:pareto}
    \end{subfigure}
    \caption{\textbf{Training Dynamics.}
    \textbf{(a) Baseline comparison across 3B/7B/8B} for SLCA / ToolPO / RLTR (the ToolPO diagnostic is described in App.~\ref{app:baseline_dynamics}), and
    \textbf{(b) cross-scale dynamics} of success rate and average tool turns over training steps.}
    \label{fig:train_dynamics}
\end{figure}


\paragraph{Generalization on BFCL and $\tau^2$-Bench}
The OOD evaluation then tests whether correcting structural credit assignment transfers beyond the in-domain simulator setting.
\Cref{fig:ood_scales} compares \slcaname with matched GRPO and the method-specific baselines on two OOD benchmarks across scales (protocols in App.~\ref{app:eval_protocols}).
On BFCL (\Cref{fig:bfcl_scales}), SLCA reaches \textbf{70.31$\pm$0.50\%} Overall Accuracy on Qwen3-8B-Base, compared with 66.96$\pm$0.32\% for matched GRPO and 68.50$\pm$0.35\% for SFT. The 7B BFCL gap over standard GRPO is +1.36\,pp; the corresponding gaps are +0.40\,pp on 3B and +3.35\,pp on 8B. On $\tau^2$-Bench, the matched gaps are +1.09\,pp (3B), +9.15\,pp (7B), and +10.03\,pp (8B), all based on three-run means. ToolPO outcome-reward details and the LLM-judge protocol used in the scale rows are documented in App.~\ref{app:baseline_comparison}. Multi-Turn BFCL results and per-domain $\tau^2$-Bench breakdowns are in App.~\ref{app:bfcl_multiturn} and App.~\ref{app:tau2_details}.
Robustness under larger candidate tool spaces ($K$ up to 150) with hard-negative distractors is evaluated in App.~\ref{app:full_toolspace}.

\begin{figure}[!tbp]
    \centering
    \begin{subfigure}[t]{0.49\linewidth}
        \centering
        \includegraphics[width=\linewidth]{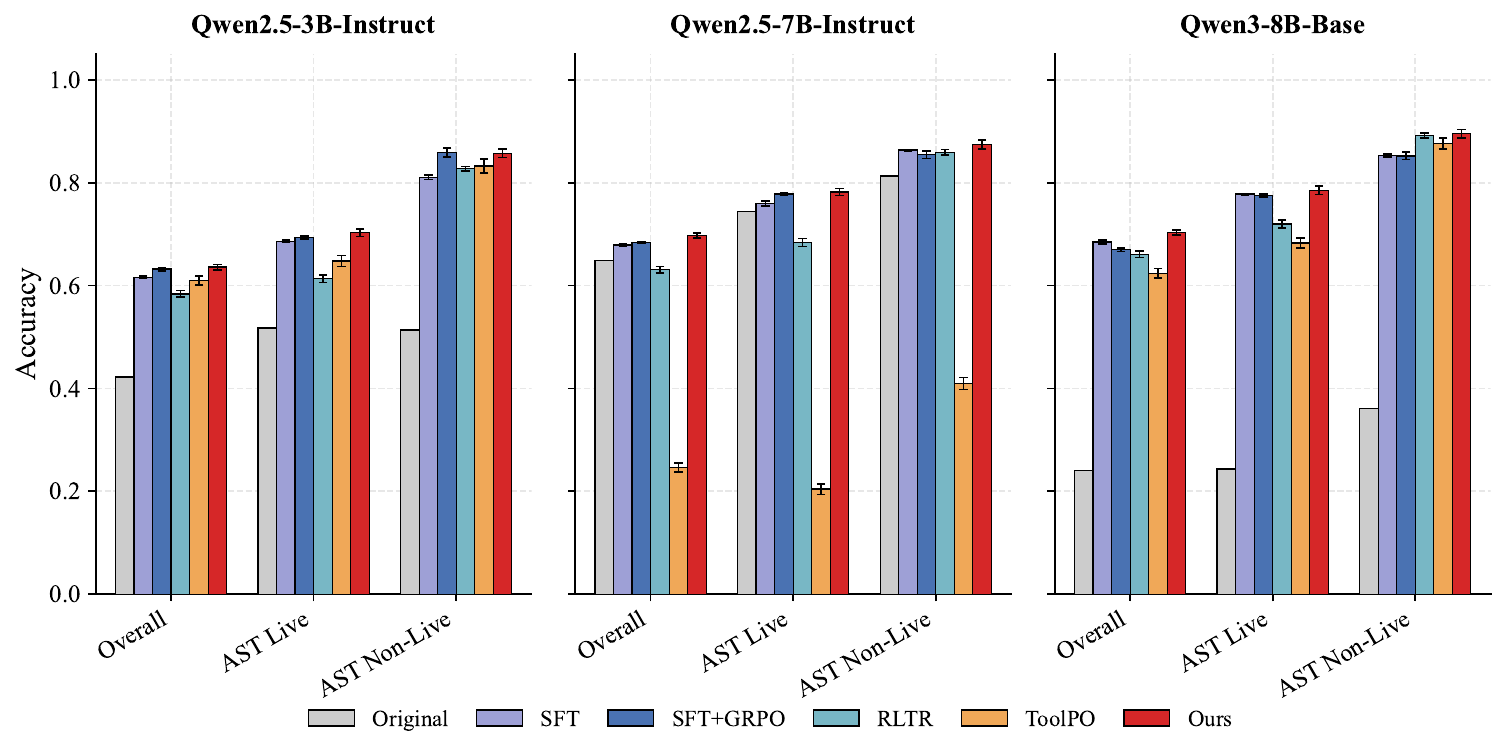}
        \caption{\textbf{BFCL V3 (Single-Turn; mean$\pm$std).}
        Trained bars show three-run means with one-standard-deviation error bars; Original is a point evaluation.}
        \label{fig:bfcl_scales}
    \end{subfigure}\hfill
    \begin{subfigure}[t]{0.49\linewidth}
        \centering
        \includegraphics[width=\linewidth]{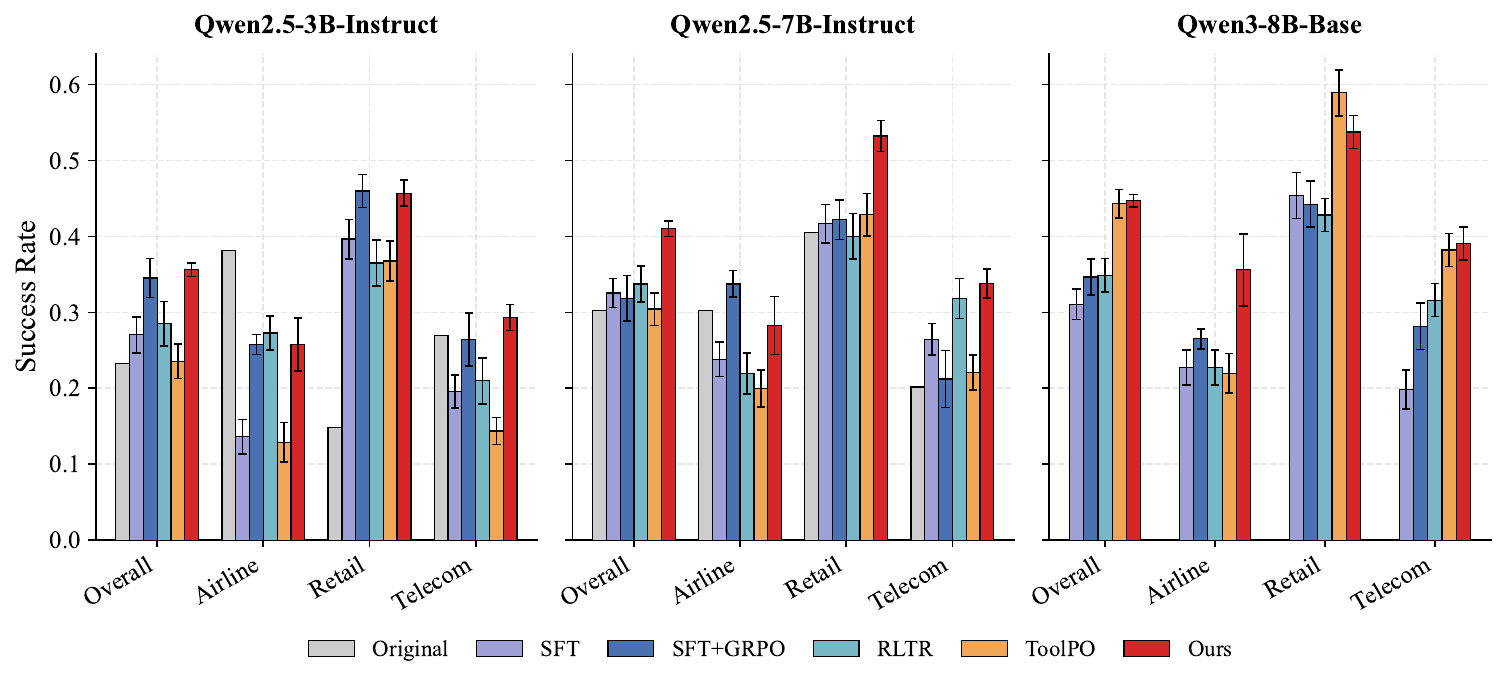}
        \caption{\textbf{$\tau^2$-Bench (Overall Pass$^1$; mean$\pm$std).}
        Trained rows are three-run means with one-standard-deviation error bars; Original is a point evaluation. The 8B Original point is omitted because its four domain scores are zero.}
        \label{fig:tau2_scales}
    \end{subfigure}
    \caption{\textbf{OOD Generalization across Scales.}
    \textbf{(a) Atomic generalization on BFCL V3} and
    \textbf{(b) Collaboration robustness on $\tau^2$-Bench}
    for the three backbones (3B, 7B, 8B), comparing \slcaname against
    standard GRPO, RLTR, and ToolPO. Per-domain $\tau^2$-Bench details are in
    App.~\ref{app:tau2_details}.}
    \label{fig:ood_scales}
\end{figure}

\subsection{Ablation Studies}
\label{sec:ablation}
Finally, the ablations target the method's three design requirements: routing, scalable simulation, and dense segment feedback. Qwen2.5-7B-Instruct is used as the primary testbed (Table~\ref{tab:ablation}), with detailed ablation across scales in App.~\ref{app:full_ablation}.

\begin{enumerate*}[label=(\roman*)] 
    \item \textbf{w/o SLCA (Unified Advantage)}: Reverting to the standard unified advantage ($\toolsub{R} + \summarysub{R}$) changes both normalization and token support; it lowers Toucan Success by 2.53 pp on 7B under the matched setting. The support-control experiment in App.~\ref{app:support_control_7b} separately measures the contribution of removing the summary-to-tool support path.
    \item \textbf{w/o SGLS:} Removing deterministic schema validation and schema-conditioned tool-response prompting lowers the reported OOD transfer metrics (Table~\ref{tab:ablation}); its Multi-Turn effect is reported in App.~\ref{app:bfcl_multiturn}. Schema-constrained simulation supplies the controlled exploration interface.
    \item \textbf{w/o HierR:} Relying solely on sparse summary rewards leads to slow convergence and poor argument alignment on complex queries.
\end{enumerate*}
The matched ``w/o SLCA'' condition compares the unified and segment-locked estimators under fixed reward definitions, so it measures the combined estimator change rather than either operation in isolation; the unified reward ratio sweep in App.~\ref{app:unified_ratio_sweep} examines this distinction directly. Together, these ablations test the three design requirements in Section~\ref{sec:method}: routing addresses the cross-segment path, SGLS stabilizes schema-grounded exploration, and HierR supplies dense execution feedback. The summary advantage ablation and the 7B support control are reported in Apps.~\ref{app:summary_advantage_ablation} and~\ref{app:support_control_7b}.

\begin{table}[ht]
\centering
\small
\caption{Ablation Studies on Qwen2.5-7B-Instruct. Entries report mean$\pm$std over three independent runs. Full results are provided in App.~\ref{app:full_ablation}. Process is an evaluation metric computed post hoc for all settings, including w/o HierR.}
\label{tab:ablation}
\setlength{\tabcolsep}{3.5pt} 
\begin{tabular}{lcccc}
\toprule
\rowcolor{ArxivTableHead}
 & \multicolumn{2}{c}{\textbf{Toucan-Test}} & \textbf{BFCL} & \textbf{$\tau^2$-Bench} \\
\cmidrule{2-5}
\textbf{Setting} & \textbf{Process} & \textbf{Success} & \textbf{Acc} & \textbf{Pass$^1$} \\
\midrule
\rowcolor{oursTint}\textbf{\textsc{\slcaname}} & 0.8766{\scriptsize$\pm$.0045}  & \textbf{0.7913}{\scriptsize$\pm$.0105} & \textbf{0.6977}{\scriptsize$\pm$.0047} & \textbf{0.4102}{\scriptsize$\pm$.0101} \\
w/o SLCA & 0.8667{\scriptsize$\pm$.0061} & 0.7660{\scriptsize$\pm$.0127} & 0.6841{\scriptsize$\pm$.0011} & 0.3187{\scriptsize$\pm$.0297} \\
w/o SGLS & 0.8787{\scriptsize$\pm$.0051} & 0.7802{\scriptsize$\pm$.0112} & 0.6875{\scriptsize$\pm$.0048} & 0.3619{\scriptsize$\pm$.0195} \\
w/o HierR & 0.8521{\scriptsize$\pm$.0083} & 0.7344{\scriptsize$\pm$.0121} & 0.6907{\scriptsize$\pm$.0046} & 0.3450{\scriptsize$\pm$.0144} \\
\bottomrule
\end{tabular}
\end{table}

\paragraph{Reward and response-mocker checks}
We also test SLCA with a binary $R_{\text{succ}}$ judge without per-example gold-call matching and with a GPT-OSS-120B response mocker; the matched results are reported in Apps.~\ref{app:exec_reward} and \ref{app:sgls_impl}.


\section{Conclusion}
\label{sec:conclusion}

We identified Global Signal Conflation as a structural pathology in tool-calling RL and proposed \slcaname, which decouples advantage estimation at the segment level. Across three backbones, \slcaname improves success by \textbf{+2.53\,pp} on 7B in-domain Toucan, \textbf{+1.36\,pp} on BFCL, and \textbf{+9.15\,pp} on $\tau^2$-Bench. The corresponding matched gaps are +2.35/+0.40/+1.09\,pp on 3B and +2.05/+3.35/+10.03\,pp on 8B for Toucan, BFCL, and $\tau^2$-Bench, respectively. The Toucan breakdown shows higher mean Process and Success for \slcaname across the reported backbones, while higher Summary scores for matched GRPO on 3B and 8B do not translate into higher task Success. Controlled comparisons and ablations support the role of segment-level routing, and the reward-swap check shows that the improvement is not tied to a single reward formulation. SLCA is complementary to temporal credit-assignment methods.

\section{Limitations}
\label{sec:limitations}

\textbf{Segment decomposition.} SLCA assumes a clear boundary between tool calls and free-form text. Settings without one, such as inline code generation, would require learned segmentation.
\textbf{Reward and bias.} $S_{\text{process}}$ relies on matching to gold trajectories and may under-reward alternative valid tool sequences when multiple API plans solve the same request. The execution-based judge check in App.~\ref{app:exec_reward} provides a complementary reward formulation. Toucan strict Success@0.9 thresholds this score instead of using an independent task outcome label. $\summarysup{\hat{A}}$ inherits a grouping bias in summary tokens (App.~\ref{par:asum_bias}); explicit subgrouping would shrink group size exponentially.
\textbf{Intra-tool temporal credit.} SLCA routes one scalar $\toolsup{\hat{A}}$ to every tool token, so it does not distinguish a correct Action~1 from a failing Action~2. This temporal issue is separate from cross-segment credit. A combination with VinePPO/GiGPO/SPO remains untested.
\textbf{Scale and simulation.} Experiments use models up to 8B with SGLS. Larger-scale training and direct real-API comparisons remain future work.

%

{\small\bibliographystyle{acl_natbib}\bibliography{example_paper}}

\begin{thebibliography}{42}
\providecommand{\natexlab}[1]{#1}

\bibitem[{Barres et~al.(2025)Barres, Dong, Ray, Si, and
  Narasimhan}]{barres2025tau2bench}
Victor Barres, Honghua Dong, Soham Ray, Xujie Si, and Karthik Narasimhan. 2025.
\newblock $\tau^2$-{Bench}: Evaluating conversational agents in a dual-control
  environment.
\newblock \emph{arXiv preprint arXiv:2506.07982}.

\bibitem[{Bengio et~al.(2015)Bengio, Vinyals, Jaitly, and
  Shazeer}]{bengio2015scheduled}
Samy Bengio, Oriol Vinyals, Navdeep Jaitly, and Noam Shazeer. 2015.
\newblock Scheduled sampling for sequence prediction with recurrent neural
  networks.
\newblock In \emph{Advances in Neural Information Processing Systems},
  volume~28, pages 1171--1179. Curran Associates, Inc.

\bibitem[{Christiano et~al.(2017)Christiano, Leike, Brown, Martic, Legg, and
  Amodei}]{christiano2017preferences}
Paul~F Christiano, Jan Leike, Tom Brown, Miljan Martic, Shane Legg, and Dario
  Amodei. 2017.
\newblock Deep reinforcement learning from human preferences.
\newblock In \emph{Advances in Neural Information Processing Systems},
  volume~30, pages 4299--4307. Curran Associates, Inc.

\bibitem[{Chu et~al.(2025)Chu, Zhai, Yang, Tong, Xie, Schuurmans, Le, Levine,
  and Ma}]{chu2025sftmemorizes}
Tianzhe Chu, Yuexiang Zhai, Jihan Yang, Shengbang Tong, Saining Xie, Dale
  Schuurmans, Quoc~V Le, Sergey Levine, and Yi~Ma. 2025.
\newblock {SFT} memorizes, {RL} generalizes: A comparative study of foundation
  model post-training.
\newblock In \emph{Proceedings of the 42nd International Conference on Machine
  Learning}, volume 267, pages 10818--10838.

\bibitem[{Feng et~al.(2025)Feng, Xue, Liu, and An}]{feng2025gigpo}
Lang Feng, Zhenghai Xue, Tingcong Liu, and Bo~An. 2025.
\newblock \href {https://doi.org/10.52202/085713-1544} {Group-in-group policy
  optimization for {LLM} agent training}.
\newblock In \emph{The Thirty-ninth Annual Conference on Neural Information
  Processing Systems}, volume~38, pages 51797--51830.

\bibitem[{Guo et~al.(2025{\natexlab{a}})Guo, Yang, Zhang, Song, Wang, Zhu
  et~al.}]{guo2025deepseekr1}
Daya Guo, Dejian Yang, Haowei Zhang, Junxiao Song, Peiyi Wang, Qihao Zhu,
  et~al. 2025{\natexlab{a}}.
\newblock \href {https://doi.org/10.1038/s41586-025-09422-z} {{DeepSeek-R1}
  incentivizes reasoning in {LLMs} through reinforcement learning}.
\newblock \emph{Nature}, 645:633--638.

\bibitem[{Guo et~al.(2025{\natexlab{b}})Guo, Xu, Liu, Ye, and Qiu}]{guo2025spo}
Yiran Guo, Lijie Xu, Jie Liu, Dan Ye, and Shuang Qiu. 2025{\natexlab{b}}.
\newblock \href {https://doi.org/10.52202/085713-3815} {Segment policy
  optimization: Effective segment-level credit assignment in {RL} for large
  language models}.
\newblock In \emph{The Thirty-ninth Annual Conference on Neural Information
  Processing Systems}, volume~38, pages 126866--126898.

\bibitem[{Guo et~al.(2024)Guo, Cheng, Wang, Liang, Qin, Li, Liu, Sun, and
  Liu}]{guo-etal-2024-stabletoolbench}
Zhicheng Guo, Sijie Cheng, Hao Wang, Shihao Liang, Yujia Qin, Peng Li, Zhiyuan
  Liu, Maosong Sun, and Yang Liu. 2024.
\newblock \href {https://doi.org/10.18653/v1/2024.findings-acl.664}
  {{S}table{T}ool{B}ench: Towards stable large-scale benchmarking on tool
  learning of large language models}.
\newblock In \emph{Findings of the Association for Computational Linguistics:
  ACL 2024}, pages 11143--11156. Association for Computational Linguistics.

\bibitem[{Hou et~al.(2026)Hou, Zhao, Wang, and Wang}]{hou2025mcp}
Xinyi Hou, Yanjie Zhao, Shenao Wang, and Haoyu Wang. 2026.
\newblock \href {https://doi.org/10.1145/3796519} {{Model Context Protocol}
  ({MCP}): Landscape, security threats, and future research directions}.
\newblock \emph{ACM Transactions on Software Engineering and Methodology}.

\bibitem[{Kazemnejad et~al.(2025)Kazemnejad, Aghajohari, Portelance, Sordoni,
  Reddy, Courville, and Roux}]{kazemnejad2025vineppo}
Amirhossein Kazemnejad, Milad Aghajohari, Eva Portelance, Alessandro Sordoni,
  Siva Reddy, Aaron Courville, and Nicolas~Le Roux. 2025.
\newblock Vine{PPO}: Refining credit assignment in {RL} training of {LLM}s.
\newblock In \emph{Proceedings of the 42nd International Conference on Machine
  Learning}, volume 267, pages 29557--29590.

\bibitem[{Kingma and Ba(2015)}]{kingma2015adam}
Diederik~P. Kingma and Jimmy Ba. 2015.
\newblock Adam: {A} method for stochastic optimization.
\newblock In \emph{3rd International Conference on Learning Representations,
  {ICLR} 2015, San Diego, CA, USA, May 7-9, 2015, Conference Track
  Proceedings}.

\bibitem[{Li et~al.(2026{\natexlab{a}})Li, Sun, Huang, Zhong, Jiang, Han,
  Zhang, Wang, and Liu}]{li2025preferenceleakage}
Dawei Li, Renliang Sun, Yue Huang, Ming Zhong, Bohan Jiang, Jiawei Han,
  Xiangliang Zhang, Wei Wang, and Huan Liu. 2026{\natexlab{a}}.
\newblock \href {https://openreview.net/forum?id=grIvSXVJ65} {Preference
  leakage: A contamination problem in {LLM}-as-a-judge}.
\newblock In \emph{International Conference on Learning Representations}.

\bibitem[{Li et~al.(2023)Li, Zhao, Yu, Song, Li, Yu, Li, Huang, and
  Li}]{li2023apibank}
Minghao Li, Yingxiu Zhao, Bowen Yu, Feifan Song, Hangyu Li, Haiyang Yu, Zhoujun
  Li, Fei Huang, and Yongbin Li. 2023.
\newblock {API-Bank}: A comprehensive benchmark for tool-augmented {LLM}s.
\newblock In \emph{Proceedings of the 2023 Conference on Empirical Methods in
  Natural Language Processing}, pages 3102--3116. Association for Computational
  Linguistics.

\bibitem[{Li et~al.(2026{\natexlab{b}})Li, Jiao, Jin, Dong, Jin, Wang, Wang,
  Zhu, Wen, Lu, and Dou}]{deepagent2025}
Xiaoxi Li, Wenxiang Jiao, Jiarui Jin, Guanting Dong, Jiajie Jin, Yinuo Wang,
  Hao Wang, Yutao Zhu, Ji-Rong Wen, Yuan Lu, and Zhicheng Dou.
  2026{\natexlab{b}}.
\newblock \href {https://doi.org/10.1145/3774904.3792460} {{DeepAgent}: A
  general reasoning agent with scalable toolsets}.
\newblock In \emph{Proceedings of the ACM Web Conference 2026}, pages
  2219--2230. Association for Computing Machinery.

\bibitem[{Li et~al.(2025)Li, Hu, and Wang}]{li2025rltr}
Zhiwei Li, Yong Hu, and Wenqing Wang. 2025.
\newblock \href {https://doi.org/10.18653/v1/2025.emnlp-industry.116}
  {Encouraging good processes without the need for good answers: Reinforcement
  learning for {LLM} agent planning}.
\newblock In \emph{Proceedings of the 2025 Conference on Empirical Methods in
  Natural Language Processing: Industry Track}, pages 1654--1666. Association
  for Computational Linguistics.

\bibitem[{Lightman et~al.(2024)Lightman, Kosaraju, Burda, Edwards, Baker, Lee,
  Leike, Schulman, Sutskever, and Cobbe}]{lightman2024letsverify}
Hunter Lightman, Vineet Kosaraju, Yuri Burda, Harrison Edwards, Bowen Baker,
  Teddy Lee, Jan Leike, John Schulman, Ilya Sutskever, and Karl Cobbe. 2024.
\newblock Let's verify step by step.
\newblock In \emph{The Twelfth International Conference on Learning
  Representations}.

\bibitem[{Lin et~al.(2025)Lin, Wang, Yang, Chai, Cao, Yin, Lin, and
  He}]{lin2025awpo}
Zihan Lin, Xiaohan Wang, Hexiong Yang, Jiajun Chai, Jie Cao, Guojun Yin, Wei
  Lin, and Ran He. 2025.
\newblock {AWPO}: Enhancing tool-use of large language models through adaptive
  integration of reasoning rewards.
\newblock \emph{arXiv preprint arXiv:2512.19126}.

\bibitem[{Loshchilov and Hutter(2019)}]{loshchilov2019adamw}
Ilya Loshchilov and Frank Hutter. 2019.
\newblock Decoupled weight decay regularization.
\newblock In \emph{International Conference on Learning Representations}.

\bibitem[{Ouyang et~al.(2022)Ouyang, Wu, Jiang, Almeida, Wainwright, Mishkin,
  Zhang, Agarwal, Slama, Ray, Schulman, Hilton, Kelton, Miller, Simens, Askell,
  Welinder, Christiano, Leike, and Lowe}]{ouyang2022instructgpt}
Long Ouyang, Jeffrey Wu, Xu~Jiang, Diogo Almeida, Carroll Wainwright, Pamela
  Mishkin, Chong Zhang, Sandhini Agarwal, Katarina Slama, Alex Ray, John
  Schulman, Jacob Hilton, Fraser Kelton, Luke Miller, Maddie Simens, Amanda
  Askell, Peter Welinder, Paul~F Christiano, Jan Leike, and Ryan Lowe. 2022.
\newblock Training language models to follow instructions with human feedback.
\newblock In \emph{Advances in Neural Information Processing Systems},
  volume~35, pages 27730--27744. Curran Associates, Inc.

\bibitem[{Patil et~al.(2025)Patil, Mao, Yan, Ji, Suresh, Stoica, and
  Gonzalez}]{pmlr-v267-patil25a}
Shishir~G Patil, Huanzhi Mao, Fanjia Yan, Charlie Cheng-Jie Ji, Vishnu Suresh,
  Ion Stoica, and Joseph~E. Gonzalez. 2025.
\newblock \href {https://openreview.net/forum?id=2GmDdhBdDk} {The {Berkeley}
  function calling leaderboard ({BFCL}): From tool use to agentic evaluation of
  large language models}.
\newblock In \emph{Proceedings of the 42nd International Conference on Machine
  Learning}, volume 267, pages 48371--48392.

\bibitem[{Patil et~al.(2024)Patil, Zhang, Wang, and
  Gonzalez}]{patil2024gorilla}
Shishir~G. Patil, Tianjun Zhang, Xin Wang, and Joseph~E. Gonzalez. 2024.
\newblock Gorilla: Large language model connected with massive {APIs}.
\newblock In \emph{Advances in Neural Information Processing Systems},
  volume~37, pages 126544--126565. Curran Associates, Inc.

\bibitem[{Qian et~al.(2025)Qian, Acikgoz, He, Wang, Chen, Hakkani-T{\"u}r, Tur,
  and Ji}]{qian2025toolrl}
Cheng Qian, Emre~Can Acikgoz, Qi~He, Hongru Wang, Xiusi Chen, Dilek
  Hakkani-T{\"u}r, Gokhan Tur, and Heng Ji. 2025.
\newblock \href {https://doi.org/10.52202/085713-3524} {Tool{RL}: Reward is all
  tool learning needs}.
\newblock In \emph{The Thirty-ninth Annual Conference on Neural Information
  Processing Systems}, volume~38, pages 116896--116926.

\bibitem[{Qin et~al.(2024)Qin, Liang, Ye, Zhu, Yan, Lu, Lin, Cong, Tang, Qian,
  Zhao, Hong, Tian, Xie, Zhou, Gerstein, Li, Liu, and Sun}]{qin2024toolllm}
Yujia Qin, Shihao Liang, Yining Ye, Kunlun Zhu, Lan Yan, Yaxi Lu, Yankai Lin,
  Xin Cong, Xiangru Tang, Bill Qian, Sihan Zhao, Lauren Hong, Runchu Tian,
  Ruobing Xie, Jie Zhou, Mark Gerstein, Dahai Li, Zhiyuan Liu, and Maosong Sun.
  2024.
\newblock Tool{LLM}: Facilitating large language models to master 16000+
  real-world {API}s.
\newblock In \emph{The Twelfth International Conference on Learning
  Representations}.

\bibitem[{{Qwen Team}(2025{\natexlab{a}})}]{qwen2_5_techreport}
{Qwen Team}. 2025{\natexlab{a}}.
\newblock Qwen2.5 technical report.
\newblock \emph{arXiv preprint arXiv:2412.15115}.

\bibitem[{{Qwen Team}(2025{\natexlab{b}})}]{qwen3_techreport}
{Qwen Team}. 2025{\natexlab{b}}.
\newblock Qwen3 technical report.
\newblock \emph{arXiv preprint arXiv:2505.09388}.

\bibitem[{Rafailov et~al.(2023)Rafailov, Sharma, Mitchell, Manning, Ermon, and
  Finn}]{rafailov2023dpo}
Rafael Rafailov, Archit Sharma, Eric Mitchell, Christopher~D Manning, Stefano
  Ermon, and Chelsea Finn. 2023.
\newblock Direct preference optimization: Your language model is secretly a
  reward model.
\newblock In \emph{Advances in Neural Information Processing Systems},
  volume~36, pages 53728--53741. Curran Associates, Inc.

\bibitem[{Ross et~al.(2011)Ross, Gordon, and Bagnell}]{ross2011dagger}
Stephane Ross, Geoffrey Gordon, and Drew Bagnell. 2011.
\newblock A reduction of imitation learning and structured prediction to
  no-regret online learning.
\newblock In \emph{Proceedings of the Fourteenth International Conference on
  Artificial Intelligence and Statistics}, volume~15, pages 627--635. PMLR.

\bibitem[{Ruan et~al.(2024)Ruan, Dong, Wang, Pitis, Zhou, Ba, Dubois, Maddison,
  and Hashimoto}]{ruan2024toolemu}
Yangjun Ruan, Honghua Dong, Andrew Wang, Silviu Pitis, Yongchao Zhou, Jimmy Ba,
  Yann Dubois, Chris Maddison, and Tatsunori Hashimoto. 2024.
\newblock Identifying the risks of {LM} agents with an {LM}-emulated sandbox.
\newblock In \emph{International Conference on Learning Representations},
  volume 2024, pages 27031--27098.

\bibitem[{Schick et~al.(2023)Schick, Dwivedi-Yu, Dessi, Raileanu, Lomeli,
  Hambro, Zettlemoyer, Cancedda, and Scialom}]{schick2023toolformer}
Timo Schick, Jane Dwivedi-Yu, Roberto Dessi, Roberta Raileanu, Maria Lomeli,
  Eric Hambro, Luke Zettlemoyer, Nicola Cancedda, and Thomas Scialom. 2023.
\newblock Toolformer: Language models can teach themselves to use tools.
\newblock In \emph{Advances in Neural Information Processing Systems},
  volume~36, pages 68539--68551. Curran Associates, Inc.

\bibitem[{Schulman et~al.(2015)Schulman, Levine, Abbeel, Jordan, and
  Moritz}]{schulman2015trpo}
John Schulman, Sergey Levine, Pieter Abbeel, Michael Jordan, and Philipp
  Moritz. 2015.
\newblock Trust region policy optimization.
\newblock In \emph{Proceedings of the 32nd International Conference on Machine
  Learning}, volume~37, pages 1889--1897. PMLR.

\bibitem[{Schulman et~al.(2017)Schulman, Wolski, Dhariwal, Radford, and
  Klimov}]{schulman2017ppo}
John Schulman, Filip Wolski, Prafulla Dhariwal, Alec Radford, and Oleg Klimov.
  2017.
\newblock Proximal policy optimization algorithms.
\newblock In \emph{arXiv preprint arXiv:1707.06347}.

\bibitem[{Stiennon et~al.(2020)Stiennon, Ouyang, Wu, Ziegler, Lowe, Voss,
  Radford, Amodei, and Christiano}]{stiennon2020summarize}
Nisan Stiennon, Long Ouyang, Jeffrey Wu, Daniel Ziegler, Ryan Lowe, Chelsea
  Voss, Alec Radford, Dario Amodei, and Paul~F Christiano. 2020.
\newblock Learning to summarize with human feedback.
\newblock In \emph{Advances in Neural Information Processing Systems},
  volume~33, pages 3008--3021. Curran Associates, Inc.

\bibitem[{Uesato et~al.(2022)Uesato, Kushman, Kumar, Song, Siegel, Wang,
  Creswell, Irving, and Higgins}]{uesato2022solving}
Jonathan Uesato, Nate Kushman, Ramana Kumar, Francis Song, Noah Siegel, Lisa
  Wang, Antonia Creswell, Geoffrey Irving, and Irina Higgins. 2022.
\newblock Solving math word problems with process- and outcome-based feedback.
\newblock \emph{arXiv preprint arXiv:2211.14275}.

\bibitem[{Wang et~al.(2025)Wang, Liu, Jiang, Liu, Qi, Chen, and
  He}]{wang2025grpoverif}
Xiaoxuan Wang, Bo~Liu, Song Jiang, Jingzhou Liu, Jingyuan Qi, Xia Chen, and
  Baosheng He. 2025.
\newblock From solving to verifying: A unified objective for robust reasoning
  in {LLMs}.
\newblock \emph{arXiv preprint arXiv:2511.15137}.

\bibitem[{Wataoka et~al.(2025)Wataoka, Takahashi, and
  Ri}]{wataoka2025selfpreference}
Koki Wataoka, Tsubasa Takahashi, and Ryokan Ri. 2025.
\newblock Self-preference bias in {LLM}-as-a-judge.
\newblock \emph{arXiv preprint arXiv:2410.21819}.

\bibitem[{Wei et~al.(2025{\natexlab{a}})Wei, Zeng, Li, Wang, Brown, Frunza,
  Deng, Schneider, Nevmyvaka, Zhao, Garcia, and Hong}]{wei2025mtgrpo}
Quan Wei, Siliang Zeng, Chenliang Li, Zhongruo Wang, William Brown, Oana
  Frunza, Wei Deng, Anderson Schneider, Yuriy Nevmyvaka, Yang~Katie Zhao,
  Alfredo Garcia, and Mingyi Hong. 2025{\natexlab{a}}.
\newblock Reinforcing multi-turn reasoning in {LLM} agents via fine-grained
  reward structure and credit assignment.
\newblock \emph{arXiv preprint arXiv:2505.11821}.

\bibitem[{Wei et~al.(2025{\natexlab{b}})Wei, Yao, Liu, Zhang, Lu, Qiu, Yu, Xu,
  Zhang, Yin, Yun, and Li}]{wei2025webagent}
Zhepei Wei, Wenlin Yao, Yao Liu, Weizhi Zhang, Qin Lu, Liang Qiu, Changlong Yu,
  Puyang Xu, Chao Zhang, Bing Yin, Hyokun Yun, and Lihong Li.
  2025{\natexlab{b}}.
\newblock \href {https://doi.org/10.18653/v1/2025.emnlp-main.401}
  {{W}eb{A}gent-{R1}: Training web agents via end-to-end multi-turn
  reinforcement learning}.
\newblock In \emph{Proceedings of the 2025 Conference on Empirical Methods in
  Natural Language Processing}, pages 7909--7928. Association for Computational
  Linguistics.

\bibitem[{Williams(1992)}]{williams1992reinforce}
Ronald~J. Williams. 1992.
\newblock Simple statistical gradient-following algorithms for connectionist
  reinforcement learning.
\newblock \emph{Machine Learning}, 8(3):229--256.

\bibitem[{Xu et~al.(2025)Xu, Soria, Tan, Roy, Agrawal, Poovendran, and
  Panda}]{xu2025toucan}
Zhangchen Xu, Adriana~Meza Soria, Shawn Tan, Anurag Roy, Ashish~Sunil Agrawal,
  Radha Poovendran, and Rameswar Panda. 2025.
\newblock {TOUCAN}: Synthesizing 1.5m tool-agentic data from real-world {MCP}
  environments.
\newblock \emph{arXiv preprint arXiv:2510.01179}.

\bibitem[{Yao et~al.(2023)Yao, Zhao, Yu, Du, Shafran, Narasimhan, and
  Cao}]{yao2023react}
Shunyu Yao, Jeffrey Zhao, Dian Yu, Nan Du, Izhak Shafran, Karthik~R Narasimhan,
  and Yuan Cao. 2023.
\newblock {ReAct}: Synergizing reasoning and acting in language models.
\newblock In \emph{The Eleventh International Conference on Learning
  Representations}.

\bibitem[{Yu et~al.(2025)Yu, Zhang, Zhu, Yuan, Zuo, Yue, Dai, Fan, Liu, Liu,
  Liu, Liu, Lin, Lin, Ma, Sheng, Tong, Zhang, Zhang, Zhang, Zhang, Zhu, Zhu,
  Chen, Chen, Wang, Yu, Song, Wei, Zhou, Liu, Ma, Zhang, Yan, Wu, and
  Wang}]{yu2025dapo}
Qiying Yu, Zheng Zhang, Ruofei Zhu, Yufeng Yuan, Xiaochen Zuo, Yu~Yue, Weinan
  Dai, Tiantian Fan, Gaohong Liu, Juncai Liu, LingJun Liu, Xin Liu, Haibin Lin,
  Zhiqi Lin, Bole Ma, Guangming Sheng, Yuxuan Tong, Chi Zhang, Mofan Zhang, and
  17 others. 2025.
\newblock \href {https://doi.org/10.52202/085713-3775} {{DAPO}: An open-source
  {LLM} reinforcement learning system at scale}.
\newblock In \emph{The Thirty-ninth Annual Conference on Neural Information
  Processing Systems}, volume~38, pages 125532--125554.

\bibitem[{Yu et~al.(2020)Yu, Kumar, Gupta, Levine, Hausman, and
  Finn}]{yu2020pcgrad}
Tianhe Yu, Saurabh Kumar, Abhishek Gupta, Sergey Levine, Karol Hausman, and
  Chelsea Finn. 2020.
\newblock Gradient surgery for multi-task learning.
\newblock In \emph{Advances in Neural Information Processing Systems},
  volume~33, pages 5824--5836. Curran Associates, Inc.

\end{thebibliography}
\newpage
\appendix

\section{Additional Formalism and Proofs}
\label{app:proofs}

\begin{figure}[H]
    \centering
    \includegraphics[width=0.95\linewidth,height=5.0cm,keepaspectratio]{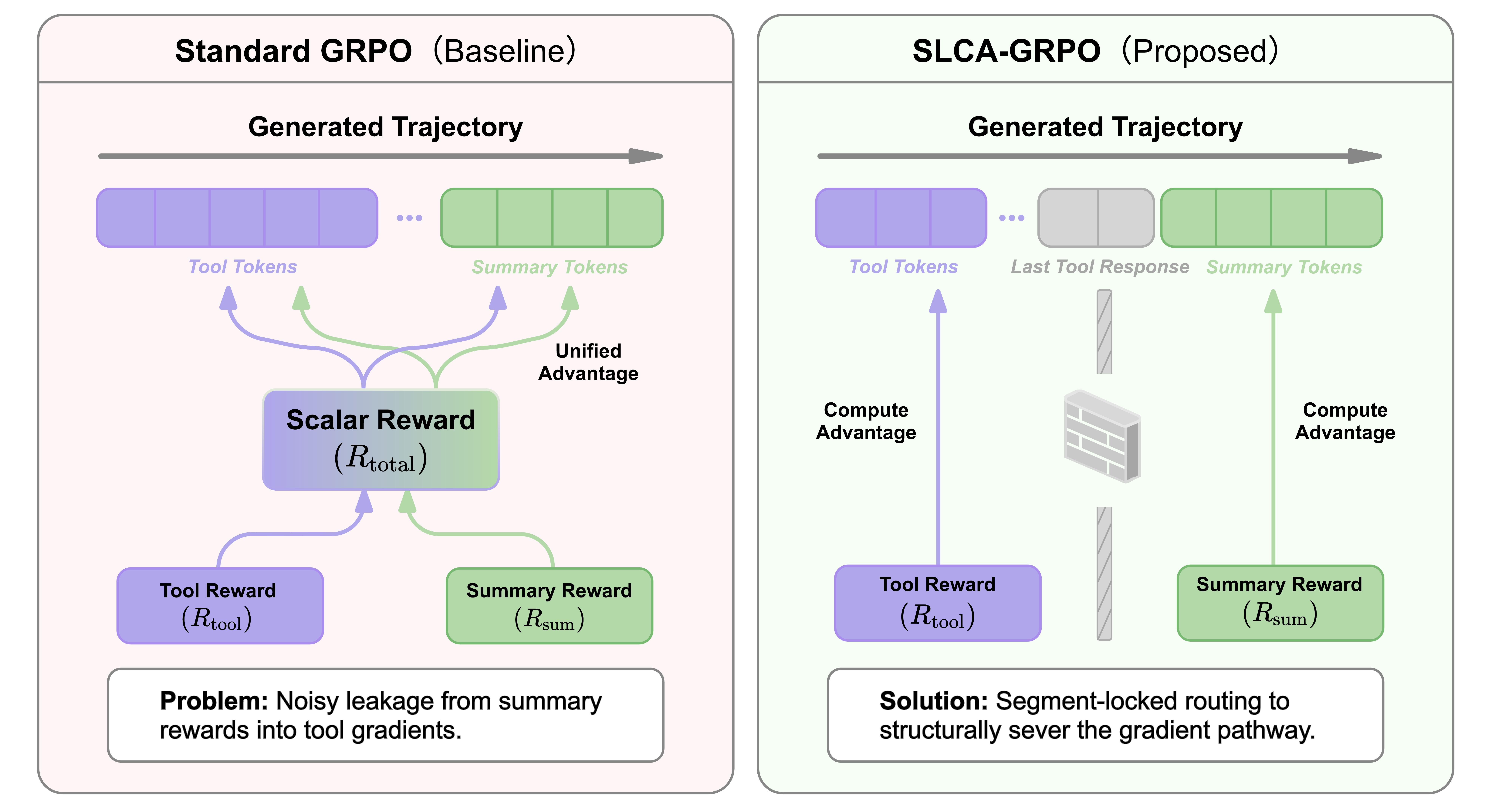}
    \caption{Mechanism of Cross-Segment Credit Misattribution and segment-locked routing. Unified advantage broadcast leaks summary rewards into tool tokens, whereas SLCA blocks this path by routing segment-wise advantages only to their corresponding tokens.}
    \label{fig:credit_misattribution}
\end{figure}

\subsection{Mask-Based Segment Decomposition}
\label{app:segmentation}

We derive segment boundaries using only the standard token-level mask $m_{i,t}\in\{0,1\}$ from the agent loop.
Here $m_{i,t}=1$ denotes a policy-generated (learnable) token and $m_{i,t}=0$ denotes an environment-injected token.
Let $T_i$ be the unpadded length of rollout $i$.

\paragraph{Runs of ones}
Consider the maximal contiguous runs where $m_{i,t}=1$.
Let $\{(s_{i,k},e_{i,k})\}_{k=1}^{K_i}$ denote the start and end indices (left-closed, right-open) of these runs so that
\begin{equation}
m_{i,t}=1 \iff t \in \bigcup_{k=1}^{K_i}[s_{i,k},e_{i,k}),
\end{equation}
with disjoint intervals ordered by time.

\paragraph{Segment definition}
We define the \textbf{Summary} segment as the last run of learnable tokens:
\begin{equation}
\textcolor{ArxivGreen}{\mathcal{T}_{i,\mathrm{sum}}} \triangleq [s_{i,K_i}, e_{i,K_i}),
\end{equation}
and the \textbf{Tool} segment as the union of all earlier learnable runs:
\begin{equation}
\textcolor{ArxivBlue}{\mathcal{T}_{i,\mathrm{tool}}} \triangleq \bigcup_{k=1}^{K_i-1}[s_{i,k}, e_{i,k}).
\end{equation}

\paragraph{Edge cases}
If $K_i=1$ (no environment injection), then $\textcolor{ArxivBlue}{\mathcal{T}_{i,\mathrm{tool}}}=\emptyset$ and the entire generation is treated as Summary.
If tool usage is required but the rollout contains no parseable tool call, we trigger the omission-penalty guard
(Appendix~\ref{app:slca_details}).

\subsection{Theoretical Analysis: Proofs of Main Properties}
\label{app:slca_theory}

This section provides the proofs and modeling assumptions behind Proposition~\ref{prop:slca_theory} (Per-Step Advantage Isolation), Theorem~\ref{thm:variance_main} (Nuisance-Variance Removal), and Corollary~\ref{thm:fidelity_main} (Directional Fidelity), which are summarized in \S\ref{subsec:slca}. The variance and direction results analyze a local score-function estimator that isolates the noise channel induced by unified reward broadcasting; exact GRPO group normalization introduces cross-sample dependencies, which we discuss explicitly below.

\paragraph{Setup and Definitions}

Let a trajectory $y$ be partitioned into a tool segment $\toolsub{y}$ and a summary segment $\summarysub{y}$.
The total reward is $R = \toolsup{R} + \summarysup{R}$.
Let $u_t = 
\nabla_\theta \log \pi_\theta(y_t \mid h_t)$ be the score function.
For the tool-token component, the accumulated score for trajectory $i$ is $\textcolor{ArxivBlue}{U_{i,\mathrm{tool}}} = \sum_{t \in \textcolor{ArxivBlue}{\mathcal{T}_{i,\mathrm{tool}}}} u_{i,t}$.
Let $w_i$ be the importance weight (e.g., length normalization).
Let $\toolsub{\mathcal{F}}$ be the $\sigma$-algebra generated by $\toolsub{y}$.

\begin{proposition}[Per-Step Advantage Isolation]
\label{prop:slca_theory}
Under SLCA routing (Eq.~\ref{eq:slca_routing}), the tool-token component of the gradient estimator has no functional dependence on $\summarysup{R}$.
\end{proposition}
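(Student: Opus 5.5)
The plan is to write the tool-token component of the SLCA estimator explicitly and read off which reward quantities it depends on. Fix a group of $G$ rollouts $\{y_i\}_{i=1}^G$ for a prompt $x$, together with their segment rewards. The gradient estimator is the derivative of Eq.~\ref{eq:slca_grpo_obj}, and we split it by token support as $g = \toolsub{g}^{\text{SLCA}} + \summarysub{g}^{\text{SLCA}}$, where
\begin{equation}
\toolsub{g}^{\text{SLCA}} = \frac{1}{N}\sum_{i=1}^{G}\sum_{t\in \textcolor{ArxivBlue}{\mathcal{T}_{i,\mathrm{tool}}}} w_i\, \nabla_\theta \min\Big(\rho_{i,t}\hat A^{\text{SLCA}}_{i,t},\ \text{clip}(\rho_{i,t},1-\epsilon_{\text{clip}},1+\epsilon_{\text{clip}})\hat A^{\text{SLCA}}_{i,t}\Big).
\end{equation}
The mask-based decomposition of App.~\ref{app:segmentation} partitions the learnable tokens into disjoint sets $\textcolor{ArxivBlue}{\mathcal{T}_{i,\mathrm{tool}}}$ and $\textcolor{ArxivGreen}{\mathcal{T}_{i,\mathrm{sum}}}$. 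Because the partition is disjoint, this split is well defined, and every tool-side token gets its advantage from the first branch of Eq.~\ref{eq:slca_routing}.

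The first step is to list the inputs to each summand. On $\textcolor{ArxivBlue}{\mathcal{T}_{i,\mathrm{tool}}}$ we have $\hat A^{\text{SLCA}}_{i,t}=\toolsub{\lambda}\toolsup{\hat A_i}$. By Eq.~\ref{eq:slca_normalization}, $\toolsup{\hat A_i}$ is a function only of $(\toolsup{R_1},\dots,\toolsup{R_G})$, through the group mean $\toolsup{\mu_g}$ and standard deviation $\toolsup{\sigma_g}$, together with $\epsilon_{\text{norm}}$. The ratio $\rho_{i,t}$ and its gradient depend only on $\theta$, $\theta_{\text{old}}$ and the sampled tokens. The weights $w_i$, the count $N$ and the masks depend only on the sampled sequences. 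The clipped surrogate is a fixed piecewise-linear map of $(\rho_{i,t},\hat A_{i,t})$. Putting these together, $\toolsub{g}^{\text{SLCA}}$ is a composition of maps whose arguments are $\{\toolsup{R_j}\}_j$ and quantities determined by the rollouts. None of these arguments involves $\{\summarysup{R_j}\}_j$, so holding the rollouts and tool rewards fixed and varying the summary rewards leaves $\toolsub{g}^{\text{SLCA}}$ unchanged. Wherever the derivative exists, this gives $\partial \toolsub{g}^{\text{SLCA}}/\partial \summarysup{R_j}=\mathbf{0}$ for every $j$. It also holds at the kinks of the min/clip, since invariance is a statement about the function itself and does not require differentiability.

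The main obstacle is stating exactly what is held fixed. The summary reward $\summarysup{R_i}$ is computed from $\summarysub{y}$, which is generated after $\toolsub{y}$ and the observations $o$. So the claim is not that tool tokens are statistically independent of summary quality. It is the per-update functional claim, conditional on the sampled group and the reward values. I would make this explicit by treating the reward vector $(\toolsup{R},\summarysup{R})\in\mathbb{R}^{2G}$ as a free argument of the estimator map. I would also check two implementation details: the robust mechanisms of App.~\ref{app:slca_details}, and the normalizer $N$ together with any token-level aggregation. The requirement is that none of them feeds $\summarysup{R}$ into the tool-token scalar. Presence filtering and omission-penalty routing act on tool-side quantities. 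Post-normalization weighting uses constants $\toolsub{\lambda},\summarysub{\lambda}$. The KL penalty is reward-free.

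Finally, I would contrast this with GRPO. There the unified advantage $(R_i-\mu_g)/(\sigma_g+\epsilon_{\text{norm}})$, with $R_i=\toolsup{R_i}+\summarysup{R_i}$, has generically nonzero partial derivatives in $\summarysup{R_j}$, which shows that the isolation comes from the routing and is not automatic.
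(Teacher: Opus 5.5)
Your proposal is correct and follows essentially the paper's argument: the tool-token component carries only the routed scalar $\lambda_{\mathrm{tool}}\hat A_i^{\mathrm{tool}}$, which is built solely from $\{R_j^{\mathrm{tool}}\}_j$ and rollout-determined quantities, and working with the full clipped surrogate and checking the robustness mechanisms is a slightly more careful version of the paper's score-function form $\sum_i w_i\lambda_{\mathrm{tool}}\hat A_i^{\mathrm{tool}}\sum_{t} u_{i,t}$. One small imprecision is that the omission guard does not act only on tool-side quantities, since it sets $R_i^{\mathrm{tool}}=0$ and writes $R_{\mathrm{pen}}$ into $R_i^{\mathrm{sum}}$; it is triggered by a rollout property (no tool call) rather than by the value of $R^{\mathrm{sum}}$, so your conclusion is unaffected.
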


\begin{proof}
The SLCA-routed tool-token component can be written as
\begin{equation}
    \toolsub{g}^{\text{SLCA}}
    =
    \sum_i w_i \toolsub{\lambda} \toolsup{\hat A_i}
    \sum_{t\in\textcolor{ArxivBlue}{\mathcal{T}_{i,\mathrm{tool}}}} u_{i,t}.
\end{equation}
The routed advantage $\toolsup{\hat A_i}$ is computed only from the tool rewards $\{\toolsup{R_j}\}$ within the rollout group, while summary tokens are outside the summation. The tool-token gradient component $\toolsub{g}^{\text{SLCA}}$ has no functional dependence on $\summarysup{R}$, and $\partial \toolsub{g}^{\text{SLCA}}/\partial \summarysup{R}=\mathbf{0}$. Shared parameters may still transfer information across future optimization steps; the proposition is a per-update statement about the routed gradient estimator.
\end{proof}

\paragraph{Part I: Conditional Nuisance-Variance Removal}

We model the summary reward $\summarysup{R}$ as containing a stochastic noise term $\xi$ (e.g., phrasing randomness) that is independent of the tool execution logic.

\begin{assumption}[Nuisance Noise Decomposition]
\label{assump:noise_main}
Conditioned on the tool trajectory $\toolsub{y}$, the summary reward decomposes into a deterministic expectation and a zero-mean noise term:
\begin{equation}
    \summarysup{R} = \summarysup{\bar{R}}(\toolsub{y}) + \xi, \quad \text{where } \mathbb{E}[\xi \mid \toolsub{y}] = 0, \ \mathbb{V}[\xi \mid \toolsub{y}] = \sigma_\xi^2 > 0.
\end{equation}
\end{assumption}

\begin{theorem}[Nuisance-Variance Removal: restated from Theorem~\ref{thm:variance_main}]
\label{thm:variance_reduction}
\label{thm:variance_main}
Let $g_{\text{STD}}$ and $g_{\text{SLCA}}$ be local score-function gradient estimators for tool tokens under unified reward broadcasting and SLCA, respectively.
In this local analysis, SLCA removes the irreducible conditional variance term induced by summary-reward noise:
\begin{equation}
    \mathbb{E}\!\left[\mathbb{V}(g_{\text{STD}}\mid \toolsub{\mathcal{F}})\right]
    -
    \mathbb{E}\!\left[\mathbb{V}(g_{\text{SLCA}}\mid \toolsub{\mathcal{F}})\right]
    =
    \mathbb{E}\left[ w_i^2 \| \toolsub{U} \|^2 \cdot \sigma_\xi^2 \right] > 0.
\end{equation}
\end{theorem}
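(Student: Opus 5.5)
The plan is to work with the local (single-sample, unnormalized) score-function estimators that the theorem refers to, rather than the full group-normalized GRPO estimator. Concretely, I would take
\[
g_{\text{STD}} = w_i\,(\toolsup{R} + \summarysup{R} - b)\,\toolsub{U}, \qquad g_{\text{SLCA}} = w_i\,(\toolsup{R} - b_{\mathrm{tool}})\,\toolsub{U}.
\]
Here $b, b_{\mathrm{tool}}$ are baselines treated as constants in the local analysis, with $\toolsub{\lambda}$ absorbed into $w_i$. The key structural fact is which quantities are $\toolsub{\mathcal{F}}$-measurable. The score $\toolsub{U} = \sum_{t\in\mathcal{T}_{\mathrm{tool}}} u_t$ depends only on tool tokens and their histories. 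The tool reward $\toolsup{R} = S_{\text{process}}(\toolsub{y})$ is a function of $\toolsub{y}$ by Eq.~\ref{eq:reward_instantiation}. I will also assume $w_i$ is $\toolsub{\mathcal{F}}$-measurable, or at least independent of $\xi$ given $\toolsub{y}$; this is an extra modeling assumption I would state explicitly. Finally, by Assumption~\ref{assump:noise_main}, $\summarysup{\bar R}(\toolsub{y})$ is $\toolsub{\mathcal{F}}$-measurable. Consequently, conditioned on $\toolsub{\mathcal{F}}$, the only randomness left in either estimator is $\xi$.

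First, $g_{\text{SLCA}}$ is $\toolsub{\mathcal{F}}$-measurable, which is the local form of Proposition~\ref{prop:slca_theory}. Hence $\mathbb{V}(g_{\text{SLCA}}\mid\toolsub{\mathcal{F}}) = 0$. Here variance of a vector means the trace of the conditional covariance, $\mathbb{E}\|g-\mathbb{E}[g\mid\mathcal{F}]\|^2$. Second, I substitute the decomposition into $g_{\text{STD}}$ and write
\[
g_{\text{STD}} = w_i\big(\toolsup{R} + \summarysup{\bar R}(\toolsub{y}) - b\big)\toolsub{U} + w_i\,\xi\,\toolsub{U}.
\]
The first term is $\toolsub{\mathcal{F}}$-measurable. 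Because $\mathbb{E}[\xi\mid\toolsub{y}]=0$, the conditional mean of $g_{\text{STD}}$ equals this first term. The conditional deviation is therefore exactly $w_i\xi\toolsub{U}$. Pulling the measurable factors out of the conditional expectation gives
\[
\mathbb{V}(g_{\text{STD}}\mid\toolsub{\mathcal{F}}) = w_i^2\|\toolsub{U}\|^2\,\mathbb{E}[\xi^2\mid\toolsub{y}] = w_i^2\|\toolsub{U}\|^2\sigma_\xi^2.
\]
Taking outer expectations and subtracting yields the stated identity. If one prefers not to assume the SLCA conditional variance is zero, the same computation shows the SLCA term cancels in the difference, since both estimators share the same measurable part structure.

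For strict positivity, I would use $\sigma_\xi^2>0$ together with $\mathbb{E}[w_i^2\|\toolsub{U}\|^2]>0$. The latter holds whenever the tool segment is nonempty with positive probability, the weights are nonzero, and the tool score is not almost surely zero. That condition is a nondegeneracy assumption I would add, noting that it fails in the $K_i=1$ edge case of App.~\ref{app:segmentation}, where $\mathcal{T}_{\mathrm{tool}}=\emptyset$. Since $\sigma_\xi^2$ is constant, it factors out of the expectation, matching the displayed form.

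The main obstacle is justifying the measurability claims under GRPO-style group normalization. There, $\toolsup{\hat A_i}$ depends on other rollouts' tool rewards, and the unified advantage divides by a group standard deviation that itself depends on $\xi$. That dependence makes the conditional deviation nonlinear in $\xi$ and breaks the clean identity. I would handle this by stating the result strictly for the local estimator with fixed baselines and scale, which is what ``local score-function analysis'' in the statement licenses. I would then remark that with group normalization one should condition on the $\sigma$-algebra generated by all tool segments in the group, and treat $\sigma_g$ as fixed to first order, so that the same argument applies per sample.
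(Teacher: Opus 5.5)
This is the paper's argument: condition on $\mathcal{F}_{\mathrm{tool}}$, note that $g_{\mathrm{SLCA}}$ is measurable so its conditional variance vanishes, isolate the deviation $w_i\,\xi\,U_{\mathrm{tool}}$ of $g_{\mathrm{STD}}$, and take outer expectations. Your constant baselines change nothing, and your explicit measurability assumption on $w_i$ and your positive-probability nondegeneracy condition are slightly more careful than the paper, which treats $w_i$ as fixed without comment and asserts $\|U_{\mathrm{tool}}\|^2>0$ almost everywhere (this fails whenever empty tool segments occur with positive probability).

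You should drop the hedge that $w_i$ need only be independent of $\xi$, and also the remark that the SLCA term then ``cancels''. In that case the difference picks up an extra term $\mathbb{E}\big[\mathbb{V}(w_i\mid\mathcal{F}_{\mathrm{tool}})\,(a^2-c^2)\,\|U_{\mathrm{tool}}\|^2\big]$ with $a=R_{\mathrm{tool}}+\bar R_{\mathrm{sum}}-b$ and $c=R_{\mathrm{tool}}-b_{\mathrm{tool}}$, so the identity genuinely requires $w_i$ to be $\mathcal{F}_{\mathrm{tool}}$-measurable.
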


\begin{proof}
We analyze the estimators for a single trajectory $i$ after fixing the local normalization scale of the update:
\begin{align}
    g_{\text{STD}} &= w_i (\toolsup{R} + \summarysup{\bar{R}} + \xi) \toolsub{U}, \\
    g_{\text{SLCA}} &= w_i \toolsup{R} \toolsub{U}.
\end{align}
This is the local score-function form of the GRPO update after suppressing group mean and standard-deviation terms. Exact GRPO uses group-normalized rewards, so $\mu_g$ and $\sigma_g$ are functions of all samples in the group and create cross-sample dependencies. This theorem does not give an exact closed form for the fully normalized GRPO covariance; it isolates the summary-noise channel that unified broadcasting necessarily exposes to tool-token gradients and SLCA removes.
Conditioned on $\toolsub{\mathcal{F}}$, the terms $w_i, \toolsup{R}, \summarysup{\bar{R}}, \toolsub{U}$ are deterministic. The only random variable is $\xi$.
For vector-valued gradients, $\mathbb{V}[\cdot]$ denotes the trace covariance, equivalently the expected squared $L_2$ deviation from the conditional mean. We apply the Law of Total Variance: $\mathbb{V}[X] = \mathbb{E}[\mathbb{V}[X|\mathcal{F}]] + \mathbb{V}[\mathbb{E}[X|\mathcal{F}]]$.

\textbf{1. Conditional variance of the standard estimator:}
\begin{itemize}
    \item \textit{Conditional Variance:} $\mathbb{V}[g_{\text{STD}} \mid \toolsub{\mathcal{F}}] = \mathbb{V}[w_i \xi \toolsub{U} \mid \toolsub{\mathcal{F}}] = w_i^2 \|\toolsub{U}\|^2 \sigma_\xi^2$.
    \item \textit{Conditional Expectation:} $\mathbb{E}[g_{\text{STD}} \mid \toolsub{\mathcal{F}}] = w_i (\toolsup{R} + \summarysup{\bar{R}}) \toolsub{U}$.
\end{itemize}
Thus, by the law of total variance,
\begin{equation}
    \mathbb{V}[g_{\text{STD}}] = \mathbb{E}[w_i^2 \|\toolsub{U}\|^2 \sigma_\xi^2] + \mathbb{V}[w_i (\toolsup{R} + \summarysup{\bar{R}}) \toolsub{U}].
\end{equation}

\textbf{2. Conditional variance of the SLCA estimator:}
\begin{itemize}
\item \textit{Since $g_{\text{SLCA}}$ is $\toolsub{\mathcal{F}}$-measurable (independent of $\xi$):}
\end{itemize}
\begin{equation}
    \mathbb{V}[g_{\text{SLCA}} \mid \toolsub{\mathcal{F}}] = 0.
\end{equation}

\textbf{3. Comparison:}
Taking expectations over $\toolsub{\mathcal{F}}$ gives the stated identity:
\begin{equation}
    \mathbb{E}\!\left[\mathbb{V}(g_{\text{STD}}\mid \toolsub{\mathcal{F}})\right]
    -
    \mathbb{E}\!\left[\mathbb{V}(g_{\text{SLCA}}\mid \toolsub{\mathcal{F}})\right]
    =
    \mathbb{E}[w_i^2 \|\toolsub{U}\|^2 \sigma_\xi^2].
\end{equation}
Since generation noise $\sigma_\xi^2 > 0$ and gradient norm $\|\toolsub{U}\|^2 > 0$ almost everywhere, the removed nuisance-variance term is strictly positive. This theorem intentionally makes a conditional variance claim: the systematic term $\mathbb{V}[w_i (\toolsup{R} + \summarysup{\bar{R}}) \toolsub{U}]$ may differ from $\mathbb{V}[w_i \toolsup{R} \toolsub{U}]$ depending on task structure and covariance, which is why we frame SLCA as a bias--variance trade-off rather than an unconditional variance dominance statement.
\end{proof}

\paragraph{Optimization implication}
For SGD on non-convex $L$-smooth functions, standard convergence bounds scale with the gradient-variance term (e.g., $O(\sigma^2/\epsilon^4)$ to reach an $\epsilon$-stationary point). Theorem~\ref{thm:variance_reduction} shows that SLCA removes a positive conditional variance component from tool-token gradients; when this nuisance component dominates systematic covariance changes, the resulting smoother gradients should translate into faster and more stable optimization.

\paragraph{Part II: Resolution of Advantage Confusion (Directional Fidelity)}

We define a Conflict Regime where the tool segment receives a below-group-mean normalized advantage ($\toolsup{\hat{A}} < 0$) but the summary segment receives an above-group-mean normalized advantage ($\summarysup{\hat{A}} > 0$).

\begin{theorem}[Directional Fidelity: restated from Corollary~\ref{thm:fidelity_main}]
\label{thm:directional_fidelity}
\label{thm:fidelity_main}
Let $\mathbf{d}^* = -
\nabla \log \pi(\toolsub{y})$ be the ideal descent direction to penalize tool failure.
Let normalized scores be $\toolsup{\hat{A}} = -\alpha$ ($\alpha>0$, penalty) and $\summarysup{\hat{A}} = \beta$ ($\beta>0$, reward).
Let $\toolsub{\sigma}$ and $\summarysub{\sigma}$ denote the group standard deviations of the raw segment rewards.
The expectations below are taken over the stated rollout randomness, with the trajectory partition, group standard deviations, and $\alpha,\beta$ held fixed.
\begin{itemize}
    \item \textbf{SLCA Update:} $\mathbb{E}[g_{\text{SLCA}}] \propto -\alpha 
\nabla \log \pi$. Inner product with $\mathbf{d}^*$: $\alpha \|
\nabla\|^2 > 0$. \textbf{(Correct Direction)}
    \item \textbf{Standard GRPO Update:} The trajectory-level raw deviation is proportional to $-\alpha\toolsub{\sigma}+\beta\summarysub{\sigma}$, so $\mathbb{E}[g_{\text{STD}}] \propto (\beta\summarysub{\sigma}-\alpha\toolsub{\sigma})
\nabla \log \pi$. Inner product with $\mathbf{d}^*$ is $(\alpha\toolsub{\sigma}-\beta\summarysub{\sigma})\|
\nabla\|^2$.
\end{itemize}
If $\beta\summarysub{\sigma} > \alpha\toolsub{\sigma}$ (the raw summary deviation dominates the raw tool penalty), Standard GRPO updates in the wrong direction ($<0$), reinforcing the failure. The simpler condition $\beta>\alpha$ is recovered when the two raw reward components have equal group standard deviations. SLCA maintains fidelity with respect to the tool-execution signal.
\end{theorem}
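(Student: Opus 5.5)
The plan is to work with the same local score-function form used in the proof of Theorem~\ref{thm:variance_reduction}. There, the tool-token component of a single rollout's update is a scalar advantage times the accumulated tool score $\toolsub{U}=\nabla\log\pi(\toolsub{y})$. Since $\mathbf{d}^*=-\toolsub{U}$, both claims reduce to computing the scalar coefficient multiplying $\toolsub{U}$ under each estimator and reading off its sign. Throughout, the partition, the group standard deviations $\toolsub{\sigma},\summarysub{\sigma}$, and $\alpha,\beta$ are held fixed as the statement stipulates. The expectation therefore acts only on the remaining rollout randomness, and proportionality constants such as $w_i$ and $\toolsub{\lambda}$ are positive and can be dropped.

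\textbf{SLCA case.} By Eq.~\ref{eq:slca_routing} and Proposition~\ref{prop:slca_theory}, the tool-token gradient is $w_i\toolsub{\lambda}\toolsup{\hat A}\,\toolsub{U}=-w_i\toolsub{\lambda}\alpha\,\toolsub{U}$. Taking the expectation with the stated quantities fixed gives $\mathbb{E}[g_{\text{SLCA}}]\propto-\alpha\nabla\log\pi$. Its inner product with $\mathbf{d}^*$ is $\alpha\|\nabla\log\pi\|^2>0$ whenever the score is nonzero.

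\textbf{Standard GRPO case.} The first step is to convert the normalized segment scores back to raw deviations. By Eq.~\ref{eq:slca_normalization}, up to the $\epsilon_{\text{norm}}$ floor (which I will set to zero, or absorb), $\toolsup{R}-\toolsup{\mu_g}=-\alpha\toolsub{\sigma}$ and $\summarysup{R}-\summarysup{\mu_g}=\beta\summarysub{\sigma}$. The group mean of $R_{\text{total}}$ is the sum of the segment means, by linearity. The unified raw deviation is therefore $\beta\summarysub{\sigma}-\alpha\toolsub{\sigma}$. GRPO divides it by the group standard deviation of $R_{\text{total}}$, which is a positive scalar and so does not affect the sign. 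This gives $\mathbb{E}[g_{\text{STD}}]\propto(\beta\summarysub{\sigma}-\alpha\toolsub{\sigma})\nabla\log\pi$, and the inner product with $\mathbf{d}^*$ is $(\alpha\toolsub{\sigma}-\beta\summarysub{\sigma})\|\nabla\log\pi\|^2$. This quantity is negative exactly when $\beta\summarysub{\sigma}>\alpha\toolsub{\sigma}$. Setting $\toolsub{\sigma}=\summarysub{\sigma}$ recovers the condition $\beta>\alpha$.

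\textbf{Main obstacle.} The delicate step is justifying that the expectation does not mix in other terms. In exact GRPO, $\mu_g$ and $\sigma_g$ depend on the whole group, and summary tokens of the same rollout also carry the unified advantage. I would handle this in two ways. First, I would restrict to the tool-token component, so summary-token scores are excluded by definition. Second, I would invoke the "held fixed" convention, so that the only randomness left is in the score terms, whose coefficient is deterministic. The conclusion is then a sign statement on a fixed scalar, and no covariance calculation is needed.
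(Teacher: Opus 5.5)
Your proposal is correct and is essentially the paper's own argument. You read off the scalar coefficient multiplying the tool score under each estimator, convert $\toolsup{\hat A}=-\alpha$ and $\summarysup{\hat A}=\beta$ back to raw deviations so that the unified advantage carries the sign of $\beta\summarysub{\sigma}-\alpha\toolsub{\sigma}$, and take inner products with $\mathbf{d}^*$. Your added steps (linearity of the group mean, positivity of the $R_{\text{total}}$ normalizer, restriction to tool tokens) only make explicit what the paper leaves implicit. One refinement: the ``held fixed'' convention is cleaner if you fix the tool trajectory itself rather than leaving the score as the random part. Then $\mathbf{d}^*$ and the score appearing in $\mathbb{E}[g]$ are the same vector, and the expectation only averages positive scalars such as $w_i$ and the $R_{\text{total}}$ normalizer.
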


A concrete instantiation of the conflict regime in Theorem~\ref{thm:directional_fidelity} is visualized in Appendix~\ref{app:case_study_viz}: under an imperfect tool trajectory ($\toolsup{R}{=}0.65$, below the group mean) paired with a perfect summary ($\summarysup{R}{=}1.0$, above the group mean), standard GRPO reinforces the inefficient policy, whereas SLCA routes the tool-level penalty into $\toolsup{\hat{A}}$ alone and recovers the correct single parallel invocation.

\paragraph{Bias--Variance Trade-off of the $\summarysup{\hat{A}}$ Estimator}
\label{par:asum_bias}

Within a rollout group, trajectories may diverge after the tool segment and produce distinct post-tool states. SLCA still normalizes $\summarysup{R}$ across the group rather than within subgroups that share an identical post-tool state, so $\summarysup{\hat{A}}$ is not a state-conditional value estimate and can conflate summary quality with post-tool state variation.

This cross-state normalization is not unique to SLCA: standard GRPO normalizes $R_{\text{total}}$ across the same group, and methods such as MT-GRPO~\citep{wei2025mtgrpo} and GRPO-Verif~\citep{wang2025grpoverif} also do not compute exact state-conditional advantages without a learned value function or tree-based rollouts. SLCA \emph{narrows} the scope of this bias:
\begin{itemize}[nosep]
    \item Under standard GRPO, both tool and summary tokens receive $A_{\text{total}}$, the z-score of $(\toolsup{R}+\summarysup{R})$, which includes cross-state variation in $\summarysup{R}$.
    \item Under SLCA, tool tokens receive $\toolsup{\hat{A}}$, the z-score of $\toolsup{R}$ alone, computed across rollouts from the same initial state (shared prompt $x$), without cross-state summary bias. Only summary tokens receive the biased $\summarysup{\hat{A}}$.
\end{itemize}

We frame this as a bias--variance trade-off. The unified estimator avoids grouping bias but exposes tool-token gradients to conditional nuisance variance from the summary-reward channel (Theorem~\ref{thm:variance_main}). SLCA accepts summary-side grouping bias but confines it to summary tokens and removes that direct variance channel. The support-control experiment in Table~\ref{tab:support_control_7b} provides a cleaner decomposition: closing the summary-to-tool $S\!\to\!T$ support path has a 5.51\,pp main effect on $\tau^2$-Bench, whereas the matched w/o SLCA comparison changes normalization and token support together.

Explicit subgroup partitioning, which would normalize $\summarysup{R}$ only within trajectories sharing the same tool outcome, reduces the effective group size to $G/C$ per subgroup, where $C$ is the number of distinct tool outcomes. For multi-tool-call settings with $K$ calls, the effective size decays as $G/C^K$, making explicit partitioning impractical without increasing $G$. This is a design trade-off; soft-grouping or hierarchical alternatives remain open.

\subsection{SLCA Practical Details}
\label{app:slca_details}

This section details the numerical stability mechanisms and edge-case handling strategies implemented in \slcaname.

\paragraph{Presence-Based Filtering (Numerical Stability)}
Tool usage behavior is naturally sparse and dynamic; some rollouts may lack a tool segment (i.e., $|\textcolor{ArxivBlue}{\mathcal{T}_{i,\mathrm{tool}}}|=0$) due to omission or early termination.
Naively normalizing rewards within a group where only one sample contains a tool segment leads to a variance collapse (denominator becomes 0 or unstable), or yields an unnormalized advantage equal to the raw score.
To resolve this, we strictly normalize each segment type using only the subset of samples where that segment is present.

Let $L_i^{s} \triangleq |\mathcal{T}_{i,s}|$ denote the length of segment $s \in \{\toolsegment, \summarysegment\}$ in rollout $i$. We define the valid indices set for group $g$ as:
\begin{equation}
    \mathcal{I}_g^{s} \triangleq \{j \mid g(j)=g \text{ and } L_j^{s} > 0\}.
\end{equation}
We compute the segment-wise advantage $\hat{A}_i^{s}$ using a conditional normalization scheme. With a small stability constant $\epsilon_{\text{norm}}$, the formulation is:
\begin{equation}
    \hat{A}_i^{s} =
    \begin{cases}
        0 & \text{if } L_i^{s}=0 \text{ or } |\mathcal{I}_{g(i)}^{s}| < 2, \\[8pt]
        \dfrac{R_i^{s} - \mu_{g(i)}^{s}}{\sigma_{g(i)}^{s} + \epsilon_{\text{norm}}} & \text{otherwise},
    \end{cases}
    \label{eq:present_filter_app}
\end{equation}
where the mean $\mu_g^{s}$ and standard deviation $\sigma_g^{s}$ are computed exclusively over the valid set $\mathcal{I}_g^{s}$:
\begin{equation}
    \mu_g^{s} = \frac{1}{|\mathcal{I}_g^{s}|} \sum_{j \in \mathcal{I}_g^{s}} R_j^{s}, \quad
    \sigma_g^{s} = \operatorname{Std}\left(\{R_j^{s}\}_{j \in \mathcal{I}_g^{s}}\right).
\end{equation}
Here $\epsilon_{\text{norm}}$ denotes a numerical floor for stable division, not a tuned reward or optimization hyperparameter. When the effective group size $|\mathcal{I}_g^{s}| < 2$, we explicitly zero out the advantage ($\hat{A}_i^{s} = 0$) to disable updates, rather than falling back to standard statistics (e.g., $\mu=0, \sigma=1$), which would amplify raw scores as advantages.

\paragraph{Post-Normalization Weighting}
The segment weights $\toolsub{\lambda}$ and $\summarysub{\lambda}$ must be applied \emph{after} the normalization step.
Applying linear scaling \emph{before} z-score normalization cancels the scale in the idealized case without a numerical floor:
\begin{equation}
    \operatorname{Norm}(\lambda \cdot X) = \frac{\lambda X - \operatorname{Mean}(\lambda X)}{\operatorname{Std}(\lambda X)} = \frac{\lambda(X - \mu)}{\lambda \sigma} = \operatorname{Norm}(X).
\end{equation}
With the fixed $\epsilon_{\text{norm}}$ used in our implementation, this identity holds up to the numerical floor. We construct the final routed advantage $\hat{A}^{\text{SLCA}}_{i,t}$ by multiplying the normalized scalar $\hat{A}_i^{s}$ by $\lambda_s$ during the token-level routing phase (as shown in Eq.~\ref{eq:slca_routing}).

\paragraph{Anti-Hacking Omission Penalty Guard}
We strictly route the omission penalty $R_{\text{pen}} < 0$ to the Summary Segment:
\begin{equation}
    \toolsup{R_i} = 0, \qquad \summarysup{R_i} = R_{\text{pen}}.
\end{equation}
In our implementation, $R_{\text{pen}}=-0.5$. When the ground truth requires tool use but the rollout contains no \texttt{<tool\_call>} opening tag, this value directly \emph{overrides} the weighted trajectory score rather than being subtracted as an additive penalty; the normalized success score is then clipped to zero. By this design, the negative signal propagates only through the summary generation tokens. This effectively penalizes the decision to "answer immediately without calling," correctly aligning the gradient direction to suppress premature summarization.
In a representative SLCA trace, no-call episodes start at roughly 10--15\% and approach zero by update~50. In the uniform sensitivity setting, the parallelism weight is $S_{\mathrm{par}}=0.20$; its BFCL and $\tau^2$-Bench results remain above matched GRPO (App.~\ref{app:hierr_sensitivity}).

\subsection{HierR: Process Reward Definitions and Edge Cases}
\label{app:hierr_reward_details}

HierR provides dense, structure-preserving shaping for the tool segment and a terminal preference score for the summary.
We define a process score as a weighted sum of five components:
\begin{equation}
S_{\text{process}} =
0.10\, S_{\text{fmt}} +
0.25\, S_{\text{name}} +
0.15\, S_{\text{key}} +
0.20\, S_{\text{val}} +
0.30\, S_{\text{par}}.
\label{eq:hierr_process}
\end{equation}
Let $\mathcal{Y}$ be the predicted tool-call multiset and $\mathcal{G}$ be the ground-truth multiset.

\paragraph{(1) Format adherence ($S_{\text{fmt}}$)}
We measure syntactic validity and schema-parseability:
\begin{equation}
S_{\text{fmt}}=\frac{N_{\text{valid}}}{\max(N_{\text{open}},N_{\text{close}})}\in[0,1],
\end{equation}
where $N_{\text{valid}}$ is the number of call blocks that pass JSON parsing and schema validation, and
$N_{\text{open}},N_{\text{close}}$ count \texttt{<tool\_call>} and \texttt{</tool\_call>} tags.

\paragraph{(2) Tool-name match ($S_{\text{name}}$)}
We compute an F1-style score over tool-name multisets $\mathcal{N}_Y,\mathcal{N}_G$:
\begin{equation}
S_{\text{name}}=\frac{2|\mathcal{N}_Y\cap \mathcal{N}_G|}{|\mathcal{N}_Y|+|\mathcal{N}_G|}\in[0,1],
\end{equation}
where $\cap$ denotes multiset intersection.

\paragraph{(3) Argument-key match ($S_{\text{key}}$) and value match ($S_{\text{val}}$)}
For each gold call $g\in\mathcal{G}$, match it to the best predicted call $y^*\in\mathcal{Y}$ with the same tool name.
Let $\mathcal{K}_y,\mathcal{K}_g$ be argument-key sets. We define key match as Jaccard:
\begin{equation}
S_{\text{key}}(y,g)=\frac{|\mathcal{K}_y\cap \mathcal{K}_g|}{|\mathcal{K}_y\cup \mathcal{K}_g|}.
\end{equation}
Value match averages a loose indicator over matched keys:
\begin{equation}
S_{\text{val}}(y,g)=\frac{1}{|\mathcal{K}_g|}\sum_{k\in\mathcal{K}_y\cap \mathcal{K}_g}
\mathbb{I}_{\text{loose}}\big(y[k],g[k]\big),
\end{equation}
where $\mathbb{I}_{\text{loose}}$ allows simple type casting and string normalization.
We aggregate over all $g$ by averaging the best-match scores. Extra predicted calls are not directly averaged into $S_{\text{key}}$ or $S_{\text{val}}$ once all gold calls have been matched; they are penalized through $S_{\text{name}}$, $S_{\text{par}}$, and, when invalid, $S_{\text{fmt}}$.

\paragraph{(4) Parallelism constraint ($S_{\text{par}}$)}
To prevent redundancy or ``parallelism collapse'' at the initial tool step, we impose a hard cardinality check:
\begin{equation}
S_{\text{par}}=\mathbb{I}\big(|Y_{\text{init}}|=|G_{\text{init}}|\big)\in\{0,1\},
\label{eq:hierr_parallel}
\end{equation}
where $Y_{\text{init}}$ and $G_{\text{init}}$ denote the predicted and gold call lists in the first call step.

\subsection{Summary score and omission edge case}
\label{app:summary_score}
We derive the summary preference score $S_{\summaryword} \in [0, 1]$ using an LLM-as-a-judge approach \citep{wataoka2025selfpreference,li2025preferenceleakage}.
Specifically, we employ a structured evaluation prompt to assess the utility and faithfulness of the final response, normalizing the raw ratings to the unit interval.
We query the judge for every rollout, including cases in which mandatory tool use is omitted. For a rollout that triggers the omission guard, the policy reward is overridden by $\summarysub{R}=R_{\text{pen}}=-0.5$; the normalized guard advantage is retained for the policy update.

\paragraph{Judge Model and Deployment}
The judge is GPT-OSS-120B (OpenAI, Apache 2.0), a 120B-parameter open-weight language model served via a vLLM OpenAI-compatible API endpoint.
It is a frozen reward producer: its parameters are not optimized by policy training, although its scalar $\summarysub{R}$ enters the policy loss. The role can be served by a local model or an API endpoint and is separate from the response mocker used by SGLS.
The deployment and decoding parameters are listed in Table~\ref{tab:judge_deployment}.
The deterministic setting (\texttt{temperature}$=0.0$) makes the judge output deterministic for a fixed input; rollout and summary sampling can still vary across runs.

\begin{table}[H]
\centering
\small
\caption{Summary Judge: Deployment and Decoding Parameters.}
\label{tab:judge_deployment}
\begin{tabular}{lc}
\toprule
\rowcolor{ArxivTableHead}
\textbf{Parameter} & \textbf{Value} \\
\midrule
\multicolumn{2}{l}{\textit{Model Specification}} \\
Model & GPT-OSS-120B \\
Parameters & 120B \\
Serving Framework & vLLM (OpenAI-compatible API) \\
\midrule
\multicolumn{2}{l}{\textit{Decoding}} \\
Temperature & 0.0 (fully greedy / deterministic) \\
Max Tokens & 8{,}192 \\
top\_p & 1.0 (vLLM default; no nucleus filtering) \\
top\_k & 0 (disabled; no top-$k$ filtering) \\
\midrule
\multicolumn{2}{l}{\textit{Reliability}} \\
Timeout & 540s (including queue wait) \\
Max Retries & 3 (exponential backoff) \\
\bottomrule
\end{tabular}
\end{table}

\paragraph{Prompt Structure and Input Variables}
The full evaluation prompt is provided in the Prompts section.
Three variables are injected into the template:
\texttt{\{question\_content\}} (the user's original question),
\texttt{\{tool\_responses\}} (tool return values only, excluding \texttt{<tool\_call>} content), and
\texttt{\{final\_response\}} (the model's final summary).
The judge evaluates along three dimensions (\emph{Result Interpretation}, \emph{Information Completeness}, and \emph{Expression Quality}) and produces a categorical rating on a 5-point scale.

\paragraph{Factual vs.\ Stylistic Dimensions}
The three evaluation dimensions differ in nature: \emph{Result Interpretation} and \emph{Information Completeness} are factual; the latter is causally dependent on tool execution quality, since incomplete tool results necessarily limit response completeness. \emph{Expression Quality} is partly stylistic (clarity, organization). SLCA's decoupling guarantee (Proposition~\ref{prop:slca_theory}) holds regardless of what $\summarysup{R}$ measures: it prevents any component of $S_{\summaryword}$, whether factual or stylistic, from contaminating tool-token advantages. The legitimate causal link (correct tools $\to$ better summaries) is preserved via the systematic component $\summarysup{\bar{R}}(\toolsub{y})$ in Assumption~\ref{assump:noise_main}; the direct summary-to-tool pathway is blocked. BFCL and $\tau^2$-Bench use judge-free evaluation metrics, providing checks beyond the judge-based metric.

\paragraph{Score Normalization}
The categorical rating is mapped to $[0,1]$ via linear normalization $(r - 1)/4$, where $r$ is the raw integer score.
Table~\ref{tab:judge_scoring} lists the complete mapping.

\begin{table}[H]
\centering
\small
\caption{Summary Judge: Rating-to-Score Mapping.}
\label{tab:judge_scoring}
\begin{tabular}{lcc}
\toprule
\rowcolor{ArxivTableHead}
\textbf{Rating} & \textbf{Raw Score ($r$)} & \textbf{Normalized $S_{\summaryword}$} \\
\midrule
very poor   & 1 & 0.00 \\
poor        & 2 & 0.25 \\
acceptable  & 3 & 0.50 \\
good        & 4 & 0.75 \\
excellent   & 5 & 1.00 \\
\bottomrule
\end{tabular}
\end{table}

\paragraph{Output Parsing and Failure Handling}
The judge output is parsed via the regex pattern:
\begin{center}
\small\texttt{<response\_quality>.*?<rating>(.*?)</rating>.*?</response\_quality>}
\end{center}
Only the five predefined categorical values listed in Table~\ref{tab:judge_scoring} are accepted.
Any other output (e.g., numeric scores, free-text descriptions, or malformed XML) is treated as a parse failure and assigned $\texttt{JUDGE\_FAILURE\_SCORE} = 0.0$.
This conservative default prevents noisy or malformed judge outputs from injecting spurious reward signals into training.

\paragraph{Omission Guard}
The guard overrides the policy reward when a required tool call is absent, so a summary cannot receive a positive policy reward for answering without tool support. The judge call is still made for the rollout; its output does not replace the omission reward. This keeps the reward protocol and group normalization statistics fixed across the compared conditions.

\paragraph{Safeguards Against Judge Artifacts}
We employ the following safeguards to mitigate known failure modes of LLM-as-a-judge:
\begin{enumerate}
    \item \textbf{Verbosity/style bias:} The structured rubric evaluates three factual dimensions rather than holistic preference, constraining the judge to content-level assessment. The judge receives only tool responses and the final summary (no intermediate reasoning, chain-of-thought, or \texttt{<tool\_call>} content), limiting stylistic influence. The discrete 5-point categorical scale further constrains output granularity.
    \item \textbf{Reward hacking:} Two safeguards are used: (a)~the omission guard overrides the policy reward with $R_{\text{pen}}$ and routes its normalized signal to the available Summary tokens when mandatory tool use is absent; (b)~SLCA's advantage routing (Eq.~\ref{eq:slca_routing}) isolates $S_{\summaryword}$ from tool-token gradients.
    \item \textbf{Preference leakage:} The same judge model, prompt template, and decoding parameters are used identically across all compared methods. The judge weights are frozen throughout training. For cross-domain benchmarks, $S_{\summaryword}$ is not used: BFCL evaluates tool accuracy directly, and $\tau^2$-Bench uses the official $\text{Pass}^1$ metric.
\end{enumerate}

\subsection{SGLS Scope and Evaluation}
\label{app:sgls_scope}

SGLS is a schema-guided simulator: it preserves tool names, required fields, and response structure while allowing concrete observations to differ from those of a live API. We evaluate this structural alignment empirically through the \textit{w/o SGLS} ablation, the cross-domain benchmarks, and the alignment examples in App.~\ref{app:sgls_impl}.

\paragraph{Scope of the semantic argument}
Raw simulated and real observations can have disjoint supports: random hashes, timestamps, and instance IDs can make $D_{\mathrm{TV}}(P_{\mathrm{real}},P_{\mathrm{sim}})=1$, so a standard simulation lemma on raw observations is vacuous. The useful alignment is at the schema and semantic-interaction level: tool names, required fields, call structure, and response roles remain aligned even when concrete values differ. A Wasserstein-style metric on a semantic representation would be more meaningful than total variation, but we do not define such a representation or claim a bound here. SGLS treats the schema as a control plane, and its contribution is assessed through the \textit{w/o SGLS} ablation, the cross-domain benchmarks, and the alignment examples reported in the rest of this appendix.

\subsection{Algorithm Details}
\label{app:full_algo}

Algorithm~\ref{alg:slca_grpo} summarizes the complete \slcaname training procedure with SGLS rollouts and HierR segment rewards.

\begin{algorithm}[ht]
\caption{\textsc{\slcaname} training with SGLS and HierR}
\label{alg:slca_grpo}
\begin{algorithmic}[1]
\STATE \textbf{Input:} dataset $\mathcal{D}$, schemas $\Sigma(x)$, group size $G$, PPO clip $\epsilon_{\text{clip}}$
\FOR{each iteration}
\STATE sample $x\sim\mathcal{D}$
\FOR{$i=1$ to $G$}
\STATE rollout $y_i \sim \pi_{\theta_{\mathrm{old}}}(\cdot\mid x)$ by interacting with SGLS (schema validation + tool responses)
\STATE record mask $m_{i,t}$ and segment sets $\textcolor{ArxivBlue}{\mathcal{T}_{i,\mathrm{tool}}},\textcolor{ArxivGreen}{\mathcal{T}_{i,\mathrm{sum}}}$
\STATE compute HierR segment returns $(\toolsup{S_i},\summarysup{S_i})$ with omission-penalty guard if needed
\ENDFOR
\STATE compute $\toolsup{\hat A_i}=\mathrm{Norm}(\toolsup{S_i})$ and $\summarysup{\hat A_i}=\mathrm{Norm}(\summarysup{S_i})$
with presence-based filtering
\STATE route advantages to tokens via Eq.~\ref{eq:slca_routing} to obtain $\hat A^{\text{SLCA}}_{i,t}$
\STATE update $\theta$ by maximizing Eq.~\ref{eq:slca_grpo_obj}
\ENDFOR
\end{algorithmic}
\end{algorithm}

\section{Empirical Analysis of Optimization Stability}
\label{app:gradient_analysis}

As an optimization diagnostic, we track the $L_2$ norm of the gradient vector $\|
\nabla_\theta\|_2$ throughout training across all model scales. A lower and smoother gradient norm is one descriptive indicator of update variability.

\paragraph{Visual Analysis.}
Figure~\ref{fig:grad_norm_dynamics} illustrates the training dynamics.
\begin{itemize}
    \item \textbf{SFT+GRPO (w/o SLCA) (Orange):} Exhibits frequent high-magnitude spikes and broader fluctuations. These patterns are compatible with updates in which summary and tool signals point to different changes.
    \item \textbf{\slcaname (Blue):} Maintains a generally smoother trajectory in these runs. Its segment-specific support removes the direct summary-to-tool contribution to tool-token updates.
\end{itemize}

\begin{figure}[H]
    \centering
    \includegraphics[width=1.0\textwidth]{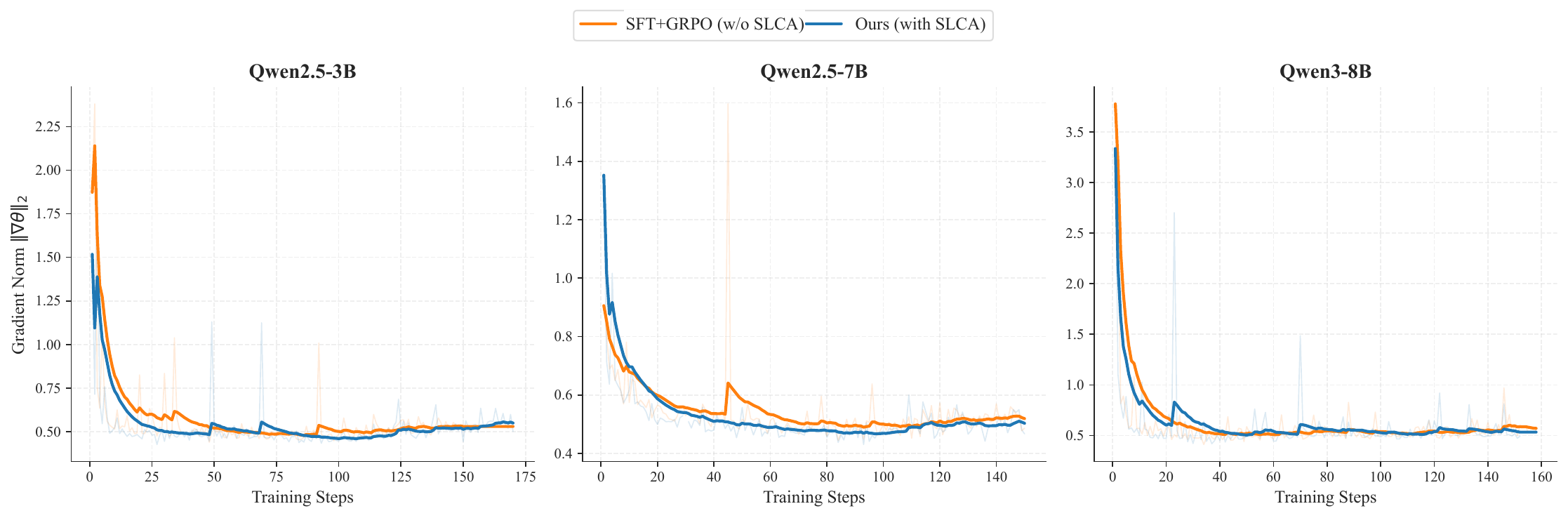}
    \vspace{-0.2cm}
    \caption{
    Gradient Norm Dynamics ($\|
\nabla_\theta\|_2$) during Training.
    We compare the gradient norms of SFT+GRPO (w/o SLCA) (Orange) and \slcaname (Blue) across Qwen2.5-3B, 7B, and Qwen3-8B. These are representative single-run traces and provide a descriptive view of update variability under the two estimators.
    }
    \label{fig:grad_norm_dynamics}
\end{figure}

\paragraph{Quantitative Gradient Stability}
We quantify this diagnostic by computing the standard deviation (Std) of the gradient norm over the training steps. As reported in Table~\ref{tab:grad_std}, \textsc{\slcaname} has a lower observed standard deviation across all model scales in these runs.

On \textbf{Qwen2.5-3B-Instruct}, gradient-norm volatility is reduced by \textbf{25.0\%}. On \textbf{Qwen3-8B-Base}, which has the highest baseline Std in this table, the reduction is 2.5\%.

\paragraph{Cross-Scale Analysis: Gradient Stability vs.\ Task Performance}
The relative reduction decreases across the three backbones (25.0\% $\to$ 13.1\% $\to$ 2.5\%). Gradient-norm standard deviation captures one aggregate variability measure, so these values do not identify the contribution of each noise source.

The matched $\tau^2$-Bench gaps are +1.09\,pp (3B), +9.15\,pp (7B), and +10.03\,pp (8B), all based on three-run means. These aggregate results do not establish a scale trend for gradient stability.
On the simpler Toucan single-turn evaluation, the three-run Success gaps range from +2.05 to +2.53\,pp. Separately, we measured the gradient cosine similarity $\cos(
\textcolor{ArxivBlue}{\nabla_{\mathrm{tool}}}, 
\textcolor{ArxivGreen}{\nabla_{\mathrm{sum}}})$ on the 3B backbone and found it near zero ($[-0.03, 0.08]$) throughout training for \emph{both} SLCA and standard GRPO. In this diagnostic, advantage magnitudes, rather than gradient direction, are the more visible source of cross-segment mismatch.
Because the measured tool/summary gradient cosine is near zero, we refer to SLCA as an ``advantage-level firewall'' rather than a ``gradient firewall'': the diagnostic concerns the magnitude of the assigned advantages, not a measured directional conflict between the two gradient components.

\begin{table}[H]
\centering
\small
\caption{
\textbf{Quantification of Gradient Stability.}
    We report the Standard Deviation (Std) of the gradient norm. \textsc{\slcaname} has lower observed volatility in these representative runs.
}
\label{tab:grad_std}
\setlength{\tabcolsep}{10pt} 
\begin{tabular}{lccc}
\toprule
\rowcolor{ArxivTableHead}
\textbf{Model Backbone} & \textbf{Ours Std} ($\sigma_{\text{SLCA}}$) & \textbf{Baseline Std} ($\sigma_{\text{Base}}$) & \textbf{Improvement} ($\downarrow$) \\
\midrule
\textbf{Qwen2.5-3B-Instruct} & \textbf{0.156} & 0.209 & \textbf{25.0\%} \\
\textbf{Qwen2.5-7B-Instruct} & \textbf{0.094} & 0.108 & \textbf{13.1\%} \\
\textbf{Qwen3-8B-Base}       & \textbf{0.312} & 0.320 & \textbf{2.5\%} \\
\bottomrule
\end{tabular}
\end{table}

\section{Datasets and Training Details}
\label{app:data_details}

\subsection{Data Processing Pipeline}
\label{app:data_pipeline}
As illustrated in Figure~\ref{fig:data_process}, this pipeline is designed to strictly prevent test-set leakage and optimize data composition for different learning stages.

The pipeline consists of four distinct phases:

\paragraph{Phase 1: Noise Filtering and Schema Validation}
Starting with 119,279 Toucan raw trajectories, we first remove approximately 40,000 samples identified as irrelevant or containing low-quality formatting. The remaining 79,279 candidates undergo strict filtering, which removes 1,038 samples containing broken JSON, non-standard tool tokens, or hallucinations to ensure simulator stability. This results in a high-quality valid pool of \textbf{78,241} trajectories.

\paragraph{Phase 2: Evaluation Isolation.}
To ensure rigorous zero-shot evaluation, we randomly sample a held-out Toucan-Test set of \textbf{4,000} samples from the valid pool before any training splits are made. This set comprises:
\begin{itemize}
    \item \textbf{3,000 Native Single-turn samples:} Testing atomic tool-use capabilities.
    \item \textbf{1,000 Decomposed Multi-turn samples:} Extracted from distinct multi-turn dialogues to test context retention and state tracking.
\end{itemize}
The remaining 74,241 samples constitute the Total Training Pool.

\paragraph{Phase 3: Trajectory Decomposition Strategy}
For Reinforcement Learning, training on long-horizon dialogs from scratch is unstable. We decompose multi-turn dialogs into atomic \textit{Context-Action} pairs.
We exclude the first tool-call turn during decomposition and extract samples starting from the second tool invocation (Turn $\ge$ 2). Each decomposed sample provided to the RL agent thus contains historical context from previous user-agent interactions.
The Toucan trajectories use real MCP teacher traces. The segment mask is derived deterministically from the structured trajectory trace and the environment boundary; no annotated segment or token labels and no learned segmenter are used.

\begin{figure*}[t]
    \centering
    \includegraphics[width=1.0\textwidth]{figs/data_process.pdf}
    \caption{\textbf{Data Processing Pipeline.} The workflow applies strict relevance filtering and format check before isolating the evaluation set. The remaining training pool is split into SFT and RL partitions; the RL set combines native single-turn and decomposed multi-turn trajectories formatted for the SGLS.}
    \label{fig:data_process}
\end{figure*}

\paragraph{Phase 4: SFT vs. RL Partitioning}
We split the training pool (74,241 samples) into two partitions:
\begin{itemize}
    \item \textbf{SFT Partition (42,423 samples):} Representing approximately $57\%$ of the data, this set prioritizes diversity in format (Human/GPT styles) and includes the majority of native multi-turn data to establish a robust instruction-following prior.
    \item \textbf{RL Partition (31,818 samples):} Optimized for exploration, this set is constructed with a specific compositional ratio:
    \begin{equation*}
        \text{RL Composition} \approx \frac{2}{3} \times \text{Native Single-turn} + \frac{1}{3} \times \text{Decomposed Multi-turn}
    \end{equation*}
    Specifically, it contains \textbf{21,212} native single-turn samples and \textbf{10,606} decomposed multi-turn samples. This stratified mix balances the learning of simple function execution with complex, history-dependent reasoning.
\end{itemize}

\paragraph{Total Supervision Visibility}
As noted in the experimental setup, our SFT baseline is trained on the union of both partitions (Total 74,241 samples), so the comparison uses matched data exposure.

\subsection{Training Implementation Details}
\label{app:training_details}

\paragraph{Implementation Environment and Hyperparameters}
We fine-tune the model on the Toucan supervision trajectories using the standard cross-entropy objective. The SFT training is conducted on 8 NVIDIA H20 GPUs with \textit{bfloat16} precision. We employ the AdamW optimizer~\citep{kingma2015adam,loshchilov2019adamw} with a cosine learning rate scheduler.
For the RL stage, training is scaled across 32 NVIDIA H20 GPUs (4 nodes). The maximum sequence lengths for prompts and generations are extended to 16k and 8k tokens, respectively, to accommodate long-context reasoning. Detailed hyperparameters for both stages are listed in Table~\ref{tab:hyperparameters}.

\paragraph{Baseline Configurations and Controlled Variables}
The matched set comprises SFT+GRPO, SFT+\textsc{\slcaname}, w/o SLCA, w/o SGLS, and w/o HierR. Within each backbone, all five use that backbone's SFT checkpoint, the same data split, SGLS endpoint, mocker configuration, decoding settings, evaluation protocol, rollout group size $G=16$, one RL epoch, and the same run set. GRPO, \slcaname, w/o SLCA, and w/o SGLS use the same HierR definition and subweights; w/o HierR removes HierR, and w/o SGLS removes schema conditioning and deterministic validation:
\begin{itemize}
    \item \textbf{SFT (Data-Equivalent Baseline):} By exposing the model to the full dataset in a supervised manner, this baseline serves as a data-equivalent benchmark to determine whether RL optimization yields gains beyond simply scaling up demonstration data.
    \item \textbf{SFT+GRPO (Standard Baseline):} This baseline uses the same SGLS environment and HierR reward definitions as our method. Its estimator aggregates the two rewards into an unweighted sum ($R_{\text{total}} = 1.0 \cdot \toolsub{R} + 1.0 \cdot \summarysub{R}$) and broadcasts the resulting advantage to all learnable tokens.
    \item \textbf{SFT+\slcaname (Ours):} Following the same data split as SFT+GRPO, it computes independent advantages for execution and summarization. The matched comparison uses the same 1:1 weighting ($\toolsub{\lambda}=\summarysub{\lambda}=1$) and routes signals via Eq.~\ref{eq:slca_routing}.
    \item \textbf{Ablation definitions:} w/o SLCA uses the unified estimator, w/o SGLS removes schema conditioning and deterministic validation while retaining the mocker endpoint, and w/o HierR removes the hierarchical reward.
\end{itemize}

\begin{table}[H]
\centering
\small
\caption{Detailed Hyperparameters and Environment Settings. We list the configurations for both the supervised fine-tuning (SFT) and the subsequent RL training stages.}
\label{tab:hyperparameters}
\begin{tabular}{lclc}
\toprule
\rowcolor{ArxivTableHead}
\multicolumn{2}{c}{\textbf{Fine-Tuning Hyperparameters}} & \multicolumn{2}{c}{\textbf{RL Hyperparameters}} \\
\cmidrule{1-4}
\multicolumn{1}{l}{Parameter} & Value & \multicolumn{1}{l}{Parameter} & Value \\
\midrule
Hardware Resources & 8 $\times$ H20 GPUs & Hardware Resources & 32 $\times$ H20 (4 Nodes) \\
Optimizer & AdamW & Precision & bfloat16 \\
Total Batch Size & 32 & Total Batch Size & 128 \\
Per-Device Batch & 1 & Mini-batch Size & 32 \\
Grad Accumulation & 4 & Rollout Group Size ($G$) & 16 \\
Learning Rate & $5\times 10^{-6}$ & Learning Rate & $1\times 10^{-6}$ \\
Number of Epochs & 2 & Number of Epochs & 1 \\
LR Scheduler & Cosine & Max Response Len & 8192 \\
Warmup Ratio & 0.1 & KL Coefficient ($\beta_{\text{KL}}$) & 0.001 \\
Max Context Len & 16384 & Max Prompt Len & 16384 \\
\multicolumn{2}{c}{--} & PPO Clip ($\epsilon_{\text{clip}}$) & 0.2 (symmetric) \\
\multicolumn{2}{c}{--} & Dual-Clip Constant ($c$) & 3.0 \\
\multicolumn{2}{c}{--} & KL Estimator & low-var KL \\
\multicolumn{2}{c}{--} & Loss Aggregation & token-mean \\
\multicolumn{2}{c}{--} & No-Call Penalty ($R_{\text{pen}}$) & $-0.5$ (override) \\
\bottomrule
\end{tabular}%
\end{table}

The PPO clip is symmetric: $\epsilon_{\text{clip}}=\epsilon_{\text{low}}=\epsilon_{\text{high}}=0.2$, without asymmetric DAPO-style clipping. The dual-clip constant $c=3.0$ applies only to negative-advantage samples when the likelihood ratio is large. The no-call value $R_{\text{pen}}=-0.5$ is used when tool use is required but no tool call is emitted; it overrides the weighted score rather than subtracting from it.

\paragraph{Training-step convention}
All RL curves report policy-update steps, not PPO inner mini-batch passes. The main runs use rollout group size $G=16$ and one RL epoch over the post-filtered rollout stream; this corresponds to approximately 100 logged policy-update steps in the default 7B setting. We use this convention consistently across all training-dynamics figures.

\subsection{SGLS Implementation Details}
\label{app:sgls_impl}

\paragraph{Frozen LLM Specification}
The frozen LLM used in SGLS is \textbf{Qwen3-235B-A22B-Instruct} (July 2025 checkpoint), a 235B-parameter Mixture-of-Experts (MoE) model with 22B active parameters per forward pass.
This model belongs to a \emph{different architectural family} from the policy backbone (Qwen2.5-7B-Instruct, dense architecture, September 2024). The cross-family, cross-generational design limits implicit leakage and same-backbone favoritism: (i) the representation spaces are structurally distinct (MoE routing vs.\ dense layers), (ii) the models use different tokenizers and pre-training corpora, and (iii) the frozen LLM's role is limited to template-constrained mocking of valid tool calls; it never provides planning guidance or reward signals.
The Toucan teacher trajectories used to construct the training data are real MCP trajectories curated for tool coverage and quality. They provide the tool-reference source for $\toolsub{R}$ and are not an input to the SLCA estimator.

\paragraph{Deployment Configuration}
The frozen LLM is served via vLLM~0.11.0 with the deployment parameters listed in Table~\ref{tab:sgls_deployment}.

\begin{table}[H]
\centering
\small
\caption{SGLS Frozen LLM: Deployment and Decoding Parameters.}
\label{tab:sgls_deployment}
\begin{tabular}{lc}
\toprule
\rowcolor{ArxivTableHead}
\textbf{Parameter} & \textbf{Value} \\
\midrule
\multicolumn{2}{l}{\textit{vLLM Deployment}} \\
Tensor-Parallel Size (TP) & 8 \\
Max Model Length & 32{,}768 \\
GPU Memory Utilization & 0.86 \\
Max Concurrent Sequences & 300 \\
Precision (dtype) & bfloat16 \\
Prefix Caching & Enabled \\
Expert Parallel (MoE) & Enabled \\
\midrule
\multicolumn{2}{l}{\textit{Decoding (API Call Side)}} \\
Temperature & 0.6 \\
Max Tokens & 4{,}096 \\
top\_p & 1.0 (vLLM default; no nucleus filtering) \\
top\_k & 0 (disabled; no top-$k$ filtering) \\
Stop Tokens & Model EOS (vLLM built-in) \\
\bottomrule
\end{tabular}
\end{table}

\paragraph{Two-Stage Pipeline}
SGLS operates as a two-stage pipeline to minimize unnecessary LLM invocations:
\begin{enumerate}
    \item \textbf{Stage~1: Deterministic Schema Validation (no LLM):}
    Checks tool existence against the current instance's available tool set; if the tool name is not found, returns a hard-coded error string. Then verifies required parameters against the schema definition; missing parameters trigger a deterministic error message listing the absent fields.
    \item \textbf{Stage~2: Template-Constrained LLM Mocking (valid calls only):}
    For calls that pass Stage~1, the tool definition JSON and the parsed \texttt{<tool\_call>} payload are injected into a fixed prompt template. The frozen LLM generates a mock \texttt{<tool\_response>} output. Post-processing extracts the response content via tag matching and handles truncation edge cases (e.g., responses with opening but no closing tags).
\end{enumerate}
This hybrid architecture gives schema violations in early-training rollouts deterministic feedback, while the LLM is invoked only for semantically plausible mock responses to correctly formed calls.

\paragraph{\textit{w/o SGLS} fallback environment}
The \textit{w/o SGLS} ablation keeps the same frozen mocker and rollout interface, and removes only schema conditioning and deterministic validation.
Both variants call the same external vLLM \texttt{/v1/chat/completions} endpoint with the same Qwen3-235B-A22B model, temperature (0.6), max-token budget (4{,}096), and timeout (540s).
Thus the ablation is \emph{prompt-unconstrained}, not model-unconstrained: the API-server role instruction, \texttt{<tool\_response>} output contract, prohibitions against explanations or \texttt{<tool\_call>} emission, few-shot examples, incoming-request format, and rollout limits are unchanged.
The only prompt-side removal is the \texttt{Tool Definition} JSON-schema block; the corresponding code path also removes per-instance schema caching and tool-definition parsing.
As a result, unknown tools, missing required parameters, and type-invalid arguments no longer trigger deterministic error strings; every call is forwarded to the mocker, which generates a \texttt{<tool\_response>} shell whose content is determined by the LLM.
Post-processing only strips/extracts tool-response tags and does not validate JSON fields.

\begin{figure}[H]
    \centering
    \includegraphics[width=\linewidth]{figs/sgls_alignment.pdf}
    \caption{\textbf{SGLS mock responses align with real-API outputs across three representative tool domains.}
    The Semantic Mapping $\phi(o)$ extracts schemas, facts, and logic, realizing three levels of alignment (Mapping Marks):
    \emph{Identical Value} for deterministic tools (\emph{calculator}),
    \emph{Identical Schema} for query-style tools (\emph{weather}; individual values need not match a specific real-API snapshot but remain within plausible ranges),
    and \emph{Semantic Equivalence} for open-ended tools (\emph{search}; factually consistent content differing only in phrasing).}
    \label{fig:sgls_alignment}
\end{figure}

\paragraph{Response-mocker robustness}
An alternate response-mocker check replaces the frozen response mocker with GPT-OSS-120B while keeping the policy initialization, data, validators, rewards, rollout budget, and evaluation protocol fixed. The response mocker generates tool observations; it is separate from the summary judge that produces $\summarysub{R}$. In this check the mocker and judge are intentionally from the same GPT-OSS-120B family; BFCL and $\tau^2$-Bench do not use the judge reward, so their transfer measurements are not affected by that shared-family choice. The summary-judge prompt and decoding remain unchanged. Table~\ref{tab:mocker_swap} reports the matched results.

\begin{table}[H]
\centering
\small
\caption{Response-mocker check on Qwen2.5-7B-Instruct. GPT-OSS-120B replaces the response mocker; entries are mean $\pm$ standard deviation under the matched policy setup.}
\label{tab:mocker_swap}
\setlength{\tabcolsep}{4pt}
\begin{tabular}{lccc}
\toprule
\rowcolor{ArxivTableHead}
\textbf{Method} & \textbf{Toucan Succ. (\%)} & \textbf{BFCL Acc. (\%)} & \textbf{$\tau^2$-Bench Pass$^1$ (\%)} \\
\midrule
SFT+GRPO (w/o SLCA) & 75.74 $\pm$ 1.47 & 68.08 $\pm$ 0.36 & 32.74 $\pm$ 2.66 \\
\rowcolor{oursTint}\textbf{\slcaname} & \textbf{78.57 $\pm$ 1.15} & \textbf{69.32 $\pm$ 0.56} & \textbf{39.90 $\pm$ 1.46} \\
$\Delta$ & \textbf{+2.83} & \textbf{+1.24} & \textbf{+7.16} \\
\bottomrule
\end{tabular}
\end{table}

The SLCA advantage remains positive on all three benchmarks after the response-mocker replacement.

\subsection{Baseline Comparison: RLTR and ToolPO Implementation}
\label{app:baseline_comparison}

To provide context for the comparisons in \S\ref{sec:main_results}, we re-implement RLTR~\citep{li2025rltr} and ToolPO~\citep{deepagent2025} within the same verl training framework. The matched GRPO and \slcaname runs share the controls listed in Table~\ref{tab:hyperparameters}; RLTR and ToolPO retain their method-specific initialization and reward/protocol structure. The ToolPO rows are our implementations under the stated outcome-reward choices, not reproductions of the original ToolPO environment.

\paragraph{SFT Initialization}
For each backbone, the matched methods start from that backbone's SFT checkpoint (e.g., \texttt{qwen2.5-7B-sft}); the RLTR planner checkpoint follows its method-specific construction.
RLTR requires a dedicated \emph{Planner SFT}: we remove summary tokens from the SFT data and append an \texttt{<answer>} marker to delineate the planner's output boundary, following the original paper's design.

\paragraph{RLTR Implementation}
We re-implement the Planner-Summarizer pipeline:
\begin{itemize}
    \item \textbf{Reward:} $R = R_{\text{comp}} + R_{\text{repeat}} + R_{\text{error}}$, where $R_{\text{comp}}$ is an LLM-based Completeness Checker (Qwen3-30B-A3B, $N{=}3$ samples averaged), $R_{\text{repeat}} = -0.1 \times \text{repeat\_count}$, and $R_{\text{error}} = -0.2 \times \text{error\_count}$. Format violations yield $R = -1.0$.
    \item \textbf{Advantage:} Only process (tool) tokens receive non-zero advantages; summary token advantages are set to zero.
    \item \textbf{Evaluation:} A 2-stage pipeline: the Planner generates the tool trajectory, then a frozen SFT Summarizer produces the final answer. For BFCL, only the Planner checkpoint is used.
\end{itemize}

\paragraph{ToolPO Implementation}
We report a reference-rule ToolPO condition and an LLM-judge condition as a protocol-sensitivity diagnostic on 7B. The 3B and 8B ToolPO rows in the scale tables use the LLM-judge Solved/Unsolved outcome reward; no reference-rule runs are available for those scales.
\begin{itemize}
    \item \textbf{Outcome reward protocols:} The reference-rule condition uses the available gold tool calls, while the sensitivity condition uses an LLM judge that classifies each rollout as Solved or Unsolved, denoted $R_{\text{succ}}^{\text{judge}}$. Both are evaluated on the 7B backbone below.
    \item \textbf{$C(a_t^{\text{call}})$ adaptation:} Tool-call correctness via strict matching against gold tool calls (exact name match $+$ all gold keys present $+$ lenient value matching). $R_{\text{action}} = \sum C / |\text{gold\_flat}|$, normalized to $[0,1]$.
    \item \textbf{Advantage:} $\toolsub{A} = A_{\text{global}} + \lambda \cdot A_{\text{tool\_local}}$ for tool tokens, $\summarysub{A} = A_{\text{global}}$ for summary tokens, with $\lambda=1.0$ and group-wise normalization. We adopt $\lambda{=}1.0$ following the original ToolPO specification \citep{deepagent2025}, which gives the local action reward parity with the global outcome reward; we did not further tune $\lambda$ to avoid introducing a confound absent from the original design.
\end{itemize}

\paragraph{Reference rule protocol}
We implement the reference rule as an explicit gold-call conjunction using the
same matching primitives as the HierR process reward. Let $\mathcal{N}_Y$ and
$\mathcal{N}_G$ be the predicted and gold tool-name multisets, and let
$y^*:\mathcal{G}\to\mathcal{Y}$ be the one-to-one matching used above, pairing
each gold call with the best unused predicted call of the same name. Let
$Y_{\mathrm{init}}$ and $G_{\mathrm{init}}$ denote first-round calls:
\begin{equation}
\begin{aligned}
R_{\mathrm{succ}}^{\mathrm{rule}}
 ={}&\mathbb{I}\!\big[N_{\mathrm{valid}}=N_{\mathrm{open}}=N_{\mathrm{close}},\;
 \mathcal{N}_Y=\mathcal{N}_G,\;
 |Y_{\mathrm{init}}|=|G_{\mathrm{init}}|,\\[-2pt]
 &\quad \forall g\in\mathcal{G}:\;
 \left[\mathcal{K}^{\mathrm{req}}_g\subseteq\mathcal{K}_{y^*(g)}\ \land\ 
 \forall k\in\mathcal{K}^{\mathrm{req}}_g,\;
 \mathbb{I}_{\mathrm{loose}}\!\left(y^*(g)[k],g[k]\right)=1\right]\big].
\end{aligned}
\label{eq:toolpo_reference_rule}
\end{equation}
Here $N_{\mathrm{valid}}=N_{\mathrm{open}}=N_{\mathrm{close}}$ requires every
parsed call to be valid with balanced opening and closing tags. The name
multiset equality rejects missing or extra calls and incorrect multiplicities;
$\mathcal{K}^{\mathrm{req}}_g$ is the set of required keys
in gold call $g$, and the matched-call key condition requires all of them.
Values use the same case-insensitive, type-coercing matcher as HierR.
The initial cardinality check enforces the parallel-call count. Cross-round
call order is not checked. The conjunction reuses the gold-call checks already
used by $R_{\text{action}}$, so this row is a protocol comparison rather than
an independent reward source.

\begin{table}[H]
\centering
\small
\caption{ToolPO outcome reward protocol comparison on Qwen2.5-7B-Instruct. Entries are mean $\pm$ standard deviation over three matched runs. The reference rule row uses Eq.~\ref{eq:toolpo_reference_rule}; Unified GRPO and \slcaname are shown as contextual matched baselines under their standard reward protocol.}
\label{tab:toolpo_reward_protocol}
\setlength{\tabcolsep}{4pt}
\begin{tabular}{lccc}
\toprule
\rowcolor{ArxivTableHead}
\textbf{Condition} & \textbf{Toucan Succ. (\%)} & \textbf{BFCL Acc. (\%)} & \textbf{$\tau^2$-Bench Pass$^1$ (\%)} \\
\midrule
ToolPO, LLM-judge $R_{\text{succ}}^{\text{judge}}$ & 20.00 $\pm$ 1.36 & 24.59 $\pm$ 0.87 & 30.44 $\pm$ 2.15 \\
ToolPO, reference rule $R_{\text{succ}}^{\text{rule}}$ & 77.35 $\pm$ 1.19 & 68.86 $\pm$ 0.43 & 34.18 $\pm$ 2.42 \\
Unified GRPO & 76.60 $\pm$ 1.27 & 68.41 $\pm$ 0.11 & 31.87 $\pm$ 2.97 \\
\rowcolor{oursTint}\textbf{\slcaname} & \textbf{79.13 $\pm$ 1.05} & \textbf{69.77 $\pm$ 0.47} & \textbf{41.02 $\pm$ 1.01} \\
\bottomrule
\end{tabular}
\end{table}

The two ToolPO rows use the same additive estimator, initialization, data, training budget, and evaluation. Replacing the LLM-judge outcome reward with the reference rule removes the observed 7B collapse in this ToolPO comparison; the rule-based ToolPO remains close to Unified GRPO and below SLCA on $\tau^2$-Bench. This comparison isolates sensitivity to the outcome-reward protocol.

\paragraph{Discussion: ToolPO-7B Format Behavior}
Under the LLM-judge protocol, ToolPO has low format validity on Qwen2.5-7B-Instruct (\texttt{format\_passed}$\,{=}\,$38\%), whereas the 3B and 8B runs retain higher format validity. The matched GRPO and \slcaname rows use the same 7B SFT initialization, SGLS environment, and RL data; ToolPO retains its method-specific initialization, additive estimator, and outcome reward protocol. The 7B result is also reflected in the three-run mean (Toucan Success $0.200{\pm}0.014$). The training trace shows an increase in unmatched opening tags after step~80 (App.~\ref{app:baseline_dynamics}). This is a diagnostic of the LLM-judge condition. The reference-rule comparison above keeps the ToolPO estimator fixed while changing the outcome reward.

\paragraph{Transparency Note}
We document the following necessary adaptations for reproducibility:
\begin{enumerate}
    \item The reference-rule condition uses the available gold tool calls; the LLM-judge condition is retained as a protocol-sensitivity diagnostic.
    \item RLTR's Planner SFT is trained with summary removal and \texttt{<answer>} markers, rather than using the general SFT directly.
    \item RLTR's penalty coefficients ($\lambda_{\text{repeat}}{=}0.1$, $\mu_{\text{error}}{=}0.2$) are set to reasonable values, as the original paper does not specify them.
    \item We do not force HierR into ToolPO's additive update, because doing so would change ToolPO's original reward protocol rather than evaluate the method-faithful baseline.
    \item All methods use vanilla policy loss mode, so the policy-loss implementation is held fixed.
\end{enumerate}

\subsection{Evaluation Protocols}
\label{app:eval_protocols}

For full reproducibility, we document the complete evaluation protocol for each benchmark, including software versions, model configurations, and scoring logic. The matched GRPO and \slcaname rows share the benchmark protocol; the ToolPO and RLTR choices are listed in App.~\ref{app:baseline_comparison}.

\paragraph{$\tau^2$-Bench Protocol}
We use the official $\tau^2$-Bench framework~\citep{barres2025tau2bench} without modifications to prompts, tools, tasks, or scoring logic.
Table~\ref{tab:tau2_protocol} summarizes the configuration.

\begin{table}[H]
\centering
\small
\caption{$\tau^2$-Bench evaluation protocol.}
\label{tab:tau2_protocol}
\setlength{\tabcolsep}{4pt}
\begin{tabular}{ll}
\toprule
\rowcolor{ArxivTableHead}
\textbf{Parameter} & \textbf{Value} \\
\midrule
\texttt{tau2} package & v0.2.1.dev0 (from \texttt{@v0.2.0} tag) \\
Evaluation framework & evalscope 1.3.0 \\
Task split & airline, retail, telecom (all 3 official domains) \\
Dataset source & \texttt{evalscope/tau2-bench-data} (ModelScope) \\
Task filtering & \texttt{LLMGTAgent.check\_valid\_task()} \\
Aggregation & \texttt{mean\_and\_pass\_hat\_k} (built-in) \\
Trials per task & 1 (default \texttt{repeats=1}; Pass$^1$) \\
\midrule
\multicolumn{2}{l}{\textit{User Simulator LLM}} \\
\midrule
Model & DeepSeek-V3.2 \\
Temperature & 0.7 \\
Max tokens & 1024 \\
Timeout & 200s \\
Max retries & 8 \\
\bottomrule
\end{tabular}
\end{table}

\paragraph{BFCL V3 Protocol}
We evaluate on BFCL V3~\citep{pmlr-v267-patil25a} using the function-calling protocol.
Table~\ref{tab:bfcl_protocol} lists the configuration.

\begin{table}[H]
\centering
\small
\caption{BFCL V3 evaluation protocol.}
\label{tab:bfcl_protocol}
\setlength{\tabcolsep}{4pt}
\begin{tabular}{ll}
\toprule
\rowcolor{ArxivTableHead}
\textbf{Parameter} & \textbf{Value} \\
\midrule
Evaluation framework & evalscope 1.3.0 \\
Evaluation mode & \texttt{is\_fc\_model=True} \\
Temperature & 0 (greedy) \\
Max tokens & 8192 \\
\texttt{parallel\_tool\_calls} & True \\
\texttt{underscore\_to\_dot} & True \\
Batch size & 256 \\
\bottomrule
\end{tabular}
\end{table}

\noindent\textit{Chat template alignment.}
The vLLM serving endpoint uses a Toucan-specific chat template that injects a fixed system prompt and skips upstream system messages.
This is a \textbf{chat template alignment} choice ensuring inference consistency with training, not a modification of the BFCL benchmark protocol.
The skipped upstream system messages contain only generic formatting instructions; skipping them is a compatibility step and does not change instance-specific content or evaluation semantics.
BFCL tool definitions are injected through the standard \texttt{tools} parameter in the function-calling interface, which the template processes correctly.
The \texttt{underscore\_to\_dot=True} flag reconciles naming-convention differences between training-time schemas and BFCL's schema format.
For the main BFCL V3 comparison, we exclude the relevance-detection subset and restrict evaluation to single-turn samples. These choices are held fixed for the matched GRPO and \slcaname rows; the Multi-Turn extension is reported separately below.

\paragraph{Toucan In-Domain Protocol}
The Toucan evaluation uses a custom asynchronous evaluation system (independent of evalscope).
Table~\ref{tab:toucan_protocol} lists the agent and environment configuration.

\begin{table}[H]
\centering
\small
\caption{Toucan in-domain evaluation protocol.}
\label{tab:toucan_protocol}
\setlength{\tabcolsep}{4pt}
\begin{tabular}{ll}
\toprule
\rowcolor{ArxivTableHead}
\textbf{Parameter} & \textbf{Value} \\
\midrule
Dataset & \texttt{tool\_call\_rl\_40k\_v1.parquet} (4{,}000 held-out) \\
Agent loop max turns & 10 \\
Temperature & 0.0 (greedy) \\
Max tokens & 12{,}288 \\
Top-$p$ & 1.0 \\
Concurrency & 256 \\
Retries & 3 \\
\midrule
\multicolumn{2}{l}{\textit{Tool Simulator (same as SGLS)}} \\
\midrule
Model & Qwen3-235B-A22B \\
Temperature & 0.6 \\
Max tokens & 4{,}096 \\
Timeout & 540s \\
\midrule
\multicolumn{2}{l}{\textit{LLM Judge}} \\
\midrule
Model & GPT-OSS-120B \\
Temperature & 0.0 (deterministic) \\
Max tokens & 8{,}192 \\
Scoring & 5-point categorical $\to$ $[0,1]$ via $(r{-}1)/4$ \\
 Summary-judge threshold & rating $\in$ \{good, excellent\} ($\geq 0.75$) \\
\bottomrule
\end{tabular}
\end{table}

The primary Success metric in Table~\ref{tab:toucan_main} is $\mathbb{I}[S_{\text{process}}\geq 0.9]$; it is distinct from the summary-judge threshold above.

\noindent\textit{Process Score sub-item weights.}
The $S_{\text{process}}$ metric (Eq.~\ref{eq:hierr_process}) decomposes into five sub-items:
format ($w{=}0.10$), name ($w{=}0.25$), key ($w{=}0.15$), value ($w{=}0.20$), and parallel ($w{=}0.30$).
Name matching uses multiset F1 over predicted and gold tool-name occurrences; key matching uses gold-call greedy Jaccard similarity; value matching uses lenient comparison (case-insensitive, automatic numeric type conversion).

\subsection{Details on Tool Space Scalability}
\label{app:toolspace_details}

We evaluate tool-space scalability via a controlled candidate-size sweep (K-sweep) under two distractor sampling strategies (Hard vs.\ Random). This appendix documents the dataset statistics, the K-sweep candidate-set construction procedure, the hard-negative mining mechanism, and the context-length profiling.

\paragraph{Dataset and Global Tool Pool}
All K-sweep datasets are derived from the Toucan evaluation set containing 4,000 instances. The global tool pool comprises 1,568 tools in total, exhibiting a long-tailed distribution of schema lengths. In the original data, the allowed-tool list contains 5.44 tools on average, and the gold-tool set size is 1.54 on average. These statistics motivate a controlled augmentation protocol to probe robustness under larger candidate sets.

\paragraph{Controlled Candidate-Set Construction (K-sweep)}
For each instance $i$, let $\mathcal{A}_i$ denote the original allowed tool set and $\mathcal{G}_i \subseteq \mathcal{A}_i$ denote the gold tool set. Given a target candidate size $K$, we construct a controlled candidate set $\tilde{\mathcal{A}}_i$ as follows:
\begin{itemize}
    \item \textbf{Gold preservation:} Always retain all gold tools $\mathcal{G}_i$ in $\tilde{\mathcal{A}}_i$.
    \item \textbf{Original baseline ($K=0$):} If $K=0$, keep the instance unchanged, i.e., $\tilde{\mathcal{A}}_i=\mathcal{A}_i$.
    \item \textbf{Expansion ($|\mathcal{A}_i|<K$):} Add $K-|\mathcal{A}_i|$ distractor tools sampled from the global tool pool, excluding tools already present in $\mathcal{A}_i$.
    \item \textbf{Contraction ($|\mathcal{A}_i|>K$):} Keep all gold tools and uniformly subsample $K-|\mathcal{G}_i|$ tools from $\mathcal{A}_i \setminus \mathcal{G}_i$.
\end{itemize}
This construction enforces a precise candidate-set size while guaranteeing that the target tool(s) remain available, thereby isolating the effect of tool-space size on tool selection.

\paragraph{Distractor Sampling Strategies.}
We implement two strategies for selecting distractors in the expansion step:
\begin{itemize}
    \item \textbf{Random Sampling (Control):} Distractors are drawn uniformly at random (without replacement) from the global tool pool after excluding $\mathcal{A}_i$.
    \item \textbf{Hard-Negative Sampling (Semantic Distractors):} We mine semantically similar distractors based on tool-description similarity. Specifically, we encode each tool's description text (tool name concatenated with its functionality docstring) using the sentence-transformer \texttt{sentence-transformers/all-MiniLM-L6-v2}, and precompute a cosine similarity matrix over the global pool. For each instance, we retrieve the most similar non-gold tools to the gold tool(s) (excluding tools already in $\mathcal{A}_i$) and take the top-scoring tools as distractors until the required budget is met. This procedure produces functionally related distractors, maximizing the probability of semantic interference.
\end{itemize}

\paragraph{Context Length Profiling and Skip Policy}
All token statistics below are computed using the Qwen2.5-7B-Instruct tokenizer with a maximum context length of 32,768 tokens. During K-sweep dataset generation, if an instance exceeds the context limit after tool-schema injection, it is automatically skipped. Tables~\ref{tab:token_profile_random} and~\ref{tab:token_profile_hard} report empirical system-prompt length statistics and skip rates for Random and Hard strategies, respectively.

\begin{table}[H]
  \centering
  \small
  \setlength{\tabcolsep}{4pt}
  \caption{System prompt length statistics vs.\ candidate set size ($K$) for \textbf{Random} distractors (Qwen2.5 tokenizer).}
  \label{tab:token_profile_random}
  \begin{tabular}{lccccc}
  \toprule
\rowcolor{ArxivTableHead}
  \textbf{Candidate Size ($K$)} & \textbf{Original} & \textbf{20} & \textbf{50} & \textbf{100} & \textbf{150} \\
  \midrule
  \textbf{Valid Samples} & 3,991 & 3,980 & 3,882 & 2,607 & 443 \\
  \textbf{Avg. Tokens} & 943 & 4,614 & 11,064 & 17,177 & 23,635 \\
  \textbf{Std. Dev.} & 1,114 & 3,679 & 5,256 & 1,550 & 733 \\
  \textbf{Skip Rate} & 0.2\% & 0.5\% & 2.9\% & 34.8\% & 88.9\% \\
  \bottomrule
  \end{tabular}
\end{table}

\begin{table}[H]
  \centering
  \small
  \setlength{\tabcolsep}{4pt}
  \caption{System prompt length statistics vs.\ candidate set size ($K$) for \textbf{Hard} distractors (Qwen2.5 tokenizer).}
  \label{tab:token_profile_hard}
  \begin{tabular}{lccccc}
  \toprule
\rowcolor{ArxivTableHead}
  \textbf{Candidate Size ($K$)} & \textbf{Original} & \textbf{20} & \textbf{50} & \textbf{100} & \textbf{150} \\
  \midrule
  \textbf{Valid Samples} & 3,991 & 3,959 & 3,793 & 3,023 & 2,034 \\
  \textbf{Avg. Tokens} & 943 & 3,693 & 8,522 & 14,483 & 20,609 \\
  \textbf{Std. Dev.} & 1,114 & 2,735 & 3,806 & 2,595 & 1,719 \\
  \textbf{Skip Rate} & 0.2\% & 1.0\% & 5.2\% & 24.4\% & 49.2\% \\
  \bottomrule
  \end{tabular}
\end{table}

\noindent
Hard sampling tends to yield shorter prompts and lower skip rates at fixed $K$ than Random sampling, since semantically similar tools are more likely to match the length profile of tools already present in the original candidate set. We report main-paper scalability results up to $K=100$ and treat $K=150$ as a stress-test regime with high context-limit attrition.

\section{Additional Experimental Results} 
\label{app:full_results}

\subsection{Toucan-Test: full HierR decomposition}
Table~\ref{tab:toucan_full} expands Table~\ref{tab:toucan_main} by reporting two additional components:
(i) \emph{Parallel}, which captures the parallelism/cardinality constraints in HierR, and
(ii) \emph{Summary}, an LLM-judge score over the user-facing segment $\summarysub{y}$.
We include RLTR and ToolPO as prior credit-assignment baselines. Reporting these terms separately is useful here: RLTR does not optimize the summary segment inside RL, while ToolPO can attain a non-trivial summary score even when tool execution degrades.

\begin{table}[ht] 
\centering
\small 
\caption{
Comprehensive Results on Toucan-Test across Scales.
This table expands upon the main results by including all sub-metrics.
In addition to Name F1, ArgMatch, Process, and Success (defined in Table~\ref{tab:toucan_main}), we report:
Parallel: The parallelism/cardinality constraint score defined in Eq.~\ref{eq:hierr_parallel};
Summary: The user-facing response quality score evaluated by the LLM-judge.
\textsc{\slcaname} has the highest mean Success and Process values across
scales. RLTR and ToolPO
are included for context; their method-specific protocols are described in
App.~\ref{app:baseline_comparison}. 
RLTR's Summary column is marked ``--'' as its frozen summarizer
is not optimized during RL. Trained rows report mean$\pm$std over three runs;
Original rows are point evaluations.}
\label{tab:toucan_full}

\setlength{\tabcolsep}{5pt}
\begin{tabular}{lcccccc}
\toprule
\rowcolor{ArxivTableHead}
\textbf{Method} & \textbf{Name F1} & \textbf{ArgMatch} & \textbf{Parallel} & \textbf{Process} & \textbf{Summary} & \textbf{Success} \\
\midrule

\rowcolor{scaleThreeTint}\textit{\textbf{Qwen2.5-3B-Instruct}} &  &  &  &  &  &  \\
\midrule
Original       & 0.5207 & 0.5073 & 0.5412 & 0.5174 & 0.3477 & 0.3816 \\
SFT                 & 0.7685{\scriptsize$\pm$.0069} & 0.7132{\scriptsize$\pm$.0239} & 0.7772{\scriptsize$\pm$.0071} & 0.7401{\scriptsize$\pm$.0050} & 0.5512{\scriptsize$\pm$.0047} & 0.6500{\scriptsize$\pm$.0089} \\
SFT+GRPO            & 0.8910{\scriptsize$\pm$.0087} & 0.8072{\scriptsize$\pm$.0154} & 0.9176{\scriptsize$\pm$.0080} & 0.8577{\scriptsize$\pm$.0057} & 0.7842{\scriptsize$\pm$.0122} & 0.7412{\scriptsize$\pm$.0115} \\
RLTR       & 0.7598{\scriptsize$\pm$.0149} & 0.6353{\scriptsize$\pm$.0229} & 0.6729{\scriptsize$\pm$.0098} & 0.6924{\scriptsize$\pm$.0162} & --     & 0.6627{\scriptsize$\pm$.0151} \\
ToolPO                  & 0.8050{\scriptsize$\pm$.0142} & 0.7769{\scriptsize$\pm$.0160} & 0.8946{\scriptsize$\pm$.0137} & 0.8368{\scriptsize$\pm$.0141} & 0.6189{\scriptsize$\pm$.0117} & 0.5128{\scriptsize$\pm$.0145} \\
\rowcolor{oursTint}\textbf{\textsc{Ours}} & \textbf{0.9006}{\scriptsize$\pm$.0087} & \textbf{0.8092}{\scriptsize$\pm$.0112} & \textbf{0.9214}{\scriptsize$\pm$.0071} & \textbf{0.8625}{\scriptsize$\pm$.0049} & 0.7635{\scriptsize$\pm$.0089} & \textbf{0.7647}{\scriptsize$\pm$.0093} \\

\midrule

\rowcolor{scaleSevenTint}\textit{\textbf{Qwen2.5-7B-Instruct}} &  &  &  &  &  &  \\
\midrule
Original                & 0.8008 & 0.7388 & 0.8003 & 0.7687 & 0.5882 & 0.6224 \\
SFT                     & 0.8390{\scriptsize$\pm$.0075} & 0.7531{\scriptsize$\pm$.0224} & 0.8397{\scriptsize$\pm$.0083} & 0.8094{\scriptsize$\pm$.0082} & 0.6346{\scriptsize$\pm$.0054} & 0.7214{\scriptsize$\pm$.0087} \\
SFT+GRPO                & 0.8971{\scriptsize$\pm$.0093} & 0.8350{\scriptsize$\pm$.0148} & 0.9091{\scriptsize$\pm$.0079} & 0.8667{\scriptsize$\pm$.0061} & 0.8241{\scriptsize$\pm$.0108} & 0.7660{\scriptsize$\pm$.0127} \\
RLTR      & 0.7673{\scriptsize$\pm$.0132} & 0.6346{\scriptsize$\pm$.0205} & 0.6928{\scriptsize$\pm$.0119} & 0.7022{\scriptsize$\pm$.0143} & --     & 0.6545{\scriptsize$\pm$.0168} \\
ToolPO                  & 0.3042{\scriptsize$\pm$.0168} & 0.3552{\scriptsize$\pm$.0192} & 0.3438{\scriptsize$\pm$.0143} & 0.3408{\scriptsize$\pm$.0157} & 0.6054{\scriptsize$\pm$.0128} & 0.2000{\scriptsize$\pm$.0136} \\
\rowcolor{oursTint}\textbf{\textsc{Ours}}  & \textbf{0.9164}{\scriptsize$\pm$.0075} & \textbf{0.8353}{\scriptsize$\pm$.0132} & \textbf{0.9251}{\scriptsize$\pm$.0068} & \textbf{0.8766}{\scriptsize$\pm$.0045} & \textbf{0.8450}{\scriptsize$\pm$.0087} & \textbf{0.7913}{\scriptsize$\pm$.0105} \\

\midrule

\rowcolor{scaleEightTint}\textit{\textbf{Qwen3-8B-Base}} &  &  &  &  &  &  \\
\midrule
Original                & 0.0887 & 0.1158 & 0.1208 & 0.1089 & 0.0322 & 0.0471 \\
SFT                     & 0.8058{\scriptsize$\pm$.0082} & 0.7437{\scriptsize$\pm$.0260} & 0.8133{\scriptsize$\pm$.0071} & 0.7754{\scriptsize$\pm$.0060} & 0.6261{\scriptsize$\pm$.0058} & 0.6925{\scriptsize$\pm$.0098} \\
SFT+GRPO                & 0.9110{\scriptsize$\pm$.0092} & 0.8334{\scriptsize$\pm$.0135} & 0.9287{\scriptsize$\pm$.0065} & 0.8767{\scriptsize$\pm$.0050} & 0.8339{\scriptsize$\pm$.0110} & 0.7697{\scriptsize$\pm$.0141} \\
RLTR      & 0.8780{\scriptsize$\pm$.0125} & 0.7468{\scriptsize$\pm$.0219} & 0.7523{\scriptsize$\pm$.0097} & 0.7941{\scriptsize$\pm$.0147} & --     & 0.7454{\scriptsize$\pm$.0167} \\
ToolPO                  & 0.6854{\scriptsize$\pm$.0168} & 0.8180{\scriptsize$\pm$.0177} & 0.6642{\scriptsize$\pm$.0135} & 0.7510{\scriptsize$\pm$.0165} & 0.6789{\scriptsize$\pm$.0137} & 0.3199{\scriptsize$\pm$.0156} \\
\rowcolor{oursTint}\textbf{\textsc{Ours}}  & \textbf{0.9177}{\scriptsize$\pm$.0075} & 0.8320{\scriptsize$\pm$.0140} & \textbf{0.9303}{\scriptsize$\pm$.0064} & \textbf{0.8786}{\scriptsize$\pm$.0046} & 0.8239{\scriptsize$\pm$.0079} & \textbf{0.7902}{\scriptsize$\pm$.0105} \\
\bottomrule
\end{tabular}
\end{table}

\paragraph{Process and summary scores.}
Across scales, \textsc{\slcaname} has higher mean \emph{Process} and
strict \emph{Success} scores than matched GRPO, and higher mean
\emph{Parallel} scores. These metrics capture structural constraints beyond
tool-name matching.
On Qwen2.5-3B and Qwen3-8B-Base, standard GRPO attains
a higher \emph{Summary} score but a lower \emph{Success} rate than
\textsc{\slcaname}.
This pattern is compatible with Cross-Segment Credit Misattribution:
a plausible summary can coexist with erroneous tool behavior when a mixed
advantage is broadcast to all tokens.
Under the LLM-judge diagnostic, ToolPO shows the same separation on Qwen2.5-7B:
it reaches \emph{Summary} 0.6054 while its \emph{Process} score is 0.3408.
On Qwen2.5-7B, \textsc{\slcaname} has higher reported means for both
execution and \emph{Summary}; the two scores move in the same direction
in this comparison.

\subsection{Summary Advantage Ablation}
\label{app:summary_advantage_ablation}

We isolate the learned summary-reward branch while keeping the omission safeguard
unchanged. Let $\mathcal{O}$ denote rollouts that trigger the omission guard.
After computing the ordinary group-normalized summary advantages, we set
$\hat A_i^{\mathrm{sum}}=0$ for $i
otin\mathcal{O}$ and retain the computed
guard advantage for $i\in\mathcal{O}$. The summary tokens remain in the loss
mask and under the per-token KL constraint. We query the judge for every
rollout, including omission cases; its parameters remain frozen. For a guard
rollout, the policy reward is overridden by
$R_{\mathrm{sum}}=R_{\mathrm{pen}}=-0.5$.

\begin{table}[H]
\centering
\scriptsize
\caption{Summary advantage ablation on Qwen2.5-7B-Instruct. Process and Summary are unit-interval scores; benchmark columns are percentages. The no-call column is the end-of-training rate. Entries in the score columns are mean $\pm$ standard deviation over three matched runs.}
\label{tab:summary_advantage_ablation}
\setlength{\tabcolsep}{1.5pt}
\resizebox{\linewidth}{!}{%
\begin{tabular}{lcccccc}
\toprule
\rowcolor{ArxivTableHead}
\textbf{Condition} & \textbf{No-call (\%)} & \textbf{Process} & \textbf{Toucan (\%)} & \textbf{Summary} & \textbf{BFCL (\%)} & \textbf{$\tau^2$ (\%)} \\
\midrule
SFT & -- & 0.8094 $\pm$ 0.0082 & 72.14 $\pm$ 0.87 & 0.6346 $\pm$ 0.0054 & 67.89 $\pm$ 0.28 & 32.54 $\pm$ 1.92 \\
Unified GRPO & 0.6 & 0.8667 $\pm$ 0.0061 & 76.60 $\pm$ 1.27 & 0.8241 $\pm$ 0.0108 & 68.41 $\pm$ 0.11 & 31.87 $\pm$ 2.97 \\
\shortstack[l]{$\hat A^{\mathrm{sum}}=0$\\(except omission penalty episodes)} & 0.5 & 0.8748 $\pm$ 0.0061 & 78.72 $\pm$ 1.22 & 0.6512 $\pm$ 0.0208 & 69.44 $\pm$ 0.55 & 36.24 $\pm$ 1.58 \\
\shortstack[l]{$\hat A^{\mathrm{sum}}=0$\\(including omission penalty episodes)} & 14.2 & 0.7506 $\pm$ 0.0227 & 67.54 $\pm$ 1.94 & 0.5587 $\pm$ 0.0248 & 61.53 $\pm$ 1.74 & 29.44 $\pm$ 2.28 \\
\rowcolor{oursTint}\textbf{\slcaname} & 0.4 & \textbf{0.8766 $\pm$ 0.0045} & \textbf{79.13 $\pm$ 1.05} & \textbf{0.8450 $\pm$ 0.0087} & \textbf{69.77 $\pm$ 0.47} & \textbf{41.02 $\pm$ 1.01} \\
RLTR & -- & 0.7022 $\pm$ 0.0143 & 65.45 $\pm$ 1.68 & -- & 63.11 $\pm$ 0.62 & 33.72 $\pm$ 2.38 \\
\bottomrule
\end{tabular}}
\end{table}

The omission-exception condition remains close to SLCA on Process, Toucan,
and BFCL, while its $\tau^2$-Bench score is lower by 4.78 pp and its Summary
score is close to the SFT value. The full-zero condition is reported as a
guard diagnostic: removing both the summary advantage and the guard raises the
no-call rate to 14.2\% and lowers all listed scores. Its no-call rollouts
receive no advantage-weighted policy gradient, while the KL term remains active.
The two rows are not interpreted as a comparison of the summary branch alone.

\subsection{Training Dynamics against RLTR and ToolPO}
\label{app:baseline_dynamics}

Static benchmark numbers can obscure how proxy objectives and execution quality move during training. We analyze the full training traces across all three model scales and observe different behaviors for ToolPO's additive advantage design.

\paragraph{ToolPO training trace}

\begin{figure}[H]
    \centering
    \includegraphics[width=0.88\linewidth]{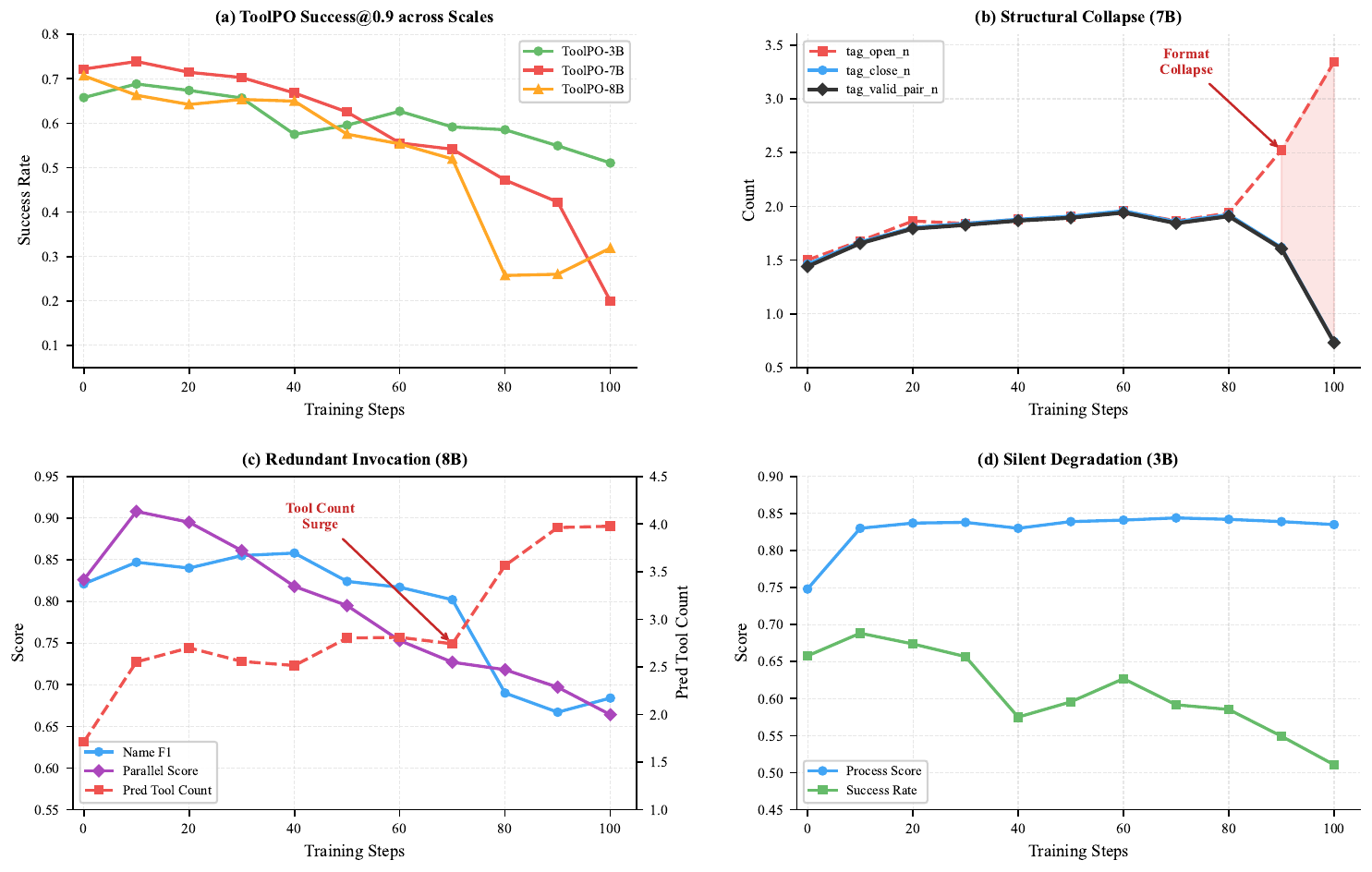}
    \vspace{-0.2cm}
    \caption{\textbf{ToolPO training behavior under the LLM-judge diagnostic (representative single-run traces).}
    \textbf{(a)} Success{@}0.9 across three scales: a decline on 3B ($0.65{\to}0.51$), a sharper decline on 7B ($0.72{\to}0.20$), and an overall decline with a brief mid-training rebound on 8B, ending near $0.32$.
    \textbf{(b)} 7B: \texttt{tag\_open\_n} diverges from \texttt{tag\_close\_n} after step~80, reducing the valid parsed-call count from 1.91 to 0.73.
    \textbf{(c)} 8B: \texttt{pred\_tool\_count} surges $1.71{\to}3.98$ (gold ${\approx}2.0$), driving Name~F1 and Parallel~Score down.
    \textbf{(d)} 3B: Process~Score plateaus at ${\sim}0.84$, below the $S_{\text{process}} \geq 0.9$ threshold, explaining why proxy metrics improve while strict Success declines.
    }
    \label{fig:toolpo_reward_hacking}
\end{figure}

Here \texttt{tag\_valid\_pair\_n} denotes the number of parsed, schema-valid
call blocks. It is not a count formed by pairing the opening and closing tag
totals.

The traces show different failure patterns across model scales (Figure~\ref{fig:toolpo_reward_hacking}a):

\begin{itemize}[nosep,leftmargin=*]
    \item \textbf{3B: Silent degradation (Figure~\ref{fig:toolpo_reward_hacking}d).} The proxy metrics increase ($R_{\text{succ}}$: $0.24{\to}0.50$; summary: $0.56{\to}0.62$), and \texttt{process\_score} rises from 0.75 to a plateau of ${\sim}0.84$. Strict Success{@}0.9 declines ($0.65{\to}0.51$). The observed gap is consistent with the threshold $S_{\text{process}} \geq 0.9$: the Process~Score remains near ${\sim}0.84$. Format integrity is preserved (\texttt{format\_passed} $\approx 0.98$, \texttt{tag\_valid\_pair\_n} $\approx 2.0$), so the decline is not accompanied by a structural-format change in this trace.
    \item \textbf{7B: Structural format decline (Figure~\ref{fig:toolpo_reward_hacking}b).} The trace shows more opening than closing tool-call tags (\texttt{tag\_open\_n}: $1.50{\to}3.34$; \texttt{tag\_close\_n}: $1.46{\to}0.74$), while the summary score remains non-zero. The valid parsed-call count drops from 1.91 to 0.73, and Success{@}0.9 drops from 0.72 to 0.20. The change is concentrated between steps 80--100, when the gap between \texttt{tag\_open\_n} and \texttt{tag\_close\_n} widens to 2.60.
    \item \textbf{8B: Redundant tool invocation (Figure~\ref{fig:toolpo_reward_hacking}c).} Qwen3-8B-Base keeps \texttt{tag\_open\_n} and \texttt{tag\_close\_n} close in this trace. The trace instead shows more predicted tool calls (\texttt{pred\_tool\_count}: $1.71{\to}3.98$, gold ${\approx}2.0$). Success{@}0.9 declines overall, with a brief mid-training rebound, and ends near ${\sim}0.32$ as redundant calls accumulate. At steps~70--80, \texttt{pred\_tool\_count} exceeds 3.5 and Name~F1 falls from $0.80$ to $0.69$.
\end{itemize}

These observations are specific to the LLM-judge diagnostic and do not imply that ToolPO always fails. They motivate the separate matched GRPO comparison, where the summary-to-tool support path is blocked by SLCA's routing.

\paragraph{RLTR: Coarse-Grained Reward and Training Stagnation}


As visualized in Figure~\ref{fig:baseline_dynamics_7b}, RLTR avoids direct summary-to-tool leakage via pipeline separation, but its planning reward remains sequence-level and coarse. In our runs, \texttt{rltr\_comp} oscillates between 0.49 and 0.64, while \emph{Process} and \emph{Parallel} improve much more slowly than under \textsc{\slcaname}. On Qwen3-8B-Base, $R_{\text{comp}}$ doubles (0.22$\to$0.43) while \texttt{error\_count} increases from 0.39 to 0.59, suggesting that the completeness reward does not penalize structurally invalid calls (cf.\ the frozen-summarizer limitation noted in Table~\ref{tab:toucan_full}).

\subsection{Visualizing Credit Misattribution}
\label{app:case_study_viz}

\Cref{fig:case_study_viz} presents a diagnostic case on a parallel
information-retrieval task (comparing the current stock prices of Apple
and Microsoft), illustrating how Cross-Segment Credit Misattribution
manifests as \emph{tool-use inefficiency masked by a perfect summary}.
The reference trajectory consists of a single parallel
\texttt{get\_stock\_price} invocation covering both tickers, and the
tool-level reward is decomposed as $\toolsub{R} = 0.10\,R_{\text{format}}
+ 0.25\,R_{\text{name}} + 0.15\,R_{\text{key}} + 0.20\,R_{\text{value}}
+ 0.30\,R_{\text{parallel}}$.

\begin{figure}[H]
    \centering
    \includegraphics[width=0.95\textwidth]{figs/case_study.pdf}
    \caption{
    \textbf{Diagnostic Case Study: Resolving Cross-Segment Credit
    Misattribution on a parallel tool-use task.}
    \textbf{(Top) Reference trajectory and rewards.}
    \textbf{(Middle) Standard GRPO.} The agent adopts a redundant,
    trial-and-error trajectory (first issuing an unnecessary
    \texttt{search\_company\_info} call, then retrieving the two stock
    prices through \emph{serial} \texttt{get\_stock\_price} invocations
    in separate turns), yet still produces a fluent, factually correct
    comparison ($\summarysub{R}{=}1.00$). The tool-level decomposition
    exposes where the trajectory fails: the \texttt{format} reward is
    unaffected ($1.00$), the \texttt{key} and \texttt{value} rewards
    remain high on the individual calls that do match the schema
    ($1.00$ each), the \texttt{name} reward is
    diluted by the redundant \texttt{search\_company\_info} call
    ($0.80$), and the \texttt{parallel} reward is
    $\mathbf{0.00}$ because the two required stock-price queries are
    issued as separate sequential turns (further preceded by a redundant
    exploration call), rather than batched into a single parallel
    invocation. Aggregated under the fixed weights above, this yields
    $\toolsub{R}{=}0.65$.
    Because standard GRPO merges $\toolsub{R}$ and $\summarysub{R}$
    into a single trajectory-level advantage, the high summary reward
    can offset the structural tool penalty in the unified advantage, allowing
    the inefficient pattern to be reinforced.
    \textbf{(Bottom) \slcaname (Ours).} The same model, trained under
    our advantage-isolation objective, produces a trajectory in which both tickers are
    known a priori and issues a \emph{single parallel tool call}
    covering AAPL and MSFT simultaneously. All five tool sub-rewards
    saturate at $1.00$, giving $\toolsub{R}{=}1.00$ while
    preserving $\summarysub{R}{=}1.00$. By routing $\toolsub{A}$
    and $\summarysub{A}$ through separate optimization pathways, segment
    routing keeps the summary signal from entering the tool-token update in
    this example. The tool-execution score increases by $+0.35$ while the
    summary score remains $1.00$. The trajectory is qualitatively similar to
    the representative single-run \texttt{pred\_tool\_count} increase observed
    under ToolPO on 8B (\Cref{fig:toolpo_reward_hacking}c); the two diagnostics
    use different protocols.
    }
    \label{fig:case_study_viz}
\end{figure}

\subsection{BFCL: Robustness in Multi-Turn Scenarios}
\label{app:bfcl_multiturn}

While the main paper (\Cref{fig:bfcl_scales}) focuses on atomic
schema generalization, \Cref{tab:bfcl_full} additionally reports
Multi-Turn accuracy, now including RLTR and ToolPO.
On Qwen3-8B-Base, standard GRPO has a lower Multi-Turn mean than SFT
(0.0822$\pm$0.0035 vs.\ 0.1544$\pm$0.0102), while
\textsc{\slcaname} reaches 0.1489$\pm$0.0025. The SLCA mean is close to
the SFT value in this three-run comparison; the table does not isolate the
source of the Multi-Turn difference.
On Qwen2.5-3B-Instruct, \slcaname{} also remains below SFT on Multi-Turn (0.0622$\pm$0.0025 vs.\ 0.0867$\pm$0.0100).
RLTR remains competitive on AST Non-Live, suggesting that
planner-only RL can learn basic tool selection under unseen
schemas, but it lags on Multi-Turn where end-to-end coordination
matters.
Under the reported ToolPO evaluation, the Multi-Turn means are
0.0961$\pm$0.0067 on 3B and 0.0083$\pm$0.0067 on 8B. This spread is
reported as context rather than as a claim about the original method across
model scales.

\begin{table}[H]
\centering
\small
\caption{
Comprehensive Generalization Results on BFCL V3 across Scales.
This table details BFCL generalization performance.
In addition to Overall Acc, AST Live, and AST Non-Live (reported in \Cref{fig:bfcl_scales}), we include Multi-Turn accuracy to assess state tracking under schema generalization.
Trained rows report mean$\pm$std over three runs; Original rows are point evaluations. ToolPO and RLTR retain their method-specific protocols.
}
\label{tab:bfcl_full}
\setlength{\tabcolsep}{5pt}
\begin{tabular}{lcccc}
\toprule
\rowcolor{ArxivTableHead}
\textbf{Method} & \textbf{Overall Acc} & \textbf{AST Live} & \textbf{AST Non-Live} & \textbf{Multi-Turn} \\
\midrule

\rowcolor{scaleThreeTint}\textit{\textbf{Qwen2.5-3B-Instruct}} &  &  &  &  \\
\midrule
Original            & 0.4225 & 0.5174 & 0.5139 & 0.0271 \\
SFT                 & 0.6163{\scriptsize$\pm$.0026} & 0.6862{\scriptsize$\pm$.0022} & 0.8104{\scriptsize$\pm$.0043} & 0.0867{\scriptsize$\pm$.0100} \\
SFT+GRPO            & 0.6321{\scriptsize$\pm$.0032} & 0.6936{\scriptsize$\pm$.0030} & 0.8583{\scriptsize$\pm$.0087} & 0.0600{\scriptsize$\pm$.0033} \\
RLTR                & 0.5838{\scriptsize$\pm$.0065} & 0.6136{\scriptsize$\pm$.0074} & 0.8275{\scriptsize$\pm$.0044} & 0.0494{\scriptsize$\pm$.0100} \\
ToolPO              & 0.6095{\scriptsize$\pm$.0086} & 0.6477{\scriptsize$\pm$.0108} & 0.8325{\scriptsize$\pm$.0139} & 0.0961{\scriptsize$\pm$.0067} \\
\rowcolor{oursTint}\textbf{\textsc{Ours}} & \textbf{0.6361}{\scriptsize$\pm$.0052} & \textbf{0.7027}{\scriptsize$\pm$.0070} & 0.8574{\scriptsize$\pm$.0083} & 0.0622{\scriptsize$\pm$.0025} \\

\midrule

\rowcolor{scaleSevenTint}\textit{\textbf{Qwen2.5-7B-Instruct}} &  &  &  &  \\
\midrule
Original            & 0.6490 & 0.7446 & 0.8130 & 0.1172 \\
SFT                 & 0.6789{\scriptsize$\pm$.0028} & 0.7595{\scriptsize$\pm$.0053} & 0.8635{\scriptsize$\pm$.0012} & 0.1435{\scriptsize$\pm$.0120} \\
SFT+GRPO            & 0.6841{\scriptsize$\pm$.0011} & 0.7783{\scriptsize$\pm$.0026} & 0.8548{\scriptsize$\pm$.0074} & 0.1443{\scriptsize$\pm$.0034} \\
RLTR                & 0.6311{\scriptsize$\pm$.0062} & 0.6840{\scriptsize$\pm$.0078} & 0.8591{\scriptsize$\pm$.0053} & 0.0750{\scriptsize$\pm$.0094} \\
ToolPO              & 0.2459{\scriptsize$\pm$.0087} & 0.2043{\scriptsize$\pm$.0103} & 0.4096{\scriptsize$\pm$.0118} & 0.0251{\scriptsize$\pm$.0056} \\
\rowcolor{oursTint}\textbf{\textsc{Ours}} & \textbf{0.6977}{\scriptsize$\pm$.0047} & \textbf{0.7820}{\scriptsize$\pm$.0068} & \textbf{0.8739}{\scriptsize$\pm$.0086} & \textbf{0.1550}{\scriptsize$\pm$.0024} \\

\midrule

\rowcolor{scaleEightTint}\textit{\textbf{Qwen3-8B-Base}} &  &  &  &  \\
\midrule
Original            & 0.2403 & 0.2435 & 0.3617 & 0.0000 \\
SFT                 & 0.6850{\scriptsize$\pm$.0035} & 0.7779{\scriptsize$\pm$.0015} & 0.8528{\scriptsize$\pm$.0031} & 0.1544{\scriptsize$\pm$.0102} \\
SFT+GRPO            & 0.6696{\scriptsize$\pm$.0032} & 0.7750{\scriptsize$\pm$.0027} & 0.8522{\scriptsize$\pm$.0071} & 0.0822{\scriptsize$\pm$.0035} \\
RLTR                & 0.6610{\scriptsize$\pm$.0065} & 0.7197{\scriptsize$\pm$.0078} & 0.8919{\scriptsize$\pm$.0053} & 0.0861{\scriptsize$\pm$.0102} \\
ToolPO              & 0.6241{\scriptsize$\pm$.0090} & 0.6827{\scriptsize$\pm$.0096} & 0.8765{\scriptsize$\pm$.0104} & 0.0083{\scriptsize$\pm$.0067} \\
\rowcolor{oursTint}\textbf{\textsc{Ours}} & \textbf{0.7031}{\scriptsize$\pm$.0050} & \textbf{0.7856}{\scriptsize$\pm$.0082} & \textbf{0.8954}{\scriptsize$\pm$.0083} & 0.1489{\scriptsize$\pm$.0025} \\

\bottomrule
\end{tabular}
\end{table}

\subsection{Additional Analysis on \texorpdfstring{$\tau^2$}{tau2}-Bench Results}
\label{app:tau2_details}

\paragraph{Baseline Configurations and Omissions}
The Qwen3-8B-Base \textit{Original} baseline has a 0.00 Pass$^1$ rate in every domain; it is shown explicitly in \Cref{tab:tau2_full}. The SFT baseline serves as the starting point for the 8B scale comparisons.

\begin{table}[H]
\centering
\small
\caption{
Comprehensive Collaboration Results on $\tau^2$-Bench across Scales.
We report Pass$^1$ rates for dual-control scenarios across three distinct domains: Airline (User Retention), Retail (Order Modification), and Telecom (Plan Management). Pass$^1$ is the proportion of successful tasks in the finite official task set; because each run evaluates a finite set, the reported values are discrete proportions and their standard deviations reflect variation of those proportions across runs. The Qwen3-8B-Base Original row is included and is zero in all four reported metrics. Trained rows are mean$\pm$std over three runs. ToolPO and RLTR retain their method-specific protocols.
}
\label{tab:tau2_full}
\setlength{\tabcolsep}{5pt}
\begin{tabular}{lcccc}
\toprule
\rowcolor{ArxivTableHead}
\textbf{Method} & \textbf{Overall Pass$^1$} & \textbf{Airline} & \textbf{Retail} & \textbf{Telecom} \\
\midrule

\rowcolor{scaleThreeTint}\textit{\textbf{Qwen2.5-3B-Instruct}} &  &  &  &  \\
\midrule
Original            & 0.2324 & 0.3810 & 0.1481 & 0.2692 \\
SFT                 & 0.2705{\scriptsize$\pm$.0236} & 0.1364{\scriptsize$\pm$.0227} & 0.3966{\scriptsize$\pm$.0259} & 0.1954{\scriptsize$\pm$.0217} \\
SFT+GRPO            & 0.3454{\scriptsize$\pm$.0257} & 0.2576{\scriptsize$\pm$.0131} & 0.4598{\scriptsize$\pm$.0217} & 0.2644{\scriptsize$\pm$.0348} \\
RLTR                & 0.2850{\scriptsize$\pm$.0291} & 0.2727{\scriptsize$\pm$.0227} & 0.3649{\scriptsize$\pm$.0303} & 0.2098{\scriptsize$\pm$.0303} \\
ToolPO              & 0.2355{\scriptsize$\pm$.0226} & 0.1288{\scriptsize$\pm$.0262} & 0.3678{\scriptsize$\pm$.0263} & 0.1437{\scriptsize$\pm$.0179} \\
\rowcolor{oursTint}\textbf{\textsc{Ours}} & \textbf{0.3563}{\scriptsize$\pm$.0091} & \textbf{0.2576}{\scriptsize$\pm$.0347} & 0.4569{\scriptsize$\pm$.0172} & \textbf{0.2931}{\scriptsize$\pm$.0172} \\

\midrule

\rowcolor{scaleSevenTint}\textit{\textbf{Qwen2.5-7B-Instruct}} &  &  &  &  \\
\midrule
Original            & 0.3023 & 0.3023 & 0.4054 & 0.2018 \\
SFT                 & 0.3254{\scriptsize$\pm$.0192} & 0.2381{\scriptsize$\pm$.0227} & 0.4167{\scriptsize$\pm$.0253} & 0.2647{\scriptsize$\pm$.0208} \\
SFT+GRPO            & 0.3187{\scriptsize$\pm$.0297} & 0.3375{\scriptsize$\pm$.0177} & 0.4221{\scriptsize$\pm$.0260} & 0.2124{\scriptsize$\pm$.0375} \\
RLTR                & 0.3372{\scriptsize$\pm$.0238} & 0.2195{\scriptsize$\pm$.0271} & 0.4001{\scriptsize$\pm$.0302} & 0.3182{\scriptsize$\pm$.0264} \\
ToolPO              & 0.3044{\scriptsize$\pm$.0215} & 0.2000{\scriptsize$\pm$.0243} & 0.4286{\scriptsize$\pm$.0278} & 0.2208{\scriptsize$\pm$.0231} \\
\rowcolor{oursTint}\textbf{\textsc{Ours}} & \textbf{0.4102}{\scriptsize$\pm$.0101} & 0.2827{\scriptsize$\pm$.0380} & \textbf{0.5323}{\scriptsize$\pm$.0203} & \textbf{0.3380}{\scriptsize$\pm$.0190} \\

\midrule

\rowcolor{scaleEightTint}\textit{\textbf{Qwen3-8B-Base}} &  &  &  &  \\
\midrule
Original            & 0.0000 & 0.0000 & 0.0000 & 0.0000 \\
SFT                 & 0.3104{\scriptsize$\pm$.0200} & 0.2273{\scriptsize$\pm$.0227} & 0.4540{\scriptsize$\pm$.0303} & 0.1983{\scriptsize$\pm$.0259} \\
SFT+GRPO            & 0.3466{\scriptsize$\pm$.0236} & 0.2652{\scriptsize$\pm$.0131} & 0.4425{\scriptsize$\pm$.0303} & 0.2816{\scriptsize$\pm$.0303} \\
RLTR                & 0.3490{\scriptsize$\pm$.0218} & 0.2273{\scriptsize$\pm$.0227} & 0.4282{\scriptsize$\pm$.0217} & 0.3161{\scriptsize$\pm$.0217} \\
ToolPO              & 0.4432{\scriptsize$\pm$.0186} & 0.2197{\scriptsize$\pm$.0262} & 0.5891{\scriptsize$\pm$.0303} & 0.3822{\scriptsize$\pm$.0217} \\
\rowcolor{oursTint}\textbf{\textsc{Ours}} & \textbf{0.4469}{\scriptsize$\pm$.0084} & \textbf{0.3561}{\scriptsize$\pm$.0473} & 0.5374{\scriptsize$\pm$.0217} & \textbf{0.3908}{\scriptsize$\pm$.0217} \\

\bottomrule
\end{tabular}
\end{table}

\paragraph{Domain-Specific Performance Analysis}
The \slcaname mean is higher than the matched GRPO mean on Overall Pass$^1$ for all three backbones. Domain-level orderings vary by scale: on Qwen2.5-3B-Instruct, \slcaname ties GRPO on Airline, trails it on Retail, and leads on Telecom; RLTR has the highest Airline mean. On Qwen2.5-7B-Instruct, the Airline mean is below GRPO while Retail and Telecom are higher. On Qwen3-8B-Base, \slcaname leads on Airline and Telecom, while ToolPO has the highest Retail mean.

The inclusion of RLTR and ToolPO reveals additional patterns.
RLTR trails SLCA on 7B (0.3372 vs.\ 0.4102) and 3B (0.2850 vs.\
0.3563), constrained by its frozen summarizer in tasks requiring
full conversational agent loops.
On Qwen3-8B-Base, the reported ToolPO and \slcaname Overall means
are 44.32$\pm$1.86\% and 44.69$\pm$0.84\%, a difference of
0.37\,pp. ToolPO has the higher Retail mean (58.91$\pm$3.03\% vs.\
53.74$\pm$2.17\%), while \slcaname is higher on Airline and Telecom.
These results show that the relative ordering depends on the domain and
protocol.

\subsection{HierR Weight Sensitivity Analysis}
\label{app:hierr_sensitivity}

To examine sensitivity to HierR weight choices, we evaluate three $S_{\text{process}}$ weight configurations on Qwen2.5-7B-Instruct (three-run mean$\pm$std), summarized in Table~\ref{tab:hierr_sensitivity}.

\begin{table}[H]
\centering
\small
\caption{HierR weight sensitivity on Qwen2.5-7B-Instruct. Entries are mean$\pm$std over three runs.}
\label{tab:hierr_sensitivity}
\setlength{\tabcolsep}{3pt}
\begin{tabular}{lccccccc}
\toprule
\rowcolor{ArxivTableHead}
& \multicolumn{5}{c}{\textbf{$S_{\text{process}}$ Weights}} & & \\
\cmidrule{2-6}
\textbf{Config} & $S_{\text{fmt}}$ & $S_{\text{name}}$ & $S_{\text{key}}$ & $S_{\text{val}}$ & $S_{\text{par}}$ & \textbf{BFCL Acc (\%)} & \textbf{$\tau^2$ Pass$^1$ (\%)} \\
\midrule
\rowcolor{oursTint}\textbf{Default (SLCA)} & 0.10 & 0.25 & 0.15 & 0.20 & 0.30 & \textbf{69.77$\pm$0.47} & \textbf{41.02$\pm$1.01} \\
Uniform (SLCA) & 0.20 & 0.20 & 0.20 & 0.20 & 0.20 & 69.18$\pm$0.55 & 38.62$\pm$1.86 \\
Value-heavy (SLCA) & 0.05 & 0.20 & 0.20 & 0.35 & 0.20 & 69.44$\pm$0.48 & 39.35$\pm$1.73 \\
\midrule
w/o HierR & \multicolumn{5}{c}{---} & 69.07$\pm$0.46 & 34.50$\pm$1.44 \\
GRPO (unified adv.) & \multicolumn{5}{c}{same as Default} & 68.41$\pm$0.11 & 31.87$\pm$2.97 \\
\bottomrule
\end{tabular}
\end{table}

The Uniform configuration sets $S_{\text{par}}=0.20$ and remains above the matched GRPO baseline on BFCL and $\tau^2$-Bench (+0.77\,pp and +6.75\,pp, respectively). The Default configuration has the highest values among the tested settings.

\subsection{Unified Reward Ratio Sensitivity}
\label{app:unified_ratio_sweep}

We vary the relative weight of the two raw rewards in the unified GRPO estimator,
\begin{equation}
\hat A_i^{\mathrm{uni}}(\alpha)
=
\operatorname{Norm}\!\left(
\alpha R_i^{\mathrm{tool}}+R_i^{\mathrm{sum}}
\right),
\qquad
\alpha=\frac{\lambda_{\mathrm{tool}}}{\lambda_{\mathrm{sum}}},
\end{equation}
while keeping the SFT initialization, data split, SGLS, HierR components, rollout budget, and evaluation protocol fixed. The displayed ratios include the existing $1{:}1$ condition and the three additional tested ratios.

\begin{table}[H]
\centering
\small
\caption{Unified reward ratio sensitivity on Qwen2.5-7B-Instruct. Entries are mean $\pm$ standard deviation over three matched runs. The ratio is applied before the unified group normalization; the scale-invariance identity holds up to the numerical floor described in App.~\ref{app:slca_details}.}
\label{tab:unified_ratio_sweep}
\setlength{\tabcolsep}{4pt}
\begin{tabular}{lccc}
\toprule
\rowcolor{ArxivTableHead}
\textbf{Reward ratio} & \textbf{Toucan Success@0.9 (\%)} & \textbf{BFCL Acc. (\%)} & \textbf{$\tau^2$-Bench Pass$^1$ (\%)} \\
\midrule
Unified $1{:}1$ & 76.60 $\pm$ 1.27 & 68.41 $\pm$ 0.11 & 31.87 $\pm$ 2.97 \\
Unified $2{:}1$ & 77.54 $\pm$ 1.24 & 68.83 $\pm$ 0.31 & 34.62 $\pm$ 2.71 \\
\textbf{Unified $3{:}1$} & \textbf{77.88 $\pm$ 1.30} & \textbf{68.95 $\pm$ 0.36} & \textbf{35.24 $\pm$ 2.58} \\
Unified $5{:}1$ & 77.02 $\pm$ 1.41 & 68.60 $\pm$ 0.44 & 32.90 $\pm$ 2.88 \\
\midrule
\rowcolor{oursTint}\textbf{\slcaname} & \textbf{79.13 $\pm$ 1.05} & \textbf{69.77 $\pm$ 0.47} & \textbf{41.02 $\pm$ 1.01} \\
\bottomrule
\end{tabular}
\end{table}

The unified reward ratio sweep recovers part of the SLCA gap within the tested ratios. The best tested unified ratio remains below SLCA on BFCL and $\tau^2$-Bench, while increasing the tool weight to $5{:}1$ reduces the $\tau^2$-Bench score relative to the best tested ratio. The pattern indicates a trade-off between tool and summary rewards in the unified estimator rather than a single uniformly favorable weight.

\subsection{Support Control on Qwen2.5-7B-Instruct}
\label{app:support_control_7b}

We evaluate four support conditions at the primary 7B scale. Let
$T=\lambda_{\mathrm{tool}}\operatorname{Norm}(R_{\mathrm{tool}})$ and
$S=\lambda_{\mathrm{sum}}\operatorname{Norm}(R_{\mathrm{sum}})$, with both weights set to one. The four cells differ only in which normalized components reach each token segment; additive sums are left unrescaled, as in the tested estimator. All cells use the same initialization, data, rollout budget, and evaluation protocol. Unified is shown as a joint-normalization reference and is not part of the factorial effects.

\begin{table}[H]
\centering
\scriptsize
\caption{Support control on Qwen2.5-7B-Instruct. The four support cells use separately normalized components. Unified is a joint-normalization reference. Entries are mean $\pm$ standard deviation over three matched runs.}
\label{tab:support_control_7b}
\setlength{\tabcolsep}{1.5pt}
\begin{tabular}{lccccc}
\toprule
\rowcolor{ArxivTableHead}
\textbf{Condition} & \textbf{Tool tokens} & \textbf{Summary tokens} & \textbf{Toucan (\%)} & \textbf{BFCL (\%)} & \textbf{$\tau^2$ (\%)} \\
\midrule
\rowcolor{oursTint}\textbf{Both closed (SLCA)} & $T$ & $S$ & 79.13 $\pm$ 1.05 & 69.77 $\pm$ 0.47 & 41.02 $\pm$ 1.01 \\
$S\mathbin{\to}T$ open & $T+S$ & $S$ & 77.21 $\pm$ 1.24 & 68.72 $\pm$ 0.29 & 33.95 $\pm$ 2.71 \\
$T\mathbin{\to}S$ open & $T$ & $T+S$ & 79.02 $\pm$ 1.12 & 69.84 $\pm$ 0.44 & 41.36 $\pm$ 1.24 \\
Both open & $T+S$ & $T+S$ & 78.30 $\pm$ 1.09 & 69.30 $\pm$ 0.41 & 37.42 $\pm$ 2.10 \\
\midrule
Unified (joint normalization) & joint normalized reward & joint normalized reward & 76.60 $\pm$ 1.27 & 68.41 $\pm$ 0.11 & 31.87 $\pm$ 2.97 \\
\bottomrule
\end{tabular}
\end{table}

The factorial effects are computed from the four support cells rather than from a diagonal contrast:
\begin{table}[H]
\centering
\small
\caption{Factor effects from the 7B support control. Values are computed from the cell means in Table~\ref{tab:support_control_7b} and are reported in percentage points. Main effects average the two simple effects over the other factor; interaction entries use the unhalved difference-in-differences contrast.}
\label{tab:support_factor_effects}
\setlength{\tabcolsep}{5pt}
\begin{tabular}{lccc}
\toprule
\rowcolor{ArxivTableHead}
\textbf{Effect} & \textbf{Toucan} & \textbf{BFCL} & \textbf{$\tau^2$-Bench} \\
\midrule
Close $S\mathbin{\to}T$ & 1.32 & 0.80 & 5.51 \\
Open $T\mathbin{\to}S$ & 0.49 & 0.33 & 1.91 \\
Interaction & 1.20 & 0.51 & 3.13 \\
\bottomrule
\end{tabular}
\end{table}

Closing the $S\mathbin{\to}T$ path gives the larger estimated effect on all three benchmarks in this control. Opening the $T\mathbin{\to}S$ path has a smaller effect, and the difference between that condition and SLCA is $-0.11/+0.07/+0.34$ pp on Toucan, BFCL, and $\tau^2$-Bench. These results are support control effects under the tested additive estimator; they are not presented as a routing-only causal estimate.

\subsection{Full Ablation Studies Across Scales}
\label{app:full_ablation}

\paragraph{Component contributions across scales.}
Table~\ref{tab:full_ablation} reports ablation results across the three backbones.
The w/o SLCA rows have lower reported Toucan Success means by 2.05 to 2.53\,pp across the three backbones. The matched BFCL and $\tau^2$-Bench means are also lower in the reported blocks; the 3B BFCL gap is 0.40\,pp.
The w/o HierR rows show the largest Toucan Success drops in each block and lower means on most other Toucan metrics. Removing deterministic schema validation and schema-conditioned tool-response prompting has a smaller effect on in-domain Success; it also changes BFCL and $\tau^2$-Bench, with the largest BFCL drop at 8B.

\begin{table}[ht]
\centering
\small
\caption{Comprehensive Ablation Studies Across Model Families. Comparison of Full \slcaname against variants removing key components. Metrics include Toucan-Test (Name F1, ArgMatch, Process, Success), BFCL (Accuracy), and $\tau^2$-Bench (Pass$^1$). Process is computed as a post-hoc evaluation metric for all settings, including w/o HierR. Trained rows in all three blocks report mean$\pm$std over three runs; Original evaluations are point values where shown. The w/o SLCA condition is the SFT+GRPO estimator.}
\label{tab:full_ablation}
\setlength{\tabcolsep}{3pt}
\begin{tabular}{lcccccc}
\toprule
\rowcolor{ArxivTableHead}
\multirow{2}{*}{\textbf{Method}} & \multicolumn{4}{c}{\textbf{Toucan-Test}} & \textbf{BFCL} & \textbf{$\tau^2$-Bench} \\
\cmidrule{2-7}
 & \textbf{Name F1} & \textbf{ArgMatch} & \textbf{Process} & \textbf{Success} & \textbf{Acc} & \textbf{Pass$^1$} \\
\midrule
\rowcolor{scaleThreeTint}\textit{\textbf{Qwen2.5-3B-Instruct}} &  &  &  &  &  &  \\
\midrule
\rowcolor{oursTint}\textbf{\textsc{\slcaname}} & \textbf{0.9006}{\scriptsize$\pm$.0087} & \textbf{0.8092}{\scriptsize$\pm$.0112} & \textbf{0.8625}{\scriptsize$\pm$.0049} & \textbf{0.7647}{\scriptsize$\pm$.0093} & \textbf{0.6361}{\scriptsize$\pm$.0052} & \textbf{0.3563}{\scriptsize$\pm$.0091} \\
w/o SLCA & 0.8910{\scriptsize$\pm$.0087} & 0.8072{\scriptsize$\pm$.0154} & 0.8577{\scriptsize$\pm$.0057} & 0.7412{\scriptsize$\pm$.0115} & 0.6321{\scriptsize$\pm$.0032} & 0.3454{\scriptsize$\pm$.0257} \\
w/o SGLS & 0.8927{\scriptsize$\pm$.0075} & 0.7982{\scriptsize$\pm$.0109} & 0.8559{\scriptsize$\pm$.0050} & 0.7584{\scriptsize$\pm$.0101} & 0.6304{\scriptsize$\pm$.0048} & 0.2995{\scriptsize$\pm$.0182} \\
w/o HierR & 0.8399{\scriptsize$\pm$.0084} & 0.7850{\scriptsize$\pm$.0161} & 0.8264{\scriptsize$\pm$.0075} & 0.6552{\scriptsize$\pm$.0107} & 0.5993{\scriptsize$\pm$.0046} & 0.3152{\scriptsize$\pm$.0145} \\
\midrule
\rowcolor{scaleSevenTint}\textit{\textbf{Qwen2.5-7B-Instruct}} (three-run mean$\pm$std) &  &  &  &  &  &  \\
\midrule
\rowcolor{oursTint}\textbf{\textsc{\slcaname}} & \textbf{0.9164}{\scriptsize$\pm$.0075} & \textbf{0.8353}{\scriptsize$\pm$.0132} & 0.8766{\scriptsize$\pm$.0045} & \textbf{0.7913}{\scriptsize$\pm$.0105} & \textbf{0.6977}{\scriptsize$\pm$.0047} & \textbf{0.4102}{\scriptsize$\pm$.0101} \\
w/o SLCA & 0.8971{\scriptsize$\pm$.0093} & 0.8350{\scriptsize$\pm$.0148} & 0.8667{\scriptsize$\pm$.0061} & 0.7660{\scriptsize$\pm$.0127} & 0.6841{\scriptsize$\pm$.0011} & 0.3187{\scriptsize$\pm$.0297} \\
w/o SGLS & 0.9163{\scriptsize$\pm$.0081} & 0.8298{\scriptsize$\pm$.0126} & 0.8787{\scriptsize$\pm$.0051} & 0.7802{\scriptsize$\pm$.0112} & 0.6875{\scriptsize$\pm$.0048} & 0.3619{\scriptsize$\pm$.0195} \\
w/o HierR & 0.9167{\scriptsize$\pm$.0089} & 0.8284{\scriptsize$\pm$.0138} & 0.8521{\scriptsize$\pm$.0083} & 0.7344{\scriptsize$\pm$.0121} & 0.6907{\scriptsize$\pm$.0046} & 0.3450{\scriptsize$\pm$.0144} \\
\midrule
\rowcolor{scaleEightTint}\textit{\textbf{Qwen3-8B-Base}} &  &  &  &  &  &  \\
\midrule
\rowcolor{oursTint}\textbf{\textsc{\slcaname}} & \textbf{0.9177}{\scriptsize$\pm$.0075} & 0.8320{\scriptsize$\pm$.0140} & \textbf{0.8786}{\scriptsize$\pm$.0046}  & \textbf{0.7902}{\scriptsize$\pm$.0105} & \textbf{0.7031}{\scriptsize$\pm$.0050} & \textbf{0.4469}{\scriptsize$\pm$.0084} \\
w/o SLCA & 0.9110{\scriptsize$\pm$.0092} & 0.8334{\scriptsize$\pm$.0135} & 0.8767{\scriptsize$\pm$.0050}   & 0.7697{\scriptsize$\pm$.0141} & 0.6696{\scriptsize$\pm$.0032} & 0.3466{\scriptsize$\pm$.0236} \\
w/o SGLS & 0.9123{\scriptsize$\pm$.0084} & 0.8276{\scriptsize$\pm$.0110} & 0.8751{\scriptsize$\pm$.0058} & 0.7832{\scriptsize$\pm$.0127} & 0.6704{\scriptsize$\pm$.0047} & 0.3853{\scriptsize$\pm$.0146} \\
w/o HierR & 0.8985{\scriptsize$\pm$.0105} & 0.8215{\scriptsize$\pm$.0121} & 0.8648{\scriptsize$\pm$.0075} & 0.7496{\scriptsize$\pm$.0139} & 0.6603{\scriptsize$\pm$.0045} & 0.3647{\scriptsize$\pm$.0146} \\
\bottomrule
\end{tabular}
\end{table}

\subsection{Scaling to Large Tool Spaces Across Families}
\label{app:full_toolspace}

Table~\ref{tab:full_toolspace} extends the scalability analysis across all three backbones, including an extreme stress test at $K=150$. The table compares the two sampling strategies at each candidate-set size:
\begin{itemize}
    \item \textbf{Semantic interference:} Across all models, performance drops under Hard-Negative sampling compared to Random sampling (e.g., Qwen3-8B drops from $\sim$60\% to $\sim$36\% at $K=100$). The stress test makes semantically similar tools harder to distinguish than random candidates at the same set size.
    \item \textbf{Effect of context limits:} At $K=150$ for Qwen2.5-7B, the Random-sampling rows are near-tied, while the Hard-sampling gap is +2.50 pp. These results show how candidate-set construction and context length interact with the estimator.
    \item \textbf{Across tested families:} \slcaname is higher than GRPO in the reported Hard-sampling rows (e.g., +2.25 pp for 3B at $K=150$).
\end{itemize}

\begin{table}[H]
\centering
\small
\caption{
Comprehensive Scalability Results Across Model Families.
We report \textbf{Success$@$0.9} as the candidate set size $K$ increases.
Comparison is made between Hard-Negative (Semantic) and Random sampling to isolate the impact of semantic interference. The \textit{Original} column reports the multi-run mean at $K=0$, matching the main results; the $K>0$ columns retain the stress-test evaluations.
For $K\ge 100$, especially $K=150$, these stress-test values are computed on context-valid survivors after skip filtering.
}
\label{tab:full_toolspace}
\setlength{\tabcolsep}{4pt}
\begin{tabular}{lccccc}
\toprule
\rowcolor{ArxivTableHead}
& \multicolumn{5}{c}{\textbf{Candidate Tool Set Size ($K$)}} \\
\cmidrule{2-6}
\textbf{Method} & \textbf{Original} & \textbf{20} & \textbf{50} & \textbf{100} & \textbf{150} \\
\midrule

\multicolumn{6}{l}{\textbf{Hard-Negative Sampling} \hfill \textit{\textbf{Qwen2.5-3B-Instruct}}} \\
\midrule
SFT+GRPO & 0.7412 & 0.3400 & 0.2921 & 0.2662 & 0.2285 \\
\rowcolor{oursTint}\textbf{\textsc{Ours}} & \textbf{0.7647} & \textbf{0.3584} & \textbf{0.3209} & \textbf{0.2865} & \textbf{0.2510} \\
\midrule
\addlinespace[2pt]
\multicolumn{6}{l}{\textbf{Random Sampling} \hfill \textit{\textbf{Qwen2.5-3B-Instruct}}} \\
\midrule
SFT+GRPO & 0.7412 & 0.6399 & 0.5701 & 0.4950 & 0.4360 \\
\rowcolor{oursTint}\textbf{\textsc{Ours}} & \textbf{0.7647} & \textbf{0.6637} & \textbf{0.5854} & \textbf{0.5050} & \textbf{0.4413} \\

\midrule

\multicolumn{6}{l}{\textbf{Hard-Negative Sampling} \hfill \textit{\textbf{Qwen2.5-7B-Instruct}}} \\
\midrule
SFT+GRPO & 0.7660 & 0.3650 & 0.3325 & 0.3111 & 0.2570 \\
\rowcolor{oursTint}\textbf{\textsc{Ours}} & \textbf{0.7913} & \textbf{0.3862} & \textbf{0.3452} & \textbf{0.3304} & \textbf{0.2820} \\
\midrule
\addlinespace[2pt]
\multicolumn{6}{l}{\textbf{Random Sampling} \hfill \textit{\textbf{Qwen2.5-7B-Instruct}}} \\
\midrule
SFT+GRPO & 0.7660 & 0.6749 & 0.6060 & 0.5477 & 0.5155 \\
\rowcolor{oursTint}\textbf{\textsc{Ours}} & \textbf{0.7913} & \textbf{0.7030} & \textbf{0.6377} & \textbf{0.5776} & 0.5129 \\
\midrule

\multicolumn{6}{l}{\textbf{Hard-Negative Sampling} \hfill \textit{\textbf{Qwen3-8B-Base}}} \\
\midrule
SFT+GRPO & 0.7697 & 0.3976 & 0.3615 & 0.3508 & 0.3080 \\
\rowcolor{oursTint}\textbf{\textsc{Ours}} & \textbf{0.7902} & \textbf{0.4143} & \textbf{0.3662} & \textbf{0.3601} & \textbf{0.3135} \\
\midrule
\addlinespace[2pt]
\multicolumn{6}{l}{\textbf{Random Sampling} \hfill \textit{\textbf{Qwen3-8B-Base}}} \\
\midrule
SFT+GRPO & 0.7697 & 0.7044 & 0.6468 & 0.5775 & 0.5729 \\
\rowcolor{oursTint}\textbf{\textsc{Ours}} & \textbf{0.7902} & \textbf{0.7312} & \textbf{0.6753} & \textbf{0.6029} & \textbf{0.5844} \\

\bottomrule
\end{tabular}
\end{table}

\subsection{Execution-Based Reward Experiment}
\label{app:exec_reward}

To test whether SLCA's gains extend beyond gold-trajectory matching rewards, we replace $\toolsub{R} = S_{\text{process}}$ with execution-based $R_{\text{succ}}$. $\summarysub{R} = S_{\summaryword}$ remains unchanged.
Both methods use Qwen2.5-7B-Instruct with identical data, SGLS, and compute.

For this test, the binary $R_{\text{succ}}$ judge labels each rollout as Solved or Unsolved without per-example gold-call matching. Table~\ref{tab:exec_reward_main} reports the matched comparison.

\begin{table}[H]
\centering
\small
\caption{Execution-based $R_{\text{succ}}$ comparison on Qwen2.5-7B-Instruct. Entries are mean $\pm$ standard deviation over three runs; data, initialization, SGLS, and compute are matched.}
\label{tab:exec_reward_main}
\setlength{\tabcolsep}{4pt}
\begin{tabular}{lccc}
\toprule
\rowcolor{ArxivTableHead}
\textbf{Method} & \textbf{Toucan Succ. (\%)} & \textbf{BFCL Acc. (\%)} & \textbf{$\tau^2$-Bench Pass$^1$ (\%)} \\
\midrule
SFT+GRPO (w/o SLCA) & 73.51 $\pm$ 1.45 & 66.28 $\pm$ 0.32 & 25.54 $\pm$ 2.13 \\
\rowcolor{oursTint}\textbf{\slcaname} & \textbf{76.43 $\pm$ 1.21} & \textbf{67.61 $\pm$ 0.51} & \textbf{30.18 $\pm$ 1.76} \\
$\Delta$ & \textbf{+2.92} & \textbf{+1.33} & \textbf{+4.64} \\
\bottomrule
\end{tabular}
\end{table}

The SLCA advantage persists under the execution-based reward. Absolute scores are lower than under dense HierR rewards because the execution signal is sparser.

\paragraph{Training Dynamics under $R_{\text{succ}}$}

\begin{figure}[H]
    \centering
    \includegraphics[width=0.85\linewidth]{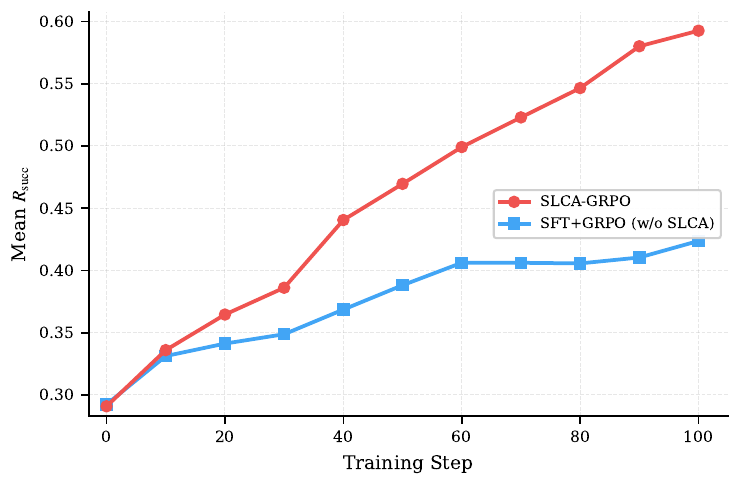}
    \caption{\textbf{Training dynamics under execution-based $R_{\text{succ}}$ (7B).}
    The curves are single-run checkpoint evaluations on the same fixed set of 3,991 valid Toucan validation examples. The plotted $R_{\text{succ}}$ is distinct from Toucan Success@0.9. At step~100, \slcaname reaches $R_{\text{succ}}{=}0.593$, compared with $0.424$ for SFT+GRPO (w/o SLCA).}
    \label{fig:exec_reward_dynamics}
\end{figure}

The $R_{\text{succ}}$ learning curve (Figure~\ref{fig:exec_reward_dynamics}) reveals a clear divergence: at step~0, both methods start at $R_{\text{succ}} \approx 0.29$; by step~100, \slcaname reaches 0.592583 (2365/3991), whereas the SFT+GRPO (w/o SLCA) baseline reaches 0.423703 (1691/3991), a +16.9\,pp gap. The logged validation series is \path{val-aux/toucan_eval_v4/reward/tool_call/r_succ/mean@1}; each checkpoint reports the exact empirical proportion on the same fixed set of 3,991 examples from one run per method, rather than an across-run mean. This execution-based metric is separate from Toucan's strict Success@0.9 measure.

\section{Prompts}
\label{app:prompt_details}
\subsection{Summary Judge Prompts}
\label{app:summary_prompt}
\begin{lstlisting}[style=promptstyle, label={lst:eval_prompt}]
## Task
Evaluate the quality of an AI assistant's final response based on the tool results it received.

## Assessment Criteria
### Response Quality (1-5 scale)
- 1 (very poor): Response is completely wrong, irrelevant, or ignores tool results
- 2 (poor): Seriously misinterprets tool results or misses critical information
- 3 (acceptable): Basically correct but has minor errors or incomplete coverage
- 4 (good): Correctly uses tool results, response is clear and complete
- 5 (excellent): Perfect interpretation of results, professional and helpful response

## User Question
```
{question_content}
```

## Tool Results
```
{tool_responses}
```

## Model's Final Response
```
{final_response}
```

## Evaluation Points
1. **Result Interpretation**: Does the model correctly understand and explain the data returned by tools?
2. **Information Completeness**: Does the response address all parts of the user's question based on the tool results?
3. **Expression Quality**: Is the response clear, well-organized, and helpful to the user?

## Output Format
<response>
  <response_quality>
    <reasoning>Brief analysis of the response quality based on the three evaluation points above.</reasoning>
    <rating><!-- one of: very poor, poor, acceptable, good, excellent --></rating>
  </response_quality>
</response>
\end{lstlisting}

\subsection{SGLS Simulator Prompts}
\label{app:sgls_prompt}
\begin{lstlisting}[style=promptstyle, label={lst:sgls_prompt}]
# CRITICAL ROLE: You are an API SERVER, NOT an AI assistant or agent.

You are role-playing as a **backend API server** that executes function calls and returns results.
You are NOT making tool calls - you are RESPONDING to them as if you were the actual API endpoint.

## YOUR ONLY OUTPUT FORMAT:
<tool_response>{"key": "value", ...}</tool_response>

## ABSOLUTE PROHIBITIONS:
- NEVER output <tool_call> tags (you are the SERVER, not the client)
- NEVER output markdown code blocks
- NEVER output explanations or conversational text
- ONLY output the <tool_response> tag with JSON inside

## Examples of CORRECT server responses:

Input: calculator(expression="124 * 5")
<tool_response>{"result": 620}</tool_response>

Input: get_weather(city="Seattle")
<tool_response>{"temp": "12$^\circ$C", "condition": "Rainy"}</tool_response>

Input: google_search(query="capital of Australia")
<tool_response>{"snippets": ["Canberra is the capital city of Australia."]}</tool_response>

---
## NOW EXECUTE THIS API CALL:

Tool Definition:
%s

Incoming Request:
%s

Return the execution result in <tool_response>...</tool_response> format:
\end{lstlisting}


\end{document}